\newcommand{\alglinelabel}{%
  \addtocounter{ALC@line}{-1}% Reduce line counter by 1
  \refstepcounter{ALC@line}% Increment line counter with reference capability
  \label% Regular \label
}
\icmltitlerunning{Sets Clustering}
\let\amsmath@bigm\bigm
\renewcommand{\bigm}[1]{%
  \ifcsname fenced@\string#1\endcsname
    \expandafter\@firstoftwo
  \else
    \expandafter\@secondoftwo
  \fi
  {\expandafter\amsmath@bigm\csname fenced@\string#1\endcsname}%
  {\amsmath@bigm#1}%
}
\begin{document}

\twocolumn[
\icmltitle{Sets Clustering}

% It is OKAY to include author information, even for blind
% submissions: the style file will automatically remove it for you
% unless you've provided the [accepted] option to the icml2020
% package.

% List of affiliations: The first argument should be a (short)
% identifier you will use later to specify author affiliations
% Academic affiliations should list Department, University, City, Region, Country
% Industry affiliations should list Company, City, Region, Country

% You can specify symbols, otherwise they are numbered in order.
% Ideally, you should not use this facility. Affiliations will be numbered
% in order of appearance and this is the preferred way.
\icmlsetsymbol{equal}{*}

\begin{icmlauthorlist}
\icmlauthor{Ibrahim Jubran}{equal,to}
\icmlauthor{Murad Tukan}{equal,to}
\icmlauthor{Alaa Maalouf}{equal,to}
\icmlauthor{Dan Feldman}{to}
\end{icmlauthorlist}

\icmlaffiliation{to}{Robotics \& Big Data Lab, Department of Computer Science, University of Haifa, Israel}

\icmlcorrespondingauthor{Ibrahim Jubran}{ibrahim.jub@gmail.com}

% You may provide any keywords that you
% find helpful for describing your paper; these are used to populate
% the "keywords" metadata in the PDF but will not be shown in the document
\icmlkeywords{Machine Learning, ICML}

\vskip 0.3in
]

% this must go after the closing bracket ] following \twocolumn[ ...

% This command actually creates the footnote in the first column
% listing the affiliations and the copyright notice.
% The command takes one argument, which is text to display at the start of the footnote.
% The \icmlEqualContribution command is standard text for equal contribution.
% Remove it (just {}) if you do not need this facility.

%\printAffiliationsAndNotice{}  % leave blank if no need to mention equal contribution
\printAffiliationsAndNotice{\icmlEqualContribution} % otherwise use the standard text.

\begin{abstract}
The input to the \emph{sets-$k$-means} problem is an integer $k\geq 1$ and a set $\mathcal{P}=\br{P_1,\cdots,P_n}$ of sets in $\mathbb{R}^d$. The goal is to compute a set $C$ of $k$ centers (points) in $\mathbb{R}^d$ that minimizes the sum $\sum_{P\in \mathcal{P}} \min_{p\in P, c\in C}\left\| p-c \right\|^2$ of squared distances to these sets.
An  \emph{$\eps$-core-set} for this problem is a weighted subset of $\mathcal{P}$ that approximates this sum up to $1\pm\varepsilon$ factor, for \emph{every} set $C$ of $k$ centers in $\mathbb{R}^d$.
We prove that such a core-set of $O(\log^2{n})$ sets always exists, and can be computed in $O(n\log{n})$ time, for every input $\mathcal{P}$ and every fixed $d,k\geq 1$ and $\varepsilon \in (0,1)$. The result easily generalized for any metric space, distances to the power of $z>0$, and M-estimators that handle outliers. Applying an inefficient but optimal algorithm on this coreset allows us to obtain the first PTAS ($1+\eps$ approximation) for the sets-$k$-means problem that takes time near linear in $n$.
This is the first result even for sets-mean on the plane ($k=1$, $d=2$).
Open source code and experimental results for document classification and facility locations are also provided.

\end{abstract}

\section{Introduction} \label{sec:intro}
In machine learning it is common to represent the input as a set of $n$ points (database records) $P=\br{p_1,\cdots,p_n}$ in the Euclidean $d$-dimensional space $\REAL^d$. That is, an $n\times d$ real matrix whose rows correspond to the input points. Every point corresponds to e.g. the GPS address of a person~\cite{liao2006location,nguyen2011adaptive}, a pixel/feature in an image~\cite{tuytelaars2008local}, ``bag of words" of a document~\cite{mladenic1999text}, or a sensor's sample~\cite{dunia1996identification}. Arguably, the most common statistics of such a set is its mean (center of mass) which is the center $c\in\REAL^d$ that minimizes its sum of squared distances
$\sum_{p\in P}\Dt(p,c)=\sum_{p\in P}\norm{p-c}^2$
to the input points in $P$. Here, $\Dt(p,c):=\norm{p-c}^2$ is the squared distance between a point $p\in P$ to the center $c\in\REAL^d$.
More generally, in unsupervised learning, for a given integer (number of clusters) $k\geq1$, the \emph{$k$-means} of the set $P$ is a set $C=\br{c_1,\cdots,c_k}$ of $k$ centers (points in $\REAL^d$) that minimizes the sum of squared distances
\[
\sum_{p\in P} \Dt(p,C)=\sum_{p\in P} \min_{c\in C}\norm{p-c}^2,
\]
where $\Dt(p,C):=\min_{c\in C}\Dt(p,c)$ denotes the squared distance from each point $p\in P$ to its nearest center in $C$.
The $k$-means clustering is probably the most common clustering objective function, both in academy and industry as claimed in~\cite{hartigan1975clustering, arthur2006k, berkhin2002survey, wu2008top}.

\begin{figure}[t!]
\centering
    \subfigure[$k=1$]{
		\centering
		\includegraphics[width = 0.21\textwidth]{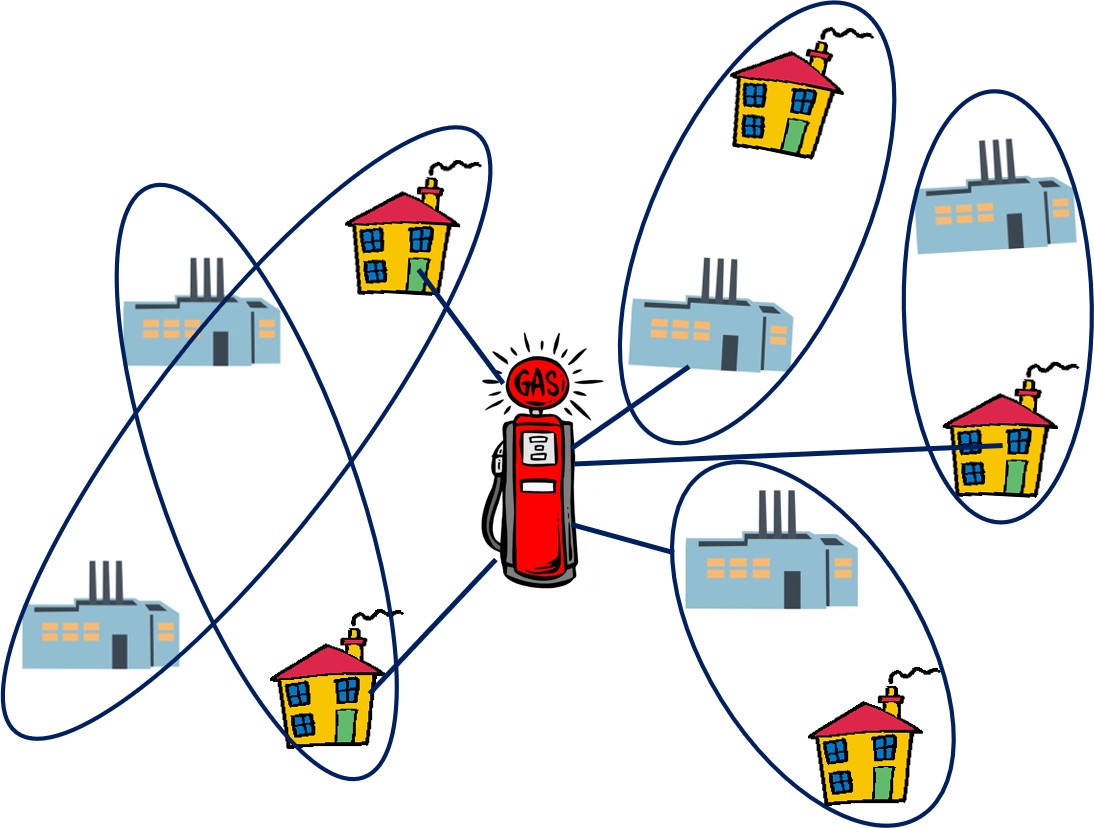}
        \label{fig:gas1}}
    \rulesep
    \subfigure[$k=2$]{
		\centering
		\includegraphics[width = 0.21\textwidth]{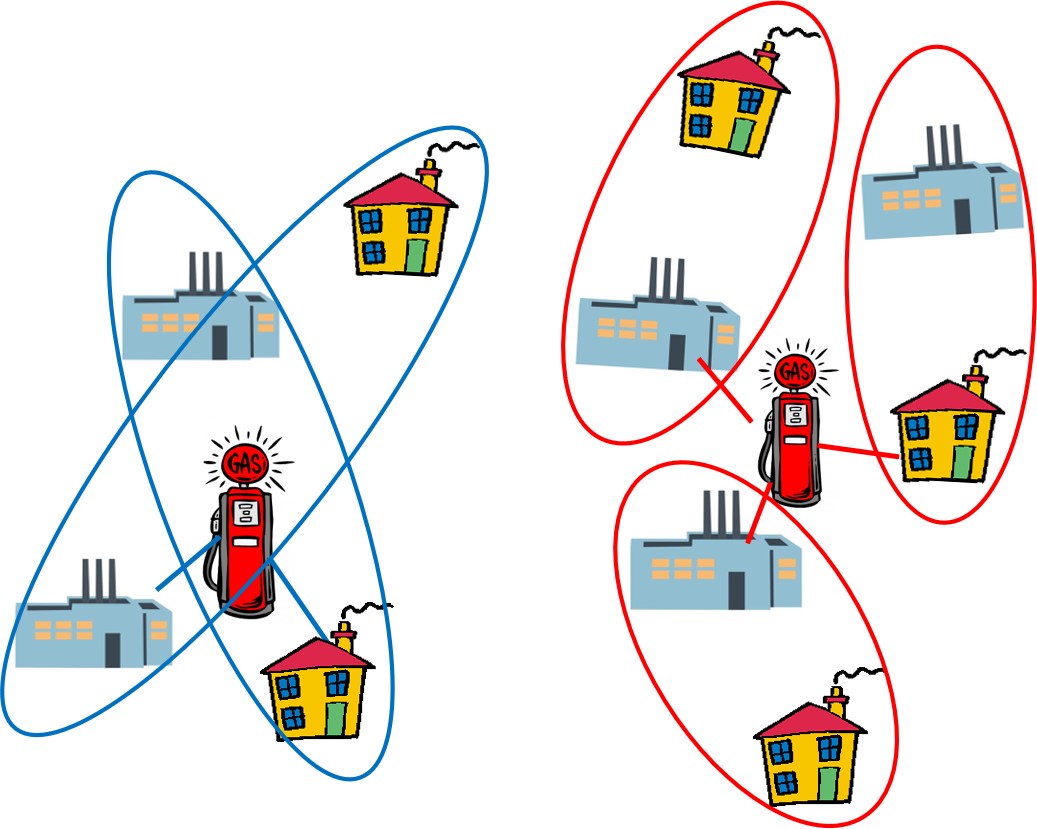}
        \label{fig:gas2}}
    \caption{\textbf{sets-$k$-means for pairs on the plane.} The input is a set of $n=5$ pairs of points that correspond to home/work addresses. (left) The distance from a gas station (in red) to a person is the smaller between its distance to its home and work address (line segments). (right) For $k=2$ gas stations, each person will choose its closest gas station; see real word database at section~\ref{sec:ER}.}
    \label{fig:facilityLoc}
\end{figure}

However, in the real-world, every database's record actually links to another database table, a GPS location may correspond to multiple GPS locations (e.g. home/work), every image consists of a set of pixels/features, every document contains a set of paragraphs, and a sensor's sample may actually be a distribution over some possible values~\cite{li2010object,li2008one,dunia1996identification,xiao2007using}. This motivates the following title and subject of this paper.

\paragraph{Sets Clustering. }Along this paper, the input is not a set of points, but rather a set $\set{P} = \br{P_1,\cdots,P_n}$ of sets in $\REAL^d$ (or any other metric space; see Section~\ref{sec:extension}), each of size $m$, denoted as $m$-sets. A natural generalization of the mean of a set $P$ is what we defined as the \emph{sets-mean} of our set $\set{P}$ of sets. The sets-mean is the point $c\in\REAL^d$ that minimizes its sum of squared distances
\begin{equation}\label{1mean}
\sum_{P\in \set{P}}\Dt(P,c)
=\sum_{P\in \set{P}}\min_{p\in P} \norm{p-c}^2,
\end{equation}
to the nearest point in each set. Here, $\Dt(P,c):=\min_{p\in P}\dist(p,c)$.

More generally, the \emph{sets-$k$-means} $C$ of $\set{P}$ is a set of $k$ points in $\REAL^d$ that minimizes its sum of squared distances
\begin{equation}\label{kmeaneq123}
\displaystyle\sum_{P \in \set{P}} \Dt(P,C)
=\sum_{P\in \set{P}}\min_{p\in P, c\in C} \norm{p-c}^2,
\end{equation}
to the nearest point in each set. Here, $\Dt(P,C):=\min_{p\in P,c\in C}\dist(p,c)$ is the closest distance between a pair in $P\times C$.

\textbf{Example. }Suppose that we want to place a gas station that will serve $n$ people whose home addresses are represented by $n$ GPS points (on the plane). The mean is a natural candidate since it minimizes the sum of squared distances from the gas station to the people; see~\cite{jubran2019introduction}. Now, suppose that the $i$th person for every $i\in\br{1,\cdots,n}=[n]$ is represented by a pair $P_i=\br{h,w}$ of points on the plane: home address $h$ and work address $w$; see Fig.~\ref{fig:facilityLoc}. It would be equally as convenient for a resident if the gas station was built next to his work address rather than his home address. Hence, the sets-mean of the addresses $\set{P}=\br{P_1,\cdots,P_n}$, as defined in the previous page, minimizes the sum of squared distances from the gas station to the nearest address of each person (either home or work). The sets-$k$-means is the set $C\subseteq\REAL^d$ of $k$ gas stations that minimizes the sum of squared Euclidean distances from each person to its nearest gas station as in~\eqref{kmeaneq123}.

\subsection{Applications}
From a theoretical point of view, sets clustering is a natural generalization of points clustering. The distance $\dist(P,C)$ between sets generalizes the distance $\dist(p,C)=\min_{c\in C}\dist(p,c)$  between a point and a set, as used e.g. in $k$-means clustering of points.

\textbf{Clustering Shapes~\cite{srivastava2005statistical}. }The first sets clustering related result appeared only recently in~\cite{marom2019k} for the special case where each of the $n$ input sets is a line (an infinite set) in $\REAL^d$. However, in this paper every input set is a finite and arbitrary set in a general metric space.

It is therefore not surprising that many of the numerous applications for points clustering can be generalized to sets clustering. Few examples are given below.

\textbf{Facility locations~\cite{cohen2019fully,blelloch2010parallel,ahmadian2013local}.}
The above gas station example immediately implies applications for Facility Location problems.

\textbf{Natural Language Processing~\cite{collobert2011natural}. }A disadvantage of the common ``bag of words" model is that the order of words in a document does not change its representation~\cite{spanakis2012exploiting}. Sets clustering can help partially overcome this issue by considering the document as the set of vectors corresponding to each of its paragraphs, as illustrated in Fig.~\ref{fig:documentClustering}.

\textbf{Hierarchical clustering~\cite{abboud2019subquadratic,murtagh1983survey}. }Here, the goal is to compute a tree of clusters. The leaves of this tree are the input points, and the next level represent their clustering into $n$ sets. In the next level, the goal is to cluster these $n$ sets into $k$ sets.

\textbf{Probabilistic databases~\cite{suciu2011probabilistic}.} Here, each data sample corresponds to a finite distribution over possible values. E.g. a sample that was obtained from a sensor with a known noise model. Algorithm for computing the minimum enclosing ball ($1$-center) for sets (distributions) was suggested in~\cite{munteanu2014smallest} using coresets, as defined in section~\ref{sec:coresets}.

\subsection{Why is it Hard? } \label{sec:why}
Computing the $k$-means of points in $\REAL^d$ ($m=1$) is already NP-hard when $k$ is not fixed, even for $d=2$.
It can be solved in $n^{O(dk)}$ time using exhaustive search as explained in~\cite{inaba1994applications}. Multiplicative $(1+\varepsilon)$ approximation is also NP-hard for constant a $\eps>0$~\cite{lee2017improved}.

For fixed $k$, deterministic constant factor approximation can be computed in time $O(ndk)$ by constructing coresets (see Section~\ref{sec:coresets}) of size $m=O(k/\varepsilon^3)$~\cite{braverman2016new,feldman2011unified}, on which the optimal exhaustive search is then applied.
In practice, it has efficient approximation algorithms with provable guarantees, such as $k$-means++~\cite{arthur2006k} which yields $O(\log{k})$ approximation, using $D^2$ sampling.

The mean ($k=1$) $\sum_{p\in P}p/n$ of a set $P$ of $n$ points in $\REAL^d$ can be computed in linear $O(nd)$ time. However, we could not find in the literature an algorithm for computing even the sets-mean in~\eqref{1mean} for $n$ pairs of points on the plane ($m= d=2$).

\begin{figure}[t!]
\centering
    \subfigure[]{
		\includegraphics[width = 0.2\textwidth]{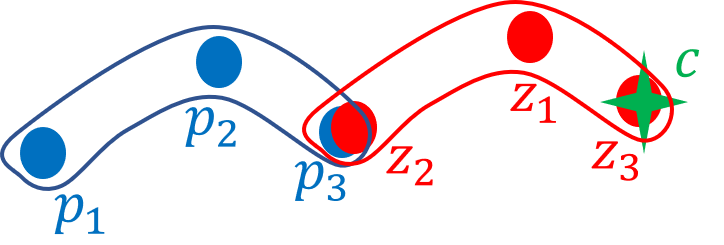}
		\label{fig:nonMetric}}
    \rulesep
    \subfigure[]{
    \centering
		\includegraphics[width = 0.15\textwidth]{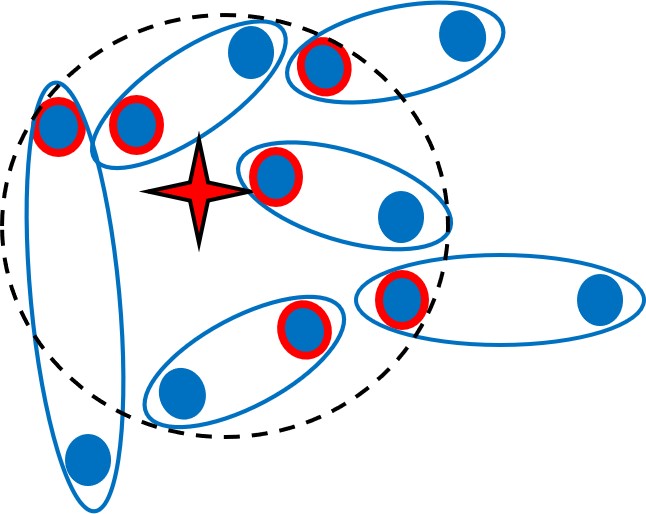}
        \label{fig:kmeansSetsBalls}}
    \caption{\textbf{Why is the sets clustering problem hard?} \ref{fig:nonMetric}: The space is non-metric. Two $m$-sets $P = \br{p_1,p_2,p_3}$ and $Z = \br{z_1,z_2,z_3}$ in $\REAL^d$ for $m=3$ and $c \in \REAL^d$ that do not satisfy the triangle inequality since $\Dt(P,Z) = 0, \Dt(Z,c) = 0$ but $\Dt(P,c) \neq 0$. \ref{fig:kmeansSetsBalls}: Separability. For $d=2$, a set of $n=6$ pairs (blue ellipses) and their optimal mean (red star). There is no ball that separates the closest $n$ points ($1$ from each set) which are closest to the optimal mean (red circles), from the other $n$ points (solid blue circles).}
    \label{fig:setsClusteringIsHard}
\end{figure}

\textbf{Separability. }
The clusters in the $k$-means problem are \emph{separable}: the minimum enclosing ball of each cluster consists only of the points in this cluster. Fundamental results in computational geometry~\cite{toth2017handbook} (chapter $28$) or PAC-learning theory~\cite{shalev2014understanding} prove that there are only $n^{O(1)}$ partitions of $P$ into $k$ such clusters that can be covered by balls.
On the contrary, even in the case of sets-mean $(k=1)$, the union of $n$ representative points from each pair is not separable from the other $n$ points (that are not served by the center); see Fig~\ref{fig:setsClusteringIsHard}.

\textbf{Non-metric space. }
The generalization of the $k$-means distance function to sets in~\eqref{kmeaneq123} is not a metric space, i.e., does not satisfy the triangle inequality, even approximately. For example, two input sets might have zero distance between them while one is very far and the other is very close to a center point; see Fig~\ref{fig:setsClusteringIsHard}.

\subsection{How Hard?}
The previous section may raise the suspicion that sets-$k$-means is NP-hard, even for $k=1$ and $d=2$.
However, this is not the case. In Section~\ref{Polynomialptassec}, we present a simple theorem for computing the \emph{exact} (optimal) sets-$k$-means for any input set $\set{P}$ of $n$ sets, each of size $m$. This takes time polynomial in $n$, i.e., $n^{O(1)}$, for every constant integers $k,d,m\geq1$.
The theorem is based on a generic reduction for the case of $k=m=1$.
Unfortunately, the constants that are hidden in the $O(1)$ notation above make our algorithm impractical for even modest values of $k$. This motivates the construction of the first \emph{coreset for sets}, which is the main technical result of this paper.

\subsection{Sets Coresets} \label{sec:coresets}
Coreset (or core-set) is a modern data summarization paradigm~\cite{maalouf2019fast,bachem2017practical,phillips2016coresets} that was originated from computational geometry~\cite{agarwal2005geometric}. Usually, the input for a \emph{coreset construction algorithm} is an approximation error $\eps\in(0,1)$, a set $\set{P}$ of $n$ items (called points), and a loss $\sum_{P\in \set{P}}\Dt(P,\cdot)$ that we wish to minimize over a (usually infinite) set $\C$ of feasible queries (solutions). The output is a (sub)set $\set{S} \subseteq \set{P}$ and a weights function $v:\set{S} \to [0,\infty)$, which is called an \emph{$\varepsilon$-coreset} for the tuple $(\set{P},\C,\Dt)$
if
\[
\abs{\sum_{P\in \set{P}}\Dt(P,C) - \sum_{S\in \set{S}}v(S)\Dt(S,C)} \leq \varepsilon \sum_{P\in \set{P}}\Dt(P,C),
\]
for \emph{every} query $C\in \C$. In particular, an optimal solution
of the coreset is an approximated optimal solution to the original problem.
If $\abs{\set{S}} \ll \abs{\set{P}}$, i.e., the size of the coreset $\set{S}$ is smaller than $\set{P}$ by orders of magnitude, then we can run a possibly inefficient algorithm on $\set{S}$ to compute an approximation solution to $\set{P}$.
In this paper, unlike previous papers, $\set{P}$ is a set of sets of size $m$ (rather than points) in $\REAL^d$ and $\mathcal{C} = \br{C\subseteq \REAL^d\mid |C|=k}$.

\paragraph{Why coresets? }
Applying the above optimal exhaustive search on such a coreset would reduce the running time from $n^{O(1)}$ to time near linear in $n$ conditioned upon: (i) every such input $\set{P}$ has a coreset $\set{S}$ of size, say, $\abs{\set{S}}\in (\log n)^{O(1)}$, and (ii) this coreset can be computed in near linear time, say $O(n\log n)$.

However, such a coreset construction for a problem has many other applications, including handling big streaming dynamic distributed data in parallel. Here, streaming means maintaining the sets-$k$-means of a (possibly infinite) stream of sets, via one pass and using only logarithmic memory and update time per new set. Dynamic data supports also deletion of sets. Distributed data means that the input is partitioned among $M\geq 2$ machines, where the running time reduces by a factor of $M$~\cite{regin2013embarrassingly}.
Many surveys explain how to obtain those applications, given an efficient construction of a small coreset as suggested in our paper. Due to lack of space we do not repeat them here and refer the reader to e.g.~\cite{feldman2020core}.

The recent result above~\cite{marom2019k} for $k$-means of lines (infinite sets) is obtained via coresets. We do not know any coresets for finite sets except for singletons ($m=1$). This coreset, that is called coreset for $k$-means (of points) is one of the fundamental and most researched coresets in this century:~\cite{har2004coresets, chen2006k,frahling2008fast, chen2009coresets,fichtenberger2013bico, bachem2015coresets, barger2016k, bachem2017one, feldman2017coresets,bachem2018scalable,huang2018epsilon}.  Coresets for fair clustering of points, which preserve sets-related properties of the input points, were suggested in~\cite{schmidt2019fair}.

A natural open question is \textbf{``does a small coreset exist for the sets-$k$-means problem of any input?"}.

\subsection{Main Contributions}
In this paper we suggest the first $(1+\eps)$ approximation for the sets-$k$-means problem, by suggesting the first coreset for sets. More precisely, we provide

\textbf{(i): }A proof that an $\eps$-coreset $\set{S}$ of size $\abs{\set{S}}=O(\log^2{n})$ exists for \emph{every} input set $\set{P}$ of $n$ sets in $\REAL^d$, each of size $m$. This holds for every constants $d,k,m\geq 1$. $\set{S}$ can be computed in time $O(n\log n)$; see exact details in Theorem~\ref{theorem:coreset}.

\textbf{(ii): }An algorithm that computes an optimal solution for the sets-$k$-means of such $\set{P}$ in $n^{O(1)}$ time. See Theorem~\ref{theorem:PTAS}.

\textbf{(iii): }Combining the above results implies the first PTAS (($1+\eps)$-approximation) for the sets-$k$-means of any such input set $\set{P}$, that takes $O(n\log n)$ time; see Corollary~\ref{cor:PTASKmeans}.

\textbf{(iv): }Extensions for (i) from the Euclidean distance in $\REAL^d$ to any metric space $(\M,\Dt)$, distances to the power of $\ell>0$, and M-estimators that are robust to outliers. See Section~\ref{sec:extension}.

\textbf{(v): }Experimental results on synthetic and real-world datasets show that our coreset performs well also in practice.

\textbf{(vi): }Open source implementation for reproducing our experiments and for future research~\cite{opencode}.

\subsection{Novelty} \label{sec:novelty}
Our coreset construction needs to characterize which of the input items are similar, and which are dissimilar, in some sense. To this end, we first suggest a similarity measure for sets and then present our novel non-uniform sampling scheme for sets, which we call \emph{onion sampling}.

\textbf{Recursive similarity. }When $m=1$, items are similar if their mutual distance is small. When $m \geq 2$, we propose a recursive and abstract similarity measure, which requires all the $m$ items in the first set to be ``close'' to the $m$ items in the second set, for some ordering of the items inside each set; see Algorithm~\ref{alg:RobustMedSets}.

\textbf{Onion Sampling. }
Recall that the $D^2$ sampling assigns each input point with probability that is proportional to its distance to the $k$-means of the input (or its approximation), which reflects its importance. When we try to generalize $D^2$ to handle sets rather than points, it is not clear what to do when one point in an input $m$-set is close to the approximated center and the other one is far, as in Fig.~\ref{fig:nonMetric}. In particular, if the optimal sum of squared distances is zero, the coreset in the in $k$-means problem is trivial (the $k$ points). This is not the case for the sets-$k$-mean (even for $k=1$).

To this end, we suggest an iterative and non-trivial alternative sampling scheme called onion sampling. In each iteration we apply an algorithm which characterizes ``recursively similar'' input sets, as described above, which form an ``onion layer''. We assign those sets the same sampling probability, which is inversely proportional to the number of those items, and peal this layer off.
We continue until we have pealed off the entire onion (input).
Finally, we prove that a random sample according to this distribution yields a coreset for the sets clustering problem; see Algorithm~\ref{alg:wrapper}.

\section{Definitions}
\label{sec:problemStatement} \label{sec:extension}

In~\eqref{kmeaneq123} we define sets-$k$-means for points in $\REAL^d$. However, our coreset construction holds for any metric space, or general (non-distance) loss functions as in Table~\ref{table:distFuncs}.
\begin{definition} [Loss function $\Dt$]
\label{def:distSets}
Let $\D:[0,\infty) \to [0,\infty)$ be a non-decreasing function that satisfies the following \emph{$r$-log-log Lipschitz condition}: There is a constant $0 < r < \infty$ such that for every $x,z > 0$ we have $\D(zx) \leq z^r \D(x)$.
Let $(\M,D)$ be a metric space, and $\Dt : \power{\M} \times \power{\M} \to [0,\infty)$ be a function that maps every two subsets $P,C \subseteq \M$ to
\[
\Dt(P,C) := \min_{p \in P, c \in C} \D(\Dt(p,c)).
\]
For $p,b\in  \M$, denote $\Dt(p,C):=\Dt(\br{p},C)$, and $\Dt(P,b):=\Dt(P,\br{b})$, for short. For an integer $k\geq 1$ define $\Set{X}_{k} := \br{ C \subseteq \M \mid  |C| = k}$.
\end{definition}

Although $(\M,\Dt)$ is not necessarily a metric space, the triangle inequality is approximated as follows.
\begin{lemma} [Lemma 2.1 (ii) in~\cite{feldman2012data}] \label{lem:weakTriangle}
Let $(\M,\dist)$ and $r>0$ be as defined in Definition~\ref{def:distSets}. Let $\rho = \max\br{2^{r-1},1}$. Then the function $\Dt$ satisfies the weak triangle inequality for singletons, i.e., for every $p,q,c \in \M$, $\Dt(p,q) \leq \rho(\Dt(p,c) + \Dt(c,q))$.
\end{lemma}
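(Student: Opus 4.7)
The plan is to reduce the weak triangle inequality on $(\M,\Dt)$ to a one-variable inequality on the scalar function $\D$, using only the metric triangle inequality of $(\M,\dist)$ together with the two structural hypotheses on $\D$: monotonicity and the $r$-log-log Lipschitz condition. Since we are dealing with singletons, the definition collapses to $\Dt(p,q)=\D(\dist(p,q))$, $\Dt(p,c)=\D(\dist(p,c))$, and $\Dt(c,q)=\D(\dist(c,q))$, so the statement is really a statement about the one-dimensional function $\D$.

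First I would apply the metric triangle inequality in $(\M,\dist)$ to get $\dist(p,q)\leq \dist(p,c)+\dist(c,q)$. Writing $a:=\dist(p,c)$ and $b:=\dist(c,q)$, monotonicity of $\D$ gives $\Dt(p,q)\leq \D(a+b)$. It therefore suffices to show that for every $a,b\geq 0$,
\[
\D(a+b)\leq \rho\bigl(\D(a)+\D(b)\bigr),\qquad \rho=\max\{2^{r-1},1\}.
\]
Assume without loss of generality $a\geq b$; the case $a=0$ is trivial. Writing $a+b=a(1+b/a)$ with $1+b/a\in[1,2]$ and using the $r$-log-log Lipschitz property with scaling $z=1+b/a$, I would obtain the baseline estimate $\D(a+b)\leq (1+b/a)^r\,\D(a)$. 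The task is then to distribute the right-hand side between $\D(a)$ and $\D(b)$ with the claimed constant.

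For $r\leq 1$, I would use the elementary scalar inequality $(1+x)^r\leq 1+x^r$ for $x\in[0,1]$ (concavity of $t\mapsto t^r$ with $t\geq 1$) to get $\D(a+b)\leq \D(a)+(b/a)^r\D(a)$, and then apply the log-log Lipschitz condition in the other direction, $\D(b)=\D((b/a)\cdot a)\leq (b/a)^r\D(a)$... used the reverse way (or, equivalently, $\D(a)=\D((a/b)b)\leq (a/b)^r\D(b)$, which rearranges to $(b/a)^r\D(a)\leq \D(b)$) to conclude $\D(a+b)\leq \D(a)+\D(b)$, matching $\rho=1$. For $r\geq 1$, I would instead use the convexity estimate $(1+x)^r\leq 2^{r-1}(1+x^r)$ on $x\in[0,1]$ (Jensen applied to $t\mapsto t^r$), yielding $\D(a+b)\leq 2^{r-1}(\D(a)+(b/a)^r\D(a))$, and finish using $(b/a)^r\D(a)\leq \D(a)$ or the sharper log-log Lipschitz bound comparing $(b/a)^r\D(a)$ with $\D(b)$, giving the claimed $\rho=2^{r-1}$. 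The main obstacle I expect is exactly this $r\geq 1$ case: a naive bound $\D(a+b)\leq \D(2\max(a,b))\leq 2^r\D(\max(a,b))$ only yields the constant $2^r$, so the factor $2^{r-1}$ really requires the convexity inequality $(1+x)^r\leq 2^{r-1}(1+x^r)$ and careful use of log-log Lipschitz in both the $z\geq 1$ and $z\leq 1$ regimes rather than just the monotonicity of $\D$.
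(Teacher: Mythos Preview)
The paper does not supply its own proof of this lemma; it is quoted verbatim from \cite{feldman2012data} and used as a black box, so there is nothing in the paper to compare your argument against.

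Your argument is correct and is the standard one: reduce to the scalar inequality $\D(a+b)\leq\rho(\D(a)+\D(b))$, use $\D(a+b)\leq(1+b/a)^r\D(a)$ from the log--log Lipschitz bound with $z=1+b/a\geq 1$, and then split into $r\leq 1$ (subadditivity $(1+x)^r\leq 1+x^r$) and $r\geq 1$ (convexity $(1+x)^r\leq 2^{r-1}(1+x^r)$). Two small clean-ups are worth making. First, when you pass from $(b/a)^r\D(a)$ to $\D(b)$, the correct derivation is the parenthetical one, $\D(a)=\D((a/b)b)\leq (a/b)^r\D(b)$ with $a/b\geq 1$; the sentence fragment ``$\D(b)\leq (b/a)^r\D(a)$\ldots\ used the reverse way'' is confusing because the stated hypothesis (with scaling factor $z>1$) does not directly give an inequality for contractions. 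Second, in the $r\geq 1$ case the option ``$(b/a)^r\D(a)\leq\D(a)$'' that you list would only produce $\D(a+b)\leq 2^{r-1}\cdot 2\D(a)=2^r\D(a)$, hence constant $2^r$; you genuinely need the sharper comparison $(b/a)^r\D(a)\leq\D(b)$ to land on $\rho=2^{r-1}$, so this is not an alternative but the required step. With those edits the proof is clean.
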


\begin{table}[th!]
\caption{Example loss functions as in Definition~\ref{def:distSets}. Let $\delta > 0$ be a constant and let $(\M,\Dt)$ be a metric space where $\M = \REAL^d$ and $\Dt(p,c) = \norm{p-c}$ for every $p,c \in \REAL^d$.}
\centering
\begin{adjustbox}{width=0.48\textwidth}
\small
\begin{tabular}{ | c | c | c | c |}
\hline
\makecell{Optimization\\Problem} & $\D(x)$ & $\Dt(P,C)$ & $\rho$\\
\hline
\makecell{sets-$k$-median}  & $x$ & $\displaystyle\min_{p\in P, c\in C}\norm{p-c}$ & $1$\\ \hline
\makecell{sets-$k$-means}  & $x^2$ & $\displaystyle\min_{p\in P, c\in C}\norm{p-c}^2$ & $2$\\
\hline
\makecell{sets-$k$-means with\\$M$-estimators} &
$\begin{cases}
\frac{1}{2}x^2 & \text{if } x\leq \delta\\
\delta(|x| - \frac{1}{2}\delta) & \text{otherwise}
\end{cases}$ &
$\displaystyle\min_{p\in P, c \in C}\begin{cases}
\frac{1}{2}\norm{p-c}^2 & \text{if } \norm{p-c}\leq \delta\\
\delta(\norm{p-c} - \frac{1}{2}\delta) & \text{otherwise}
\end{cases}$
&
$2$\\
\hline
\makecell{$\ell_\psi$ norm}  &  $x$ & $\displaystyle\min_{p\in P, c\in C}\norm{p-c}_\psi$ & $\max\br{2^{\frac{1}{\psi}}, 1}$\\
\hline
\end{tabular}
\end{adjustbox}
\label{table:distFuncs}
\end{table}

\textbf{Notation. }For the rest of the paper we denote $[n] = \br{1,\cdots,n}$ for an integer $n\geq 1$. Unless otherwise stated, let $(\M,\Dt)$ be as in Definition~\ref{def:distSets}.

As discussed in Section~\ref{sec:intro}, the input set for the sets clustering problem is a set of finite and equal sized sets as follows.
\begin{definition} [$(n,m)$-set]
\label{def:mSet}
An $m$-set $P$ is a set of $m$ \emph{distinct} points in $\M$, i.e. $P \subseteq \M$ and $|P| = m$.
An \emph{$(n,m)$-set} is a set $\Set{P} = \br{P \mid P \subseteq \M, \abs{P} = m}$ such that $\abs{\Set{P}} = n$.
\end{definition}

In what follows we define the notion of robust approximation. Informally, a robust median for an optimization problem at hand is an element $b$ that approximates the optimal value of this optimization problem, with some leeway on the number of input elements considered.
\begin{definition} [Robust approximation]
\label{def:opt} \label{def:robMed}
Let $\Set{P}$ be an $(n,m)$-set, $\gamma \in (0,\frac{1}{2}]$, $\tau \in (0, 1/10)$, and $\alpha \geq 1$. Let $(\M,\Dt)$ be as in Definition~\ref{def:distSets}.
For every $C \in \set{X}_k$, we define $\closest(\Set{P}, C, \gamma)$ to be the set that is the union of $\ceil{\gamma |\Set{P}|}$ sets $P \in \Set{P}$ with the smallest values of $\Dt(P,C)$, i.e.,
\[
\closest(\set{P}, C,\gamma) \in \argmin\limits_{\substack{\set{Q} \subseteq \set{P} : \abs{Q} = \ceil{\gamma \abs{\set{P}}}}} \sum_{P\in \set{Q}}\Dt(P,C).
\]
The singleton $\br{b} \in \set{X}_1$ is a $(\gamma,\tau, \alpha)$-\emph{median} for $\set{P}$ if
\begin{equation*}
\smashoperator[r]{\sum\limits_{P \in \closest(\Set{P},\br{b},(1 - \tau) \gamma)}} \Dt(P,b) \leq \alpha \cdot \min_{b'\in \M} \smashoperator[r]{\sum_{P\in \closest(\Set{P},\br{b'},\gamma)}} \Dt(P,b').
\end{equation*}

\end{definition}

Given an $m$-set $P$, and a set $\set{B}$ of $|\set{B}| = j \leq m$ points, in what follows we define the projection of $P$ onto $\set{B}$ to be the set $P$ after replacing $j$ of its points, which are the closest to the points of $\set{B}$, by the points of $\set{B}$. We denote by $\notail(P,\set{B})$ the remaining ``non-projected'' points of $P$.
\begin{definition} [Set projection]
\label{def:proj}
Let $m\geq 1$ be an integers, ${P}$ be an $m$-set, $(\M,\Dt)$ be as in Definition~\ref{def:distSets}, $j\in [m]$, and let $\set{B} = \br{b_1, \cdots ,b_j} \in \set{X}_j$. Let $p_1 \in P$ denote the closest point to $b_1$ i.e., $p_1\in \argmin_{p\in P}\Dt(p, b_1)$. For every integer $\displaystyle{i\in \br{2,\cdots,j}}$ recursively define $p_i\in P$ to be the closest point to $b_i$, excluding the $i-1$ points that were already chosen, i.e., $p_i\in \argmin\limits_{p\in P\setminus\br{p_1, \cdots,p_{i-1}}}\Dt(p, b_i)$ .
We denote \textbf{(i): }$\br{(p_1,b_1),\cdots,(p_j,b_j)}$ by $\closepoints(P,\set{B})$,\\
\textbf{(ii): }the $m-j$ points from $P$ that are not among the closest points to $\set{B}$ by $\notail(P,\set{B})= P\setminus \br{p_1,\cdots,p_j}$, and\\
\textbf{(iii): }the \emph{projection} of $P$ onto $\set{B}$ by $\proj(P,\set{B}) = \br{b_1,\cdots,b_j} \cup \big(  P\setminus \br{p_1,\cdots,p_j} \big)$.
For $X = \emptyset$, we define $\notail(P,X) = \proj(P,X) = P$

\end{definition}

\section{Sensitivity Based Coreset}\label{sec:SensitivitySampling}

A common technique to compute coresets is the approach of non-uniform sampling, which is also called sensitivity sampling~\cite{langberg2010universal,braverman2016new}, and was widely used lately to construct coresets for Machine Learning problems; see e.g.,~\cite{huggins2016coresets,munteanu2018coresets,maalouf2019tight,bachem2017practical}.
Intuitively, the sensitivity of an element $P \in \set{P}$ represents the importance of $P$ with respect to the other elements, and the specific optimization problem at hand; see definition and details in Theorem~\ref{braverman}. Suppose that we computed an upper bound $s(P)$ for the sensitivity of every element $P\in\set{P}$. Then a coreset is now simply a random (sub)sample of $\set{P}$ according to the sensitivity distribution, followed by a smart reweighting of the points.
It's size is proportional to the sum of sensitivities $t = \sum_{Q\in\set{P}}s(Q)$ and the combinatorial complexity $d'$ of the problem at hand; see Definition~\ref{def:VC}. The following theorem, which is a restatement of Theorem 5.5 in~\cite{braverman2016new}, provides full details.

\begin{theorem} \label{braverman}
Let $\set{P}$ be an $(n,m)$-set, and $(\Dt,\set{X}_k)$ be as in Definition~\ref{def:distSets}. For every $P\in \set{P}$ define the \emph{sensitivity} of $P$ as
\[
\sup_{C\in \set{X}_k} \frac{\Dt(P,C)}{\sum_{Q\in \Pset}\Dt(Q,C)},
\]
where the sup is over every $C\in \set{X}_k$ such that the denominator is non-zero.
Let $s:\Pset\to [0,1]$ be a function such that $s(P)$ is an upper bound on the sensitivity of $P$.
Let $t = \sum_{P\in \Pset} s(P)$ and $d'$ be a complexity measure of the set clustering problem; see Definition~\ref{def:VC}. Let $c \geq 1$ be a sufficiently large constant, $\varepsilon, \delta \in (0,1)$, and let $\set{S}$ be a random sample of $|S| \geq \frac{ct}{\varepsilon^2}\left(d'\log{t}+\log{\frac{1}{\delta}}\right)$
sets from $\Pset$, such that $P$ is sampled with probability $s(P)/t$ for every $P\in \Pset$. Let $v(P) = \frac{t}{s(P)|C|}$ for every $P\in \set{S}$. Then, with probability at least $1-\delta$, $(S,v)$ is an $\varepsilon$-coreset for $(\Pset,\set{X}_k,\Dt)$.
\end{theorem}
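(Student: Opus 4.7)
The plan is to reduce the statement to a uniform concentration bound for a reweighted sample-mean estimator over the infinite query family $\set{X}_k$. Fix any $C\in \set{X}_k$ and set $F(C) := \sum_{P\in \set{P}}\Dt(P,C)$ and $\widehat F(C) := \sum_{P\in \set{S}} v(P)\Dt(P,C)$. By the definitions of $v$ and of the sampling distribution, each term in $\widehat F(C)$ has expectation $F(C)/|\set{S}|$, so $\mathbb{E}\,\widehat F(C) = F(C)$. It therefore suffices to prove that, with probability at least $1-\delta$, the relative deviation $|\widehat F(C)-F(C)|\leq \varepsilon F(C)$ holds simultaneously for every $C\in \set{X}_k$.

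For the per-query concentration, note that the sensitivity upper bound gives $\Dt(P,C)/F(C)\leq s(P)$, hence each summand $v(P)\Dt(P,C)/F(C) = \tfrac{t}{|\set{S}|s(P)}\cdot \tfrac{\Dt(P,C)}{F(C)}$ is bounded by $t/|\set{S}|$ almost surely, and its variance admits the same kind of bound after a standard rescaling. A Bernstein-type inequality then yields $|\widehat F(C)-F(C)|\leq \varepsilon F(C)$ for a fixed $C$ with failure probability at most $2\exp(-c'|\set{S}|\varepsilon^2/t)$ for an absolute constant $c'$. Setting $|\set{S}|\gtrsim t\varepsilon^{-2}\log(1/\delta)$ would already close the argument for a \emph{single} query.

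To upgrade this pointwise bound to a uniform bound over $\set{X}_k$, I would invoke the Feldman--Langberg sensitivity-sampling framework in the form used by~\cite{braverman2016new}. Consider the normalized function class $\set{F}=\br{P\mapsto \Dt(P,C)/s(P) : C\in \set{X}_k}$; by Definition~\ref{def:VC} the induced range space has pseudo-dimension $O(d')$. A Haussler-style $\varepsilon$-sample theorem for this range space converts the fixed-$C$ concentration into a uniform guarantee, costing an additional $d'\log t$ factor in the sample-size bound. Combining this with the $\log(1/\delta)$ term from the union over the failure event yields exactly the threshold $|\set{S}|\geq \tfrac{ct}{\varepsilon^2}(d'\log t+\log(1/\delta))$, after which the reweighted sample $(\set{S},v)$ satisfies the $\varepsilon$-coreset inequality of Section~\ref{sec:coresets}.

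The main obstacle is the uniform-convergence step: because $\set{X}_k$ is an uncountable family of $k$-subsets of $\M$, a naive union bound is unavailable, and the whole argument hinges on bounding the combinatorial dimension $d'$ of the range space induced by $\br{\Dt(\cdot,C)/s(\cdot)}_{C\in\set{X}_k}$. For the Euclidean sets-$k$-means loss the sublevel sets $\br{P : \Dt(P,C)\leq \tau}$ are unions of $k\cdot m$ balls in $\REAL^d$, so $d'$ is polynomial in $d$, $k$, $m$ by standard VC-composition results; verifying this polynomial dependence, together with the variance estimate needed to invoke Bernstein in the heavy-tailed weighted regime, is the step that requires care, while the rest of the proof is a routine instantiation of the generic sensitivity-sampling theorem.
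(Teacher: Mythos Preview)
The paper does not prove Theorem~\ref{braverman} at all: it is explicitly introduced as ``a restatement of Theorem~5.5 in~\cite{braverman2016new}'' and is used as a black box. There is therefore no ``paper's own proof'' to compare against.

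That said, your sketch is a faithful outline of how the cited result is actually established in the Feldman--Langberg / Braverman--Feldman--Lang framework: reduce to unbiased i.i.d.\ estimation by importance sampling, observe that the sensitivity bound caps each reweighted term by $t/|\set{S}|$, and then replace the naive union bound over $\set{X}_k$ by an $\varepsilon$-approximation argument for the range space of Definition~\ref{def:VC}. One refinement worth noting: the uniform step in~\cite{braverman2016new} goes through the pseudo-dimension of the \emph{weighted} function class $P\mapsto \Dt(P,C)/(t\cdot s(P))$ rather than a direct Bernstein-plus-covering argument, and the key technical lemma there shows that this pseudo-dimension is within a logarithmic factor of the dimension $d'$ of the unweighted query space. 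Your plan to bound $d'$ via VC composition over unions of $km$ balls is exactly what the present paper does separately in Lemma~\ref{lem:VCBound}, so you are conflating two independent ingredients (the generic sampling theorem and the problem-specific dimension bound); for Theorem~\ref{braverman} itself only the former is needed, with $d'$ treated as a given parameter.
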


\section{Coreset for Sets Clustering}
In this section we give our main algorithms that compute a coreset for the sets clustering problem, along with intuition, Full theoretical proofs can be found in the appendix.

\subsection{Algorithms} \label{sec:algs}

\textbf{Overview and intuition behind Algorithm~\ref{alg:RobustMedSets}. }
Given a set $\set{P}$ of $m$-sets and an integer $k \geq 1$, Algorithm~\ref{alg:RobustMedSets} aims to compute a set $\set{P}^m \subset \set{P}$ of ``similar'' $m$-sets, which are all equally important for the problem at hand; see Lemma~\ref{lem:sens}.
At the $i$th iteration we wish to find a $\frac{1}{4k}$ fraction of the remaining $m$-sets $\set{P}^{i-1}$ which are similar in the sense that there is a dense ball of small radius that contains at least one point from each of those sets. To do so, we first compute at Line~\ref{alg:LinePHati} ${\hat{\set{P}}}^{i-1}$ which contains only the ``non-projected'' points of each $m$-sets in $\set{P}^{i-1}$. We then compute a median $b^i$ at Line~\ref{alg:LineBi} that satisfies at least $\frac{1}{4k}$ of ${\hat{\set{P}}}^{i-1}$. $b^i$ is the center of the desired dense ball. At Line~\ref{newPi} we pick the sets that indeed have a candidate inside this dense ball and continue to the next iteration (where again, we consider only the non-projected part of those sets); see Fig.~\ref{fig:illustration}.
After $m$ such iterations, the surviving $m$-sets in $\set{P}^m$ have been ``recursively similar'' throughout all the iterations.

\textbf{Overview and intuition behind Algorithm~\ref{alg:wrapper}. }
Given an $(n,m)$-set $\set{P}$ and an integer $k\geq 1$, Algorithm~\ref{alg:wrapper} aims to compute an $\varepsilon$-coreset $(S,v)$ for $\set{P}$; see Theorem~\ref{theorem:coreset}. Algorithm~\ref{alg:wrapper} applies our onion sampling scheme; each while iteration at Line~\ref{line:while} corresponds to a pealing iteration.

At lines~\ref{line:while}--\ref{line:endSens} Algorithm~\ref{alg:wrapper} first calls Algorithm~\ref{alg:RobustMedSets} with the $(n,m)$-set $\set{P}^0 = \set{P}$ to obtain a set $\set{P}^m \subseteq \set{P}$ of ``dense'' and equally (un)important $m$-sets from the input. Second, it assigns all the sets in $\set{P}^m$ the same sensitivity value as shown in Lemma~\ref{lem:sens}. It then peals those sets off, and repeats this process with $\set{P}^0 \setminus \set{P}^m$. Those values increase in every step since the size of the dense set returned decreases, making every point more important. This process is illustrated in Fig.~\ref{fig:sens}. We then randomly sample a sufficiently large set $S \subseteq \set{P}$ at Line~\ref{line:randomSample} according to the sensitivity values, and assign new weights $v(P)$ for every set $P \in \set{S}$ in Line~\ref{line:reweight}.

\begin{figure}
  \centering
  \includegraphics[width=0.47\textwidth]{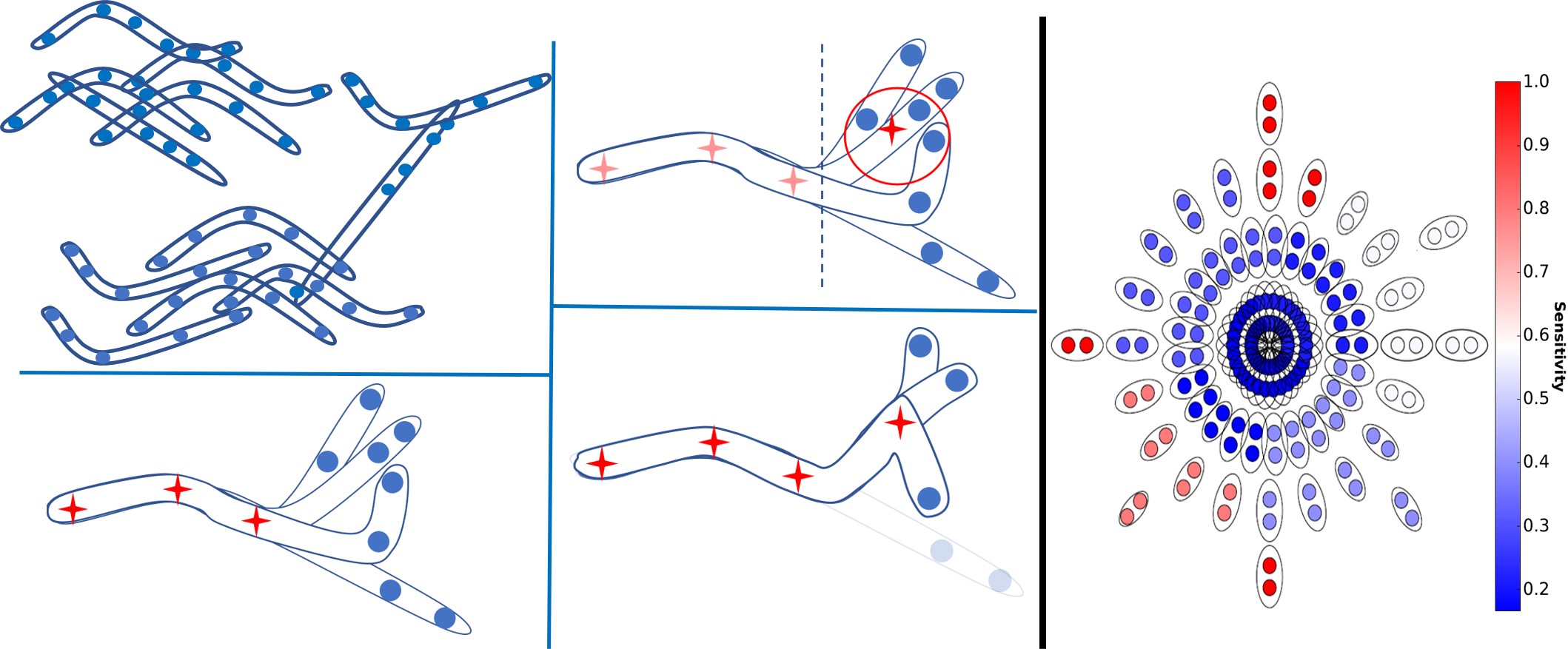}
  \caption{\textbf{Recursive similarity. }(Upper left): An input $(n,m)$-set $\set{P}$ with $n = 14$ and $m = 5$. (Lower left): The set $\set{B}^i$ (red stars) for $i=3$ from the $3rd$ iteration of Algorithm~\ref{alg:RobustMedSets}, and the projection $\proj(\set{P}^3,\set{B}^3)$ (blue snakes) of $\set{P}^3$ onto $\set{B}^3$. Therefore, all the sets in $\proj(\set{P}^3,\set{B}^3)$ have $i=3$ points in common, and $m-i = 2$ other points. (Upper mid): The set $\hat{\set{P}}^3$ that contains four $2$-sets (blue points right of the dashed line). The median $b^4$ (bold red star) considers only a fraction of $\hat{\set{P}}^3$. The red circle contains the points that are closest to $b^4$.
    $\set{P}^4$ contains the sets with a representative inside the red circle. (Lower mid): The projection $\proj(\set{P}^4,\set{B}^4)$ (blue snakes) of the sets $\set{P}^4$ onto the new $\set{B}^4 = \set{B}^3 \cup \br{b^4}$.\\
    \textbf{(Right): Onion sampling. }A set $\set{P}$ of pairs in the plane ($m=d=2$) along with the sensitivity values $s(P)$ computed in Algorithm~\ref{alg:wrapper} via our onion sampling. First, the densest subset of pairs are assigned a low sensitivity value (dark blue). The densest subset of the remaining pairs is then assigned a higher sensitivity value (light blue), and so on. The scattered pairs that remain at the end are assigned the highest sensitivity (dark red). The size of the subset found decreases in each step.}
  \label{fig:sens}\label{fig:illustration}
\end{figure}

\begin{algorithm}[tb]
  \caption{$\RobustMedForSets(\Pset,k)$}
   \label{alg:RobustMedSets}
\begin{algorithmic}[1]
   \STATE {\bfseries Input:} An $(n,m)$-set $\set{P}$ and  a positive integer $k$.
    \STATE {\bfseries Output:} A pair $(\set{P}^m,\set{B}^m)$ where $\set{P}^m \subseteq \set{P}$ and \\
    \quad\quad\quad\,\,\,\,$\set{B}^m \in \set{X}_m$; see Lemma~\ref{lem:sens}.
    \STATE $\set{P}^0 := \set{P}$ and $\set{B}^0 := \emptyset$
    \FOR{$i := 1$ {\bfseries to} $m$  \alglinelabel{alg:forLine}}
    \STATE ${\hat{\set{P}}}^{i-1}:= \br{ \notail(P,\set{B}^{i-1}) \mid  P \in \set{P}^{i-1}}$ \COMMENT{see Definition~\ref{def:proj}}\alglinelabel{alg:LinePHati}
    \STATE Compute a $\displaystyle \left(\frac{1}{2k},\tau, 2\right)$-median $\br{b^{i}} \in \set{X}_1$ for ${\hat{\set{P}}}^{i-1}$ for some $\tau \in (0,\frac{1}{20})$.\\ \COMMENT{see Definition~\ref{def:robMed}. Algorithm~\ref{alg:robustMed} provides a suggested implementation.} \alglinelabel{alg:LineBi}
    \STATE $\set{{P}}^{i} := \left\lbrace \mkern-17mu \begin{array}{c!{\vline width 0.6pt}c}
         \begin{array}{c}
              P \\
              \in \set{P}^{i-1}
         \end{array} &  \begin{array}{c}
         \notail(P,\set{B}^{i-1}) \in \\ \closest\Big({\hat{\set{P}}}^{i-1},\br{b^{i}},\frac{(1-\tau)}{4k}\Big)
         \end{array}
    \end{array} \mkern-17mu \right\rbrace$ \alglinelabel{newPi}
    \COMMENT{$\set{{P}}^{i}$ contains every $m$-set $P$ such that $\notail(P,\set{B}^{i-1})$ is in the closest fraction of $(1-\tau)/(4k)$ sets in $\hat{\set{P}}^{i-1}$ to the center $b^i$; see Fig.~\ref{fig:illustration}.}

    \STATE $\set{B}^i := \set{B}^{i-1} \bigcup \br{b^{i}}$ \alglinelabel{line:compNewBi}
    \ENDFOR
    \STATE \bfseries{Return}$(\set{P}^m,\set{B}^m)$

\end{algorithmic}
\end{algorithm}

\begin{algorithm}[!htb]
  \caption{$\coreset(\Pset,k, \eps, \delta)$}
   \label{alg:wrapper}
\begin{algorithmic}[1]
   \STATE {\bfseries Input:} An $(n,m)$-set $\set{P}$, a positive integer $k$, an error \\
   \quad\quad\quad parameter $\eps \in (0,1)$, and a probability of failure \\
   \quad\quad\quad $\delta \in (0,1)$.
    \STATE {\bfseries Output:} An $\eps$-coreset $\left( S, v\right)$; see Theorem~\ref{theorem:coreset}.
    \STATE $b := $ a constant determined by the proof of Theorem~\ref{theorem:coreset}
    \STATE $d' := md^2k^2$ \COMMENT{the dimension of $(\Pset,\set{X}_k,\Dt)$}
    \STATE $\set{P}^0 := \set{P}$ \alglinelabel{line:P0}
    \WHILE{$\abs{Q_0} > b$ \alglinelabel{line:while}}
    \STATE \alglinelabel{line:callAlg1}$\left( \set{P}^m, \set{B}^m \right) := \RobustMedForSets(\set{P}^0,k)$
    \FOR{every $P \in \set{P}^m$}
    \STATE $s(P) := \frac{b}{\abs{\set{P}^m}}$ \alglinelabel{line:sens}
    \ENDFOR
    \STATE $\set{P}^0 := \set{P}^0 \setminus \set{P}^m$ \alglinelabel{alg2:L7}
    \ENDWHILE
    \FOR{every $P \in \set{P}^0$}
    \STATE \alglinelabel{line:endSens} $s(P) = 1$
    \ENDFOR
    \STATE $t := \sum\limits_{P \in \set{P}} s(P)$ \alglinelabel{line:sumSens}
    \STATE \alglinelabel{line:randomSample} Pick $S$ of $\frac{bt}{\eps^2}\left( \log{(t)} d' + \log\left( \frac{1}{\delta} \right) \right)$ $m$-sets from $\set{P}$ by repeatedly, i.i.d, selecting $P \in \set{P}$ with probability $\frac{s(P)}{t}$
    \FOR{each $P \in S$}
    \STATE $v(P) := \frac{t}{\abs{S} \cdot s(P)}$ \alglinelabel{line:reweight}
    \ENDFOR
    \STATE \bfseries{Return} $(S, v)$ \alglinelabel{line:ret}
\end{algorithmic}
\end{algorithm}

\subsection{Main Theorems}

The following lemma lies at the heart of our work. It proves that Algorithm~\ref{alg:RobustMedSets} helps compute an upper bound for the sensitivity term of some of the input elements.
\begin{restatable}{lemma}{sens}
\label{lem:sens}
Let $\set{P}$ be an $(n,m)$-set, $k\geq 1$ be an integer and $(\M,\Dt)$ be as in Definition~\ref{def:distSets}. Let $(\set{P}^m,\set{B}^m)$ be the output of a call to $\RobustMedForSets(\Pset,k)$; see Algorithm~\ref{alg:RobustMedSets}.
Then, for every $P \in \set{P}^m$ we have that
\[
\sup_{C \in \set{X}_k}\frac{\Dt(P,C)}{\sum\limits_{Q \in \set{P}} \Dt(Q,C)} \in O(1) \cdot \left(  {\frac{1}{\abs{\set{P}^m}}} \right).
\]
\end{restatable}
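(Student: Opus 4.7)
The plan is to fix an arbitrary $P \in \set{P}^m$ and an arbitrary query $C \in \set{X}_k$, and prove
\[
\abs{\set{P}^m}\cdot \Dt(P,C) \leq O(1)\cdot \sum_{Q\in\set{P}}\Dt(Q,C).
\]
I first unpack what Algorithm~\ref{alg:RobustMedSets} guarantees about every $P \in \set{P}^m$: there is a bijective matching $i\leftrightarrow p_i^P$ between the $m$ points of $P$ and the centers $b^1,\dots,b^m$ of $\set{B}^m$, and the ``radius'' $R_i := \max_{Q \in \set{P}^m} \D(\dist(p_i^Q,b^i))$ is bounded by $O(k\cdot r^i/n_{i-1})$, where $r^i$ is the optimal $(1/(2k))$-robust-median cost for $\hat{\set{P}}^{i-1}$ and $n_{i-1} = \abs{\hat{\set{P}}^{i-1}}$. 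This last estimate is an averaging consequence of $b^i$ being a $(1/(2k),\tau,2)$-median together with the peeling rule retaining only the closest $(1-\tau)/(4k)$-fraction of $\hat{\set{P}}^{i-1}$ to $b^i$: the ``gap'' between the $(1-\tau)/(4k)$ and $(1-\tau)/(2k)$ fractions must accommodate the excess cost $\leq 2 r^i$, so the threshold radius is at most $O(k r^i/n_{i-1})$.

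Next I apply the weak triangle inequality (Lemma~\ref{lem:weakTriangle}) in both directions to compare $\Dt(\cdot,C)$ with $\Dt(\set{B}^m,C) := \min_i \Dt(\{b^i\},C)$. The upper direction yields $\Dt(P,C) \leq \rho(R + \Dt(\set{B}^m,C))$ where $R := \max_i R_i$, and the lower direction yields $\Dt(p_i^Q,C) \geq \Dt(\{b^i\},C)/\rho - R_i$ for every $Q\in \set{P}^m$ and every $i$. I then split into two regimes. In the \emph{far} regime $\Dt(\set{B}^m,C) \geq 2\rho R$, the lower bound gives $\Dt(Q,C) \geq \Dt(\set{B}^m,C)/(2\rho)$ for \emph{every} $Q\in \set{P}^m$, which paired with the matching upper bound $\Dt(P,C) \leq 2\rho\Dt(\set{B}^m,C)$ gives $\sum_{Q\in \set{P}^m}\Dt(Q,C) \geq \Omega(\abs{\set{P}^m}\cdot \Dt(P,C))$, and the claim follows using $\set{P}^m \subseteq \set{P}$. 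In the \emph{close} regime $\Dt(\set{B}^m,C) < 2\rho R$ we have $\Dt(P,C) = O(R) = O(R_{i^\star})$ where $i^\star$ maximizes $R_i$, and it suffices to show $\sum_{Q\in\set{P}}\Dt(Q,C) \geq \Omega(\abs{\set{P}^m}\cdot R_{i^\star})$. For this I use a pigeonhole over the $k$ centers of $C$ to identify a single $c^\star \in C$ that is the best-in-$C$ center for at least $n_{i^\star-1}/k$ sets of $\hat{\set{P}}^{i^\star-1}$; the optimality of $r^{i^\star}$ against the competitor $c^\star$ then gives $\sum_{\hat Q \in \hat{\set{P}}^{i^\star-1}}\Dt(\hat Q, C) \geq r^{i^\star}$, and combining with $\abs{\set{P}^m} R_{i^\star} \leq O(k r^{i^\star})$ closes the case.

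The chief obstacle is the last step of the close regime: the robust-median argument naturally lower-bounds $\sum \Dt(\hat Q^{i^\star-1},C)$ where $\hat Q = \notail(Q, \set{B}^{i^\star-1}) \subseteq Q$, but we need to lower-bound $\sum \Dt(Q,C)$; since passing from $\hat Q$ to the larger $Q$ can only \emph{shrink} $\Dt(\cdot,C)$, the natural inequality goes the wrong way. I would handle this by choosing $i^\star$ carefully to be the earliest iteration after which some $b^j$ is ``close'' to $C$, so that the projected-out tail $\{p_j^Q : j < i^\star\}$ consists of points lying near $b^j$'s with $\Dt(\{b^j\},C) \geq \Dt(\{b^{i^\star}\},C)$; the weak triangle inequality then shows those tail points cannot pull $\Dt(Q,C)$ more than an $O(1)$ factor below $\Dt(\hat Q^{i^\star-1},C)$, allowing the robust-median lower bound to transfer from $\hat Q$ to $Q$ with only a constant loss. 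Carrying out this accounting rigorously, and tracking the dependence on $k$, $m$, and $\rho$ that is absorbed into the $O(1)$ constant, is the technical heart of the lemma.
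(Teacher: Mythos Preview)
Your framework---a global far/close dichotomy on $\Dt(\set{B}^m,C)$ plus a robust-median lower bound at a carefully chosen level $i^\star$---is genuinely different from the paper's route, and the far regime works fine. The close regime, however, has a real gap that your proposed fix does not close.

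You correctly identify the obstacle: the robust median at level $i^\star$ lower-bounds $\sum_{Q\in\set{P}^{i^\star-1}}\Dt(\hat Q,C)$, but $\hat Q\subsetneq Q$ means $\Dt(Q,C)\le\Dt(\hat Q,C)$, so the inequality goes the wrong way. Your fix is to redefine $i^\star$ as the earliest level at which some $b^{i^\star}$ is close to $C$, so that for every $j<i^\star$ the removed point $p_j^Q$ (being within $R_j$ of the far center $b^j$) stays far from $C$. But this decouples $i^\star$ from the level achieving $R=\max_i R_i$. Concretely: in Case~$A$ of your split (where $\ge$ half the $Q$'s satisfy $\Dt(Q,C)=\Dt(\hat Q,C)$) the pigeonhole/robust-median argument yields $\sum_Q\Dt(Q,C)\ge r^{i^\star}\ge\Omega(n_{i^\star-1}R_{i^\star}/k)$, whereas you need $\Omega(\abs{\set{P}^m}\cdot R)$; nothing in your choice of $i^\star$ forces $R_{i^\star}$ to be comparable to $R$. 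The statement ``those tail points cannot pull $\Dt(Q,C)$ more than an $O(1)$ factor below $\Dt(\hat Q,C)$'' is also not what you establish: knowing $\Dt(p_j^Q,C)\gtrsim\Dt(\{b^{i^\star}\},C)$ gives a \emph{uniform} lower bound on the tail points, not one proportional to the (possibly much larger) $\Dt(\hat Q,C)$ for a given $Q$.

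The paper avoids this tension by never separating the tail from the projection. It works throughout with $\proj(Q,\set{B}^{i-1})=\notail(Q,\set{B}^{i-1})\cup\set{B}^{i-1}$ and proves a \emph{recursive} inequality relating the sensitivity ratio at level $i-1$ to that at level $i$, via a case split at \emph{each} level: if at least half the $Q$'s have $\Dt(\proj(Q,\set{B}^{i-1}),C)=\Dt(\set{B}^{i-1},C)$, the ratio is trivially $O(1/\abs{\set{P}^i})$; otherwise more than half have $\Dt(\proj(Q,\set{B}^{i-1}),C)=\Dt(\notail(Q,\set{B}^{i-1}),C)$, and \emph{this} is what makes the key inequality $\OPT_i\le\sum_Q\Dt(\proj(Q,\set{B}^{i-1}),C)$ go through---the denominator already dominates the tail-based optimum without any transfer step. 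Unrolling the recursion to $i=m$ (where every projection collapses to $\set{B}^m$) finishes. In effect the paper applies your far/close split at every level and chains the results, which is exactly what lets it control all the $R_i$ simultaneously rather than betting on a single $i^\star$.
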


The following theorem is our main technical contribution. It proves that Algorithm~\ref{alg:wrapper} indeed computes an $\varepsilon$-coreset.
\begin{restatable}{theorem}{epscoreset}
\label{theorem:coreset}
Let $\set{P}$ be an $(n,m)$-set, $k\geq 1$ be an integer, $(\M,\Dt)$ be as in Definition~\ref{def:distSets}, and $\varepsilon, \delta \in (0,1)$. Let $(\set{S},v)$ be the output of a call to $\coreset(\Pset,k, \eps, \delta)$.
Then
\begin{enumerate}[label=(\roman*)]
  \item $|\set{S}| \in O\left(\left(\frac{md\log{n}}{\varepsilon}\right)^2k^{m+4}\right)$.
  \item With probability at least $1-\delta$, $(\set{S},v)$ is an $\varepsilon$-coreset for $(\set{P},\set{X}_k,\Dt)$; see Section~\ref{sec:coresets}.
  \item $(S,v)$ can be computed in $O(n\log(n)(k)^m)$ time.
\end{enumerate}
\end{restatable}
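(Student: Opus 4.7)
The plan is to reduce the theorem to an application of Theorem~\ref{braverman} (sensitivity sampling), using Lemma~\ref{lem:sens} as the main engine. The three items then correspond to (a) certifying that the function $s$ produced by Algorithm~\ref{alg:wrapper} upper bounds the true sensitivity of every $P\in\set{P}$, (b) bounding the total sensitivity $t=\sum_P s(P)$, which controls both the sample size and, together with the number of onion layers, the running time, and (c) plugging everything into the black-box sample complexity of Theorem~\ref{braverman}. I would prove the three claims in the order (ii), (i), (iii).

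For (ii), I would first verify that each $s(P)$ is a valid upper bound on the sensitivity of $P$ with respect to the original set $\set{P}$. For a set $P\in\set{P}^m$ collected in some iteration of the \textbf{while} loop at Line~\ref{line:while}, Lemma~\ref{lem:sens} gives a sensitivity bound of order $1/|\set{P}^m|$, but relative to the residual set $\set{P}^0$ at that iteration, not relative to $\set{P}$; since the denominator $\sum_{Q\in\set{P}}\Dt(Q,C)$ can only be larger than the corresponding sum over $\set{P}^0\subseteq\set{P}$, a sensitivity bound with respect to the residual is automatically a bound with respect to $\set{P}$, so choosing the constant $b$ in Line~\ref{line:sens} at least as large as the constant hidden in Lemma~\ref{lem:sens} yields $s(P)\ge$ sensitivity$(P)$. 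For the at most $b$ leftover sets that survive into $s(P)=1$ at Line~\ref{line:endSens}, the bound is trivial since $\Dt(P,C)\leq \sum_Q\Dt(Q,C)$. Then I would invoke Theorem~\ref{braverman} to conclude that $(S,v)$ is an $\varepsilon$-coreset with probability $\ge 1-\delta$.

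For (i), I would bound the total sensitivity $t$. Each of the $m$ \textbf{for}-loop iterations inside Algorithm~\ref{alg:RobustMedSets} retains a $((1-\tau)/(4k))$-fraction of its input at Line~\ref{newPi}, so $|\set{P}^m|\ge \Omega(1/k^m)\cdot |\set{P}^0|$. Consequently the residual $\set{P}^0$ shrinks by a constant factor $\Omega(1/k^m)$ between successive \textbf{while}-iterations, and the loop terminates after $N=O(k^m\log n)$ iterations. Each iteration contributes $\sum_{P\in\set{P}^m} b/|\set{P}^m|=b$ to $t$, and the terminating residual contributes at most $b$, giving $t=O(k^m\log n)$. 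Feeding this $t$ together with $d'=md^2k^2$ into Theorem~\ref{braverman} yields
\[
|S|=O\!\left(\tfrac{t}{\varepsilon^2}\bigl(d'\log t+\log(1/\delta)\bigr)\right)
=O\!\left(\left(\tfrac{md\log n}{\varepsilon}\right)^{2}k^{m+4}\right),
\]
absorbing the $\log(1/\delta)$ term into the stated asymptotics (or carrying it explicitly).

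For (iii), I would note that a single call to $\RobustMedForSets$ executes $m$ \textbf{for}-iterations, each dominated by computing a $(1/(2k),\tau,2)$-median on a set of size at most $n$ via the suggested implementation (Algorithm~\ref{alg:robustMed}) in time $O(n)$ (with constants depending on $d,k$), plus $O(n)$ bookkeeping for $\closepoints$, $\notail$ and $\closest$; so each call costs $O(mn)$. With $O(k^m\log n)$ calls, and the final sampling at Line~\ref{line:randomSample} running in $O(n+|S|)$ time, the total running time is $O(n k^m\log n)$ as claimed. The main obstacle I foresee is not any of the combinatorial accounting above but rather the two ingredients that are imported as black boxes: Lemma~\ref{lem:sens} itself (which has to propagate a local ``dense ball'' guarantee through the non-metric weak triangle inequality of Lemma~\ref{lem:weakTriangle}), and the pseudo-dimension/VC bound $d'=md^2k^2$ for $(\set{P},\set{X}_k,\Dt)$, which needs its own derivation because the sets distance of Definition~\ref{def:distSets} is not a metric.
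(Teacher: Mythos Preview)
Your proposal is correct and follows essentially the same route as the paper's own proof: it, too, verifies that $s(P)$ upper bounds the true sensitivity via Lemma~\ref{lem:sens} applied to the residual $\set{P}^0$ together with the monotonicity $\set{P}^0\subseteq\set{P}$, bounds the number of onion layers by $O(k^m\log n)$ from the $\Omega(1/k^m)$ peel fraction, sums the per-layer contribution $b$ to get $t=O(k^m\log n)$, and plugs $t$ and $d'=O(md^2k^2)$ into Theorem~\ref{braverman}. The only cosmetic differences are the order of exposition (the paper proves (i) before (ii)) and that the paper's running-time analysis additionally exploits the geometric shrinkage of $\set{P}^0$ across \textbf{while}-iterations, whereas your cruder $O(mn)\cdot O(k^m\log n)$ bound already matches the claimed $O(nk^m\log n)$ since $m$ is constant.
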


\subsection{Polynomial Time Approximation Scheme.} \label{Polynomialptassec}

In the following theorem we present a reduction from an $\alpha$-approximation for the sets clustering problem in $\REAL^d$ with $m,k \geq 1$, to an $\alpha$-approximation for the simplest case where $m=k=1$, for any $\alpha \geq 1$. We give a suggested implementation in Algorithm~\ref{alg:polynomials}.

\begin{restatable}{theorem}{thmPTAS}
 \label{theorem:PTAS}
Let $\set{P}$ be an $(n,m)$-set in $\REAL^d$, $w:\set{P}\to[0,\infty)$ be a weights function, $k\geq 1$ be an integer, $\alpha \geq 1$ and $\delta \in [0,1)$. Let $\Dt$ be a loss function as in Definition~\ref{def:distSets} for $\M = \REAL^d$.
Let $\oneMean$ be an algorithm that solves the case where $k=m=1$, i.e., it takes as input a set $Q \subseteq \M$, a weights function $u:Q\to[0,\infty)$ and the failure probability $\delta$, and in time $T(n)$ outputs $\hat{c} \in \M$ that with probability at least $1-\delta$ satisfies
$\sum_{q \in Q} u(q)\cdot \Dt(q,\hat{c}) \leq \alpha \cdot \min_{c \in \M} \sum_{q \in Q} u(q)\cdot\Dt(q,c)$.
Then in $T(n) \cdot (nmk)^{O(dk)}$ time we can compute $\hat{C} \in \set{X}_k$ such that with probability at least $(1-k\cdot\delta)$ we have
$$\sum_{P\in \set{P}}w(P)\cdot\Dt(P,\hat{C}) \leq \alpha\cdot \min_{C \in \set{X}_k} \sum_{P\in \set{P}}w(P)\cdot\Dt(P,C).$$
\end{restatable}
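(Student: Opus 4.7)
The plan is to reduce the weighted sets-$k$-means problem to $(nmk)^{O(dk)}$ instances of the $k{=}m{=}1$ problem, each dispatched to the given $\oneMean$ oracle. The key observation is that any candidate center set $C=\{c_1,\ldots,c_k\}\in\set{X}_k$ is combinatorially summarized by a \emph{configuration} $\phi\colon[n]\to [m]\times[k]$ defined by $\phi(i)=(j_i,l_i)$ whenever $(P_i[j_i],c_{l_i})$ minimizes $\D(\|p-c\|)$ over $(p,c)\in P_i\times C$. For any $C'\in\set{X}_k$, since $\Dt(P_i,C')$ is a minimum over pairs, one has $\Dt(P_i,C')\le \D(\|P_i[j_i]-c'_{l_i}\|)$, and summing cluster-by-cluster yields the separable upper bound
\[
\sum_i w(P_i)\,\Dt(P_i,C')\ \le\ \sum_{l=1}^k\sum_{i:\,l_i=l} w(P_i)\,\D(\|P_i[j_i]-c'_l\|),
\]
whose minimization over $C'$ decouples into $k$ independent weighted $1$-mean problems on $Q_l(\phi):=\{(P_i[j_i],w(P_i)):l_i=l\}$.

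I would then establish the idealized guarantee against an unknown optimum $C^*$ with induced configuration $\phi^*$. Each $c^*_l$ is a feasible (not necessarily optimal) $1$-mean for $Q_l(\phi^*)$, so applying $\oneMean$ to $Q_l(\phi^*)$ returns some $\hat c_l$ whose weighted cost on $Q_l(\phi^*)$ is at most $\alpha$ times that of $c^*_l$, with failure probability $\le\delta$. A union bound over the $k$ calls together with the separable upper bound gives, with probability $\ge 1-k\delta$,
\[
\sum_i w(P_i)\,\Dt(P_i,\hat C(\phi^*))\ \le\ \alpha\sum_i w(P_i)\,\Dt(P_i,C^*)\ =\ \alpha\cdot\mathrm{OPT}.
\]
Thus it suffices to enumerate a list of configurations guaranteed to contain $\phi^*$ and, for each, reconstruct $\hat C(\phi)$ by $k$ calls to $\oneMean$.

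The main obstacle is keeping this list short, since there are $(mk)^n$ raw configurations. View $C$ as a point in the $dk$-dimensional parameter space $(\REAL^d)^k$; two tuples induce the same $\phi$ precisely when they share the same sign pattern across the $O((nmk)^2)$ bisector inequalities $\|P_i[j]-c_l\|^2\le \|P_i[j']-c_{l'}\|^2$ (monotonicity of $\D$ reduces $\D$-value comparisons to these Euclidean bisectors up to a consistent tie-breaking). Each inequality is a polynomial of degree at most $2$ in the $dk$ coordinates, so by the Milnor--Thom / Pollack--Roy arrangement bound the number of realizable sign patterns, and hence of distinct configurations, is at most $(nmk)^{O(dk)}$. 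Algorithm~\ref{alg:polynomials} then enumerates one representative $C$ per cell of this arrangement and reads off its induced $\phi$.

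Finally I would assemble everything: for each of the $(nmk)^{O(dk)}$ enumerated $\phi$, form the $k$ subsets $Q_l(\phi)$, call $\oneMean$ on each (total $k\cdot T(n)$ time) to obtain $\hat C(\phi)$, evaluate $\sum_i w(P_i)\Dt(P_i,\hat C(\phi))$, and return the cheapest such $\hat C$. Since $\phi^*$ is enumerated, the returned center set is no worse than $\hat C(\phi^*)$, giving an $\alpha$-approximation with probability $\ge 1-k\delta$ in time $T(n)\cdot(nmk)^{O(dk)}$. The trickiest step is the arrangement count; for the concrete losses in Table~\ref{table:distFuncs} it is immediate, while for fully general (possibly non-algebraic) $\D$ one would need a monotone tie-breaking or a small-perturbation argument to reduce to the polynomial case.
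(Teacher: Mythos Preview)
Your proposal is correct and follows essentially the same route as the paper: both encode a candidate $C\in(\REAL^d)^k$ by the sign pattern of the $O((nmk)^2)$ degree-$2$ bisector polynomials, invoke an arrangement bound (the paper uses Chazelle's cell decomposition, Theorem~\ref{theorem:signDecomp}, where you cite Milnor--Thom/Pollack--Roy) to get $(nmk)^{O(dk)}$ cells, and then solve $k$ independent weighted $1$-mean instances per cell via $\oneMean$, returning the cheapest $\hat C$. The only minor point to tighten is that you need the \emph{algorithmic} version of the arrangement result (as in Theorem~\ref{theorem:signDecomp}) to actually enumerate a representative per cell within the stated time, not just the counting bound.
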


The previous theorem implies a polynomial time (optimal) solution for the sets-$k$-means, since it is trivial to compute an optimal solution for the case of $m=k=1$.

\begin{restatable}[PTAS for sets-$k$-means]{corollary}{corPTASKmeans}
\label{cor:PTASKmeans}
Let $\set{P}$ be an $(n,m)$-set, $k\geq 1$ be an integer, and put $\eps \in \left(0, \frac{1}{2}\right]$ and $\delta\in (0,1)$. Let
$\OPT$ be the cost of the sets-$k$-means.
Then in $n\log(n)(k)^m  + \left( \frac{\log{n}}{\varepsilon}dmk^m\right)^{O(dk)}$ time we can compute $\hat{C} \in \set{X}_k$ such that with probability at least $1-k\cdot\delta$,
\[
\sum_{P\in \set{P}}\min_{p\in P, c\in \hat{C}}\norm{p-c}^2 \leq (1+4\varepsilon)\cdot \OPT.
\]
\end{restatable}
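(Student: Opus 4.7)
The plan is to combine Theorem~\ref{theorem:coreset} with Theorem~\ref{theorem:PTAS}, instantiated with the trivial exact weighted-mean algorithm as $\oneMean$. First I would run Algorithm~\ref{alg:wrapper} on the input $(n,m)$-set $\set{P}$ with parameters $\eps$ and $\delta$ to obtain an $\eps$-coreset $(\set{S}, v)$ of size $N := |\set{S}| \in O\!\left((md\log n/\eps)^2 k^{m+4}\right)$ in time $O(n\log(n) k^m)$. By Theorem~\ref{theorem:coreset}, with probability at least $1-\delta$ we have $\sum_{S \in \set{S}} v(S) \Dt(S, C) \in (1\pm\eps) \sum_{P \in \set{P}} \Dt(P, C)$ simultaneously for every $C \in \set{X}_k$.

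Next, on the weighted instance $(\set{S}, v)$ I would invoke Theorem~\ref{theorem:PTAS} with $\alpha = 1$ and $\delta_{\text{one}} = 0$. For $k = m = 1$ under squared Euclidean loss, the optimal center of a weighted point set $Q$ with weights $u$ is the weighted mean $\sum_{q \in Q} u(q) q / \sum_{q \in Q} u(q)$, which is deterministic, exact, and computable in $T(N) = O(Nd)$ time; this serves as the $\oneMean$ routine. Theorem~\ref{theorem:PTAS} then produces, in $T(N) \cdot (Nmk)^{O(dk)}$ time, a $\hat{C} \in \set{X}_k$ that is an exact minimizer of $\sum_{S \in \set{S}} v(S) \Dt(S, \cdot)$ on the coreset.

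To conclude the approximation bound, let $C^* \in \argmin_{C \in \set{X}_k} \sum_{P \in \set{P}} \Dt(P, C)$ so that $\sum_{P} \Dt(P, C^*) = \OPT$. Conditioning on the coreset event and using optimality of $\hat{C}$ on $(\set{S}, v)$,
\begin{equation*}
(1-\eps) \sum_{P \in \set{P}} \Dt(P, \hat{C}) \leq \sum_{S \in \set{S}} v(S) \Dt(S, \hat{C}) \leq \sum_{S \in \set{S}} v(S) \Dt(S, C^*) \leq (1+\eps)\, \OPT,
\end{equation*}
so $\sum_{P \in \set{P}} \Dt(P, \hat{C}) \leq \tfrac{1+\eps}{1-\eps}\, \OPT \leq (1+4\eps)\,\OPT$ for $\eps \leq 1/2$. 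The success probability is that of the coreset step, which is at least $1-\delta \geq 1-k\delta$ as claimed.

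For the running time, the coreset step costs $O(n\log(n) k^m)$, and the exact step costs $T(N)\cdot (Nmk)^{O(dk)} = O(Nd) \cdot (Nmk)^{O(dk)}$. Substituting the bound on $N$ and absorbing the polynomial prefactor $O(Nd)$ and the constants into the exponent yields $\left(\tfrac{\log n}{\eps} d m k^m\right)^{O(dk)}$, matching the claim after summing with $O(n\log(n)k^m)$. The only real subtlety is the elementary arithmetic of the approximation factor $(1+\eps)/(1-\eps)\leq 1+4\eps$ and the bookkeeping that packs $N$, $d$, $m$, and $k$ into the stated exponent; everything else is an immediate composition of the two previously stated theorems.
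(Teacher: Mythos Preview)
Your proposal is correct and follows essentially the same approach as the paper: compute the $\eps$-coreset via Theorem~\ref{theorem:coreset}, then solve exactly on the coreset via Theorem~\ref{theorem:PTAS} with the deterministic weighted mean as $\oneMean$, and chain the coreset inequalities with the optimality of $\hat{C}$ on $(\set{S},v)$. The only cosmetic difference is that the paper splits the factor as $\tfrac{1}{1-\eps}\leq 1+2\eps$ and then $(1+2\eps)(1+\eps)\leq 1+4\eps$, whereas you bound $\tfrac{1+\eps}{1-\eps}\leq 1+4\eps$ directly; both are valid for $\eps\leq 1/2$.
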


\section{Robust Median}
In this section, we provide an algorithm that computes a robust approximation; see Definition~\ref{def:robMed} and its preceding paragraph. An overview is provided in Section~\ref{sec:suppMed}.

\begin{algorithm}[!htb]
   \caption{$\RobustMedAlg(\Pset,k,\delta)$}
   \label{alg:robustMed}
\begin{algorithmic}[1]
   \STATE {\bfseries Input:}  An $(n,m)$-set $\set{P}$, a positive integer $k \geq 1$, and \\
   \quad\quad\quad the probability of failure $\delta \in (0,1)$

   \STATE {\bfseries Output:}  A point $q \in \M$ that satisfies Lemma~\ref{theorem:robustMed}

   \STATE $b := $ a universal constant that can be determined from the proof of Lemma~\ref{theorem:robustMed} \alglinelabel{line:hmm}

   \STATE Pick a random sample $\set{S}$ of $|S| = b\cdot k^2 \log\left(\frac{1}{\delta}\right)$ sets from $\set{P}$ \alglinelabel{line:sampleS}

   \STATE $q := $ a point that minimizes

   $\sum_{p\in \closest(\set{S},\br{\hat{q}},(1-\tau)\gamma)} \Dt(p,\hat{q})$ over $\hat{q} \in Q \in \set{S}$ \alglinelabel{line:compq}

   \STATE \bfseries{Return} $q$ \alglinelabel{line:retMed}
\end{algorithmic}
\end{algorithm}

\begin{restatable}[based on Lemma $9.6$ in~\cite{feldman2011unified}]{lemma}{robustMed}
\label{theorem:robustMed}
Let $\set{P}$ be an $(n,m)$-set, $k \geq 1$, $\delta\in (0,1)$ and $(\set{X},\Dt)$ be as in Definition~\ref{def:distSets}. Let $q \in \M$ be the output of a call to $\RobustMedAlg(\Pset,k, \delta)$; see Algorithm~\ref{alg:robustMed}. Then with probability at least $1-\delta$, $q$ is a $(1/(2k),1/6,2)$-median for $P$; see Definition~\ref{def:robMed}. Furthermore, $q$ can be computed in $O\left(tb^2k^4\log^2\left(\frac{1}{\delta}\right)\right)$ time, where $t$ is the time it takes to compute $\Dt(P,Q)$ for $P,Q \in \set{P}$.
\end{restatable}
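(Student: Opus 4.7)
The plan is to follow the sampling argument of Lemma 9.6 in \cite{feldman2011unified}, adapted to our set-of-sets setting where the triangle inequality is replaced by its weak form Lemma~\ref{lem:weakTriangle}. Fix $\gamma = 1/(2k)$ and $\tau = 1/6$. Let $b^\star \in \M$ attain $\min_{b'\in\M} \sum_{P \in \closest(\set{P}, \br{b'}, \gamma)} \Dt(P, b')$, write $\set{P}^\star = \closest(\set{P}, \br{b^\star}, \gamma)$, and let $\mathrm{OPT}$ denote the corresponding sum.

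First I would produce a good candidate inside $\set{S}$. By Markov's inequality applied to the values $\Dt(P,b^\star)$ over $P\in\set{P}^\star$, at least a $5/6$ fraction of $\set{P}^\star$ are ``good'' sets $P$ satisfying $\Dt(P,b^\star)\leq 6\mathrm{OPT}/|\set{P}^\star|$. These good sets form a $5\gamma/6 = 5/(12k)$ fraction of $\set{P}$, so each set drawn in Line~\ref{line:sampleS} is good with probability at least $5/(12k)$. Choosing the universal constant $b$ in Line~\ref{line:hmm} sufficiently large, a Chernoff bound yields that with probability at least $1-\delta$ the sample $\set{S}$ contains at least one good set $P$; its minimizer $p^\star := \argmin_{p\in P}\Dt(p,b^\star)$ then lies in the candidate pool scanned in Line~\ref{line:compq} and satisfies $\Dt(p^\star, b^\star) \leq 6\mathrm{OPT}/|\set{P}^\star|$.

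Next I would propagate this guarantee from $b^\star$ to $p^\star$ via Lemma~\ref{lem:weakTriangle}. For any $Q\in\set{P}$ and $q_Q\in\argmin_{q\in Q}\Dt(q,b^\star)$, the weak triangle inequality yields $\Dt(Q,p^\star)\leq \Dt(q_Q,p^\star)\leq \rho(\Dt(Q,b^\star)+\Dt(b^\star,p^\star))$. Summing this over $Q\in\closest(\set{P},\br{p^\star},(1-\tau)\gamma)$, using the defining minimality of $\closest$, and substituting the Markov bound on $\Dt(b^\star,p^\star)$ together with $|\set{P}^\star|\geq \gamma n$, turns the total into a constant multiple of $\mathrm{OPT}$. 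Since the output $q$ minimizes the empirical objective over the same candidate pool (which contains $p^\star$), a union bound over the $m|\set{S}|$ candidates transfers the approximation from $\set{S}$ to all of $\set{P}$ and certifies that $q$ is a $(\gamma,\tau,\alpha)$-median.

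The hard part is twofold. First, one must tune constants so that the bound $\rho(1+(1-\tau)/(1-\eta))\mathrm{OPT}$ emerging from the previous paragraph collapses to the advertised $\alpha=2$; this is achieved by the Feldman-Langberg argument via a finer split of $\closest(\set{P},\br{p^\star},(1-\tau)\gamma)$ into an inner part close to $b^\star$ and an outer tail whose contribution is absorbed by the slack $\tau=1/6$. Second, and more delicately, one must argue that minimizing the \emph{sample-based} objective of Line~\ref{line:compq} tracks the true population objective; this relies on the standard uniform-convergence machinery of \cite{feldman2011unified} on range spaces of constant combinatorial complexity. The running time follows routinely from the sample size and the cost of one distance evaluation: with $m|\set{S}|$ candidates and $|\set{S}|$ evaluations per candidate, the total is $O(m|\set{S}|^2 t)=O(tb^2k^4\log^2(1/\delta))$ after absorbing the factor $m$ into the per-pair cost $t$ of computing $\Dt(P,Q)$.
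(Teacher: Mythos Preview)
Your proposal assembles the two correct ingredients --- the weak triangle inequality of Lemma~\ref{lem:weakTriangle} and the sample-to-population transfer of \cite{feldman2011unified} --- but it threads them together differently from the paper, and that ordering leaves your final step underdetermined.

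The paper's argument is more modular. It first packages all the randomness into one black box, Lemma~\ref{lem:robustMedViaCoreset} (a direct instantiation of Theorem~9.6 in \cite{feldman2011unified}): with probability $1-\delta$, \emph{any} $((1-\tau)\gamma,\tau,2)$-median of the sample $\set{S}$ is already a $(\gamma,4\tau,2)$-median of $\set{P}$, with $\gamma=1/(2k)$ and $\tau=1/24$. After this, the remainder is purely deterministic and takes place entirely inside $\set{S}$: if $q_{\set{S}}^{*}$ is the optimal robust median of $\set{S}$ and $q_{\set{S}}'$ is its nearest point among $\bigcup_{Q\in\set{S}}Q$, then $\Dt(q_{\set{S}}^{*},q_{\set{S}}')\le \Dt(P,q_{\set{S}}^{*})$ for every $P\in\set{S}$, so Lemma~\ref{lem:weakTriangle} gives $\Dt(P,q_{\set{S}}')\le 2\rho\,\Dt(P,q_{\set{S}}^{*})$ and hence $q_{\set{S}}'$ is a constant-factor robust median of $\set{S}$. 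Since the algorithm's output $q$ minimizes the same sample objective over all such candidates, it is at least as good as $q_{\set{S}}'$ on $\set{S}$, and the transfer lemma closes the loop.

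Your route compares instead to the \emph{population} optimum $b^\star$. Your Markov+Chernoff step~1 and weak-triangle step~2 correctly certify that a particular candidate $p^\star$ in the pool is a good median for $\set{P}$. But the algorithm returns $q$, which minimizes the \emph{sample} objective and need not equal $p^\star$; nothing you have so far bounds the population cost of $q$. To close that gap you must argue that whoever wins on $\set{S}$ also wins on $\set{P}$ --- which is precisely Lemma~\ref{lem:robustMedViaCoreset}. Once you invoke it, your steps~1--2 become redundant, because the lemma already guarantees the winner on $\set{S}$ is good on $\set{P}$ without ever exhibiting $p^\star$. Your proposed ``union bound over the $m|\set{S}|$ candidates'' does not quite do the job on its own: the candidates are drawn from, and hence correlated with, the very sample whose trimmed sums you are estimating, which is why the VC-type uniform-convergence statement is what is actually required. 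The running-time analysis is fine.
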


\section{Experimental Results} \label{sec:applications} \label{sec:ER}
\newcommand\s{0.18}
\begin{figure*}[th!]
\centering
    \subfigure[dataset (i), $k=2$]{
		\centering
		\includegraphics[width = \s\textwidth]{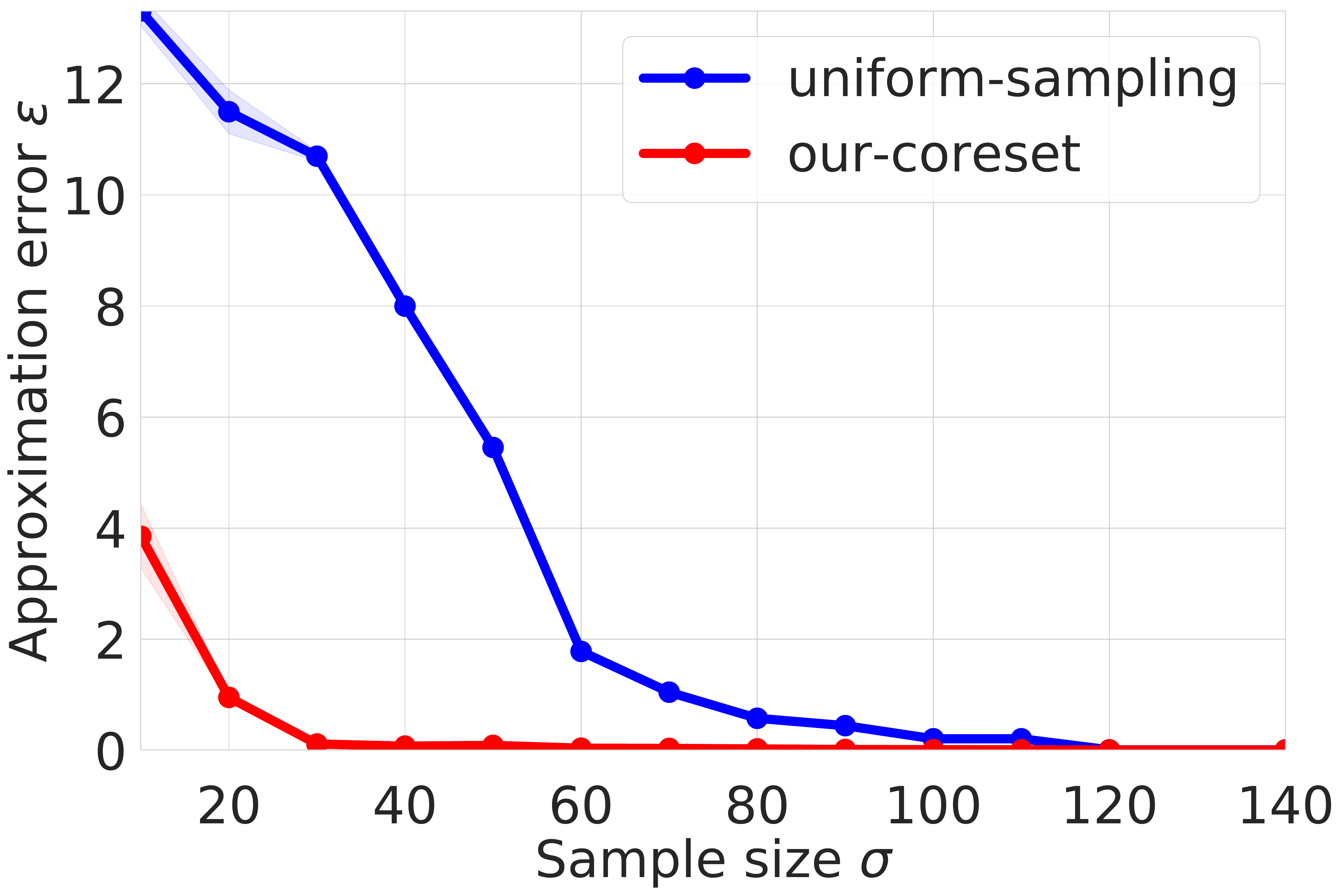}
        \label{fig:HW1}}
    \subfigure[dataset (i), $k=4$]{
		\centering
		\includegraphics[width = \s\textwidth]{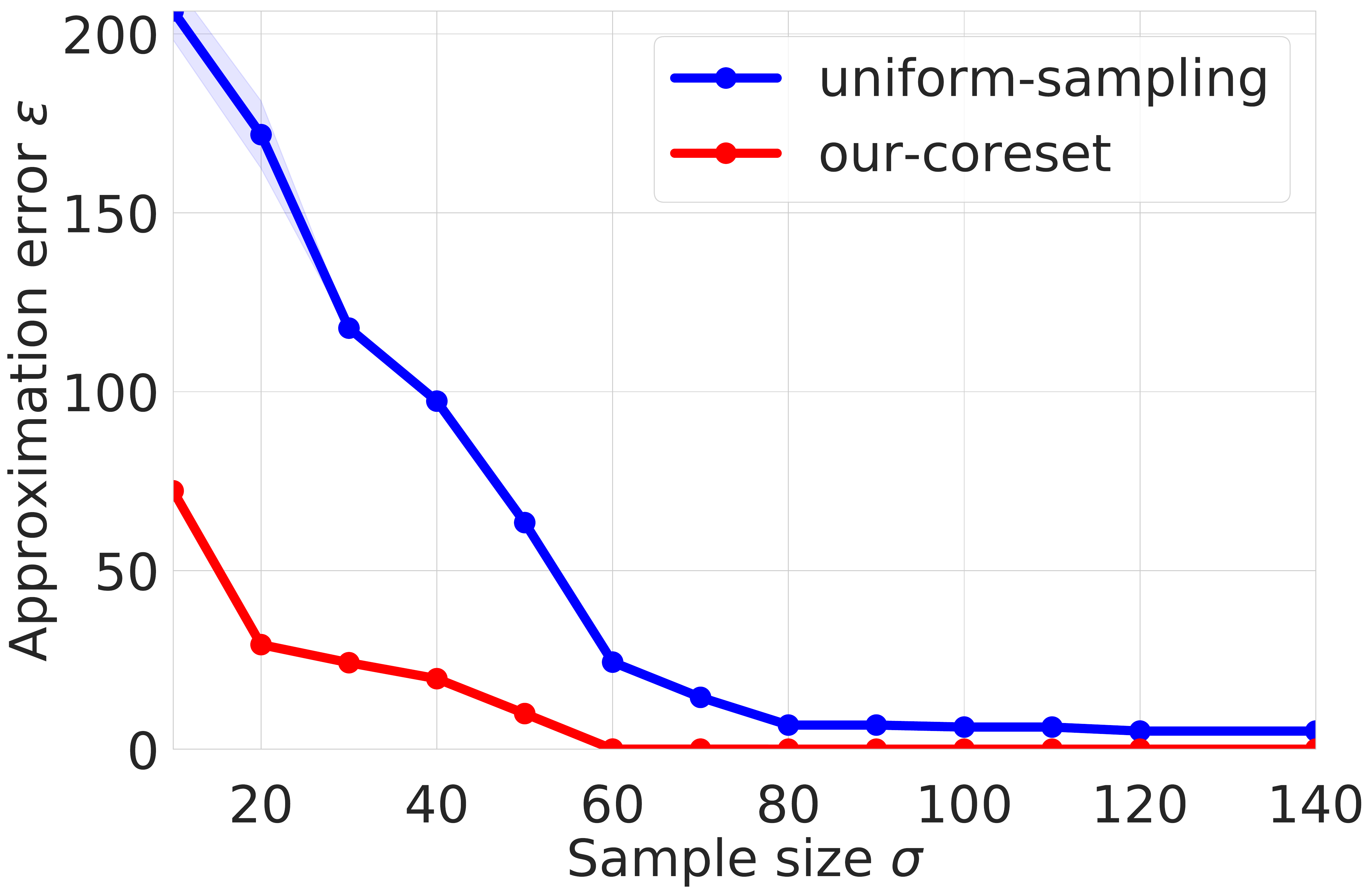}
        \label{fig:HW2}}
	\subfigure[dataset (i), $k=6$]{
		\centering
		\includegraphics[width = \s\textwidth]{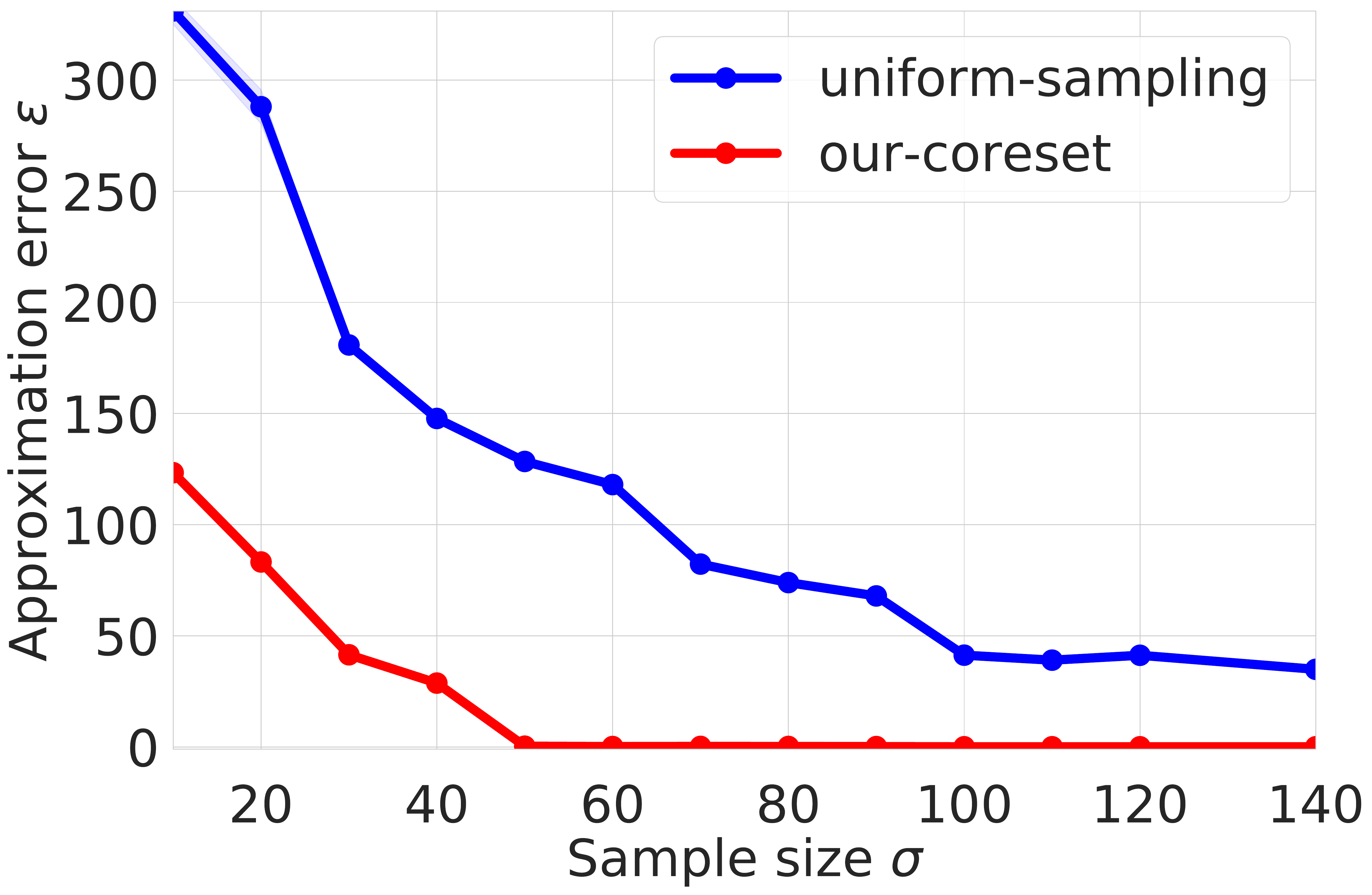}
        \label{fig:HW3}}
    \subfigure[dataset (i), $k=8$]{
		\centering
		\includegraphics[width = \s\textwidth]{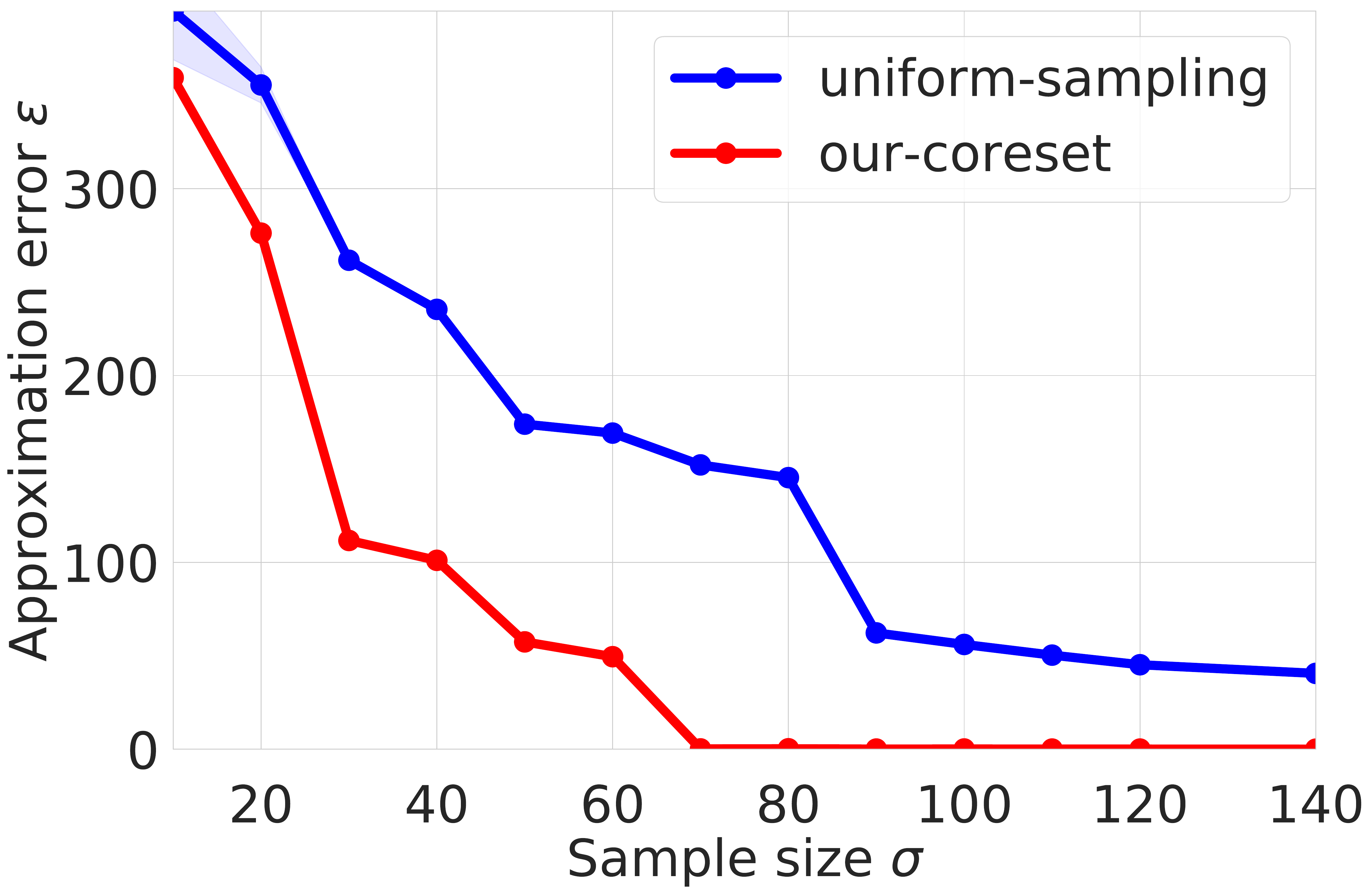}
        \label{fig:HW4}}
    \subfigure[dataset (ii), $k=4$]{
		\centering
		\includegraphics[width = \s\textwidth]{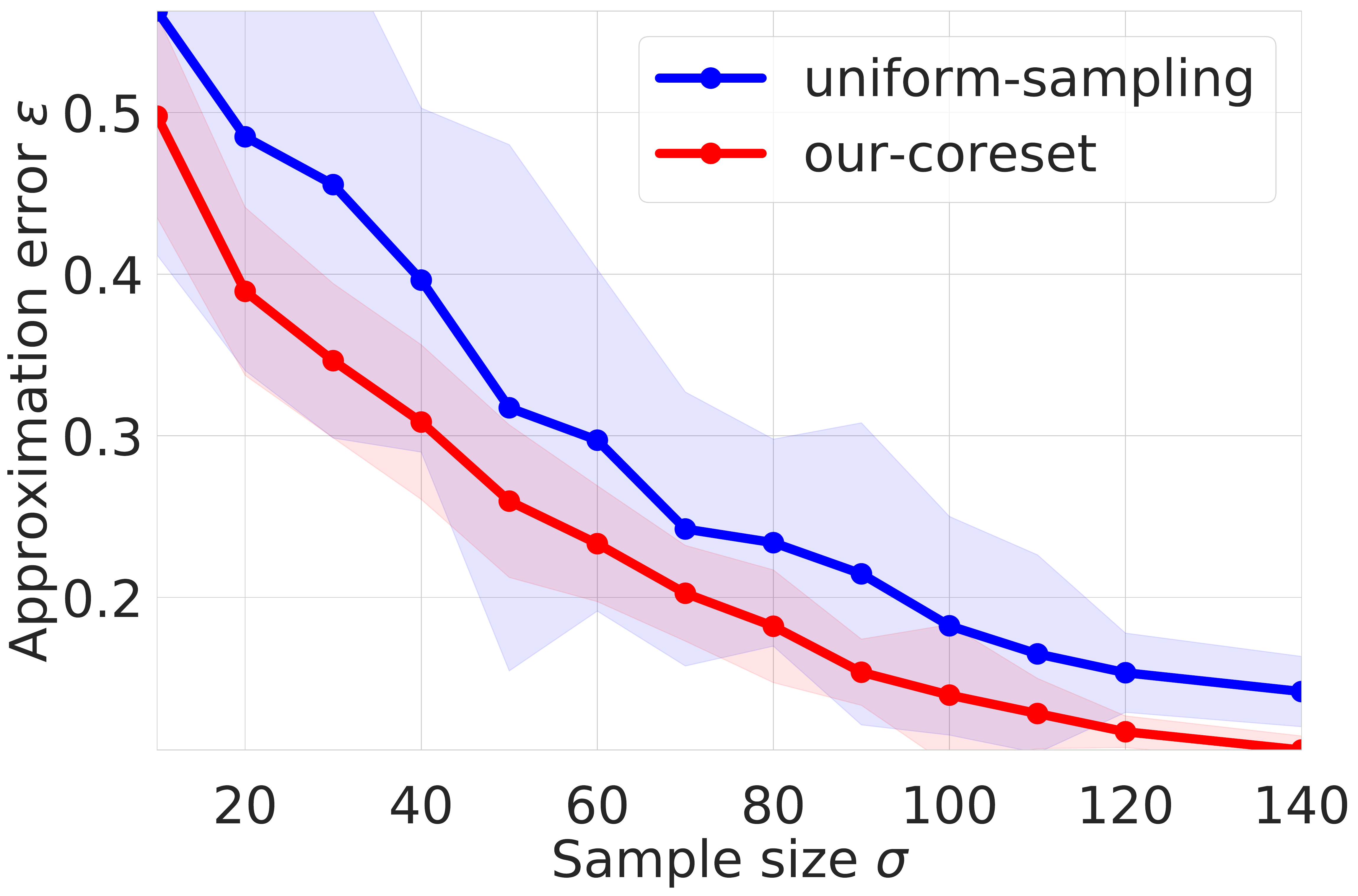}
        \label{fig:dicos1}}
    \subfigure[dataset (ii), $k=6$]{
		\centering
		\includegraphics[width = \s\textwidth]{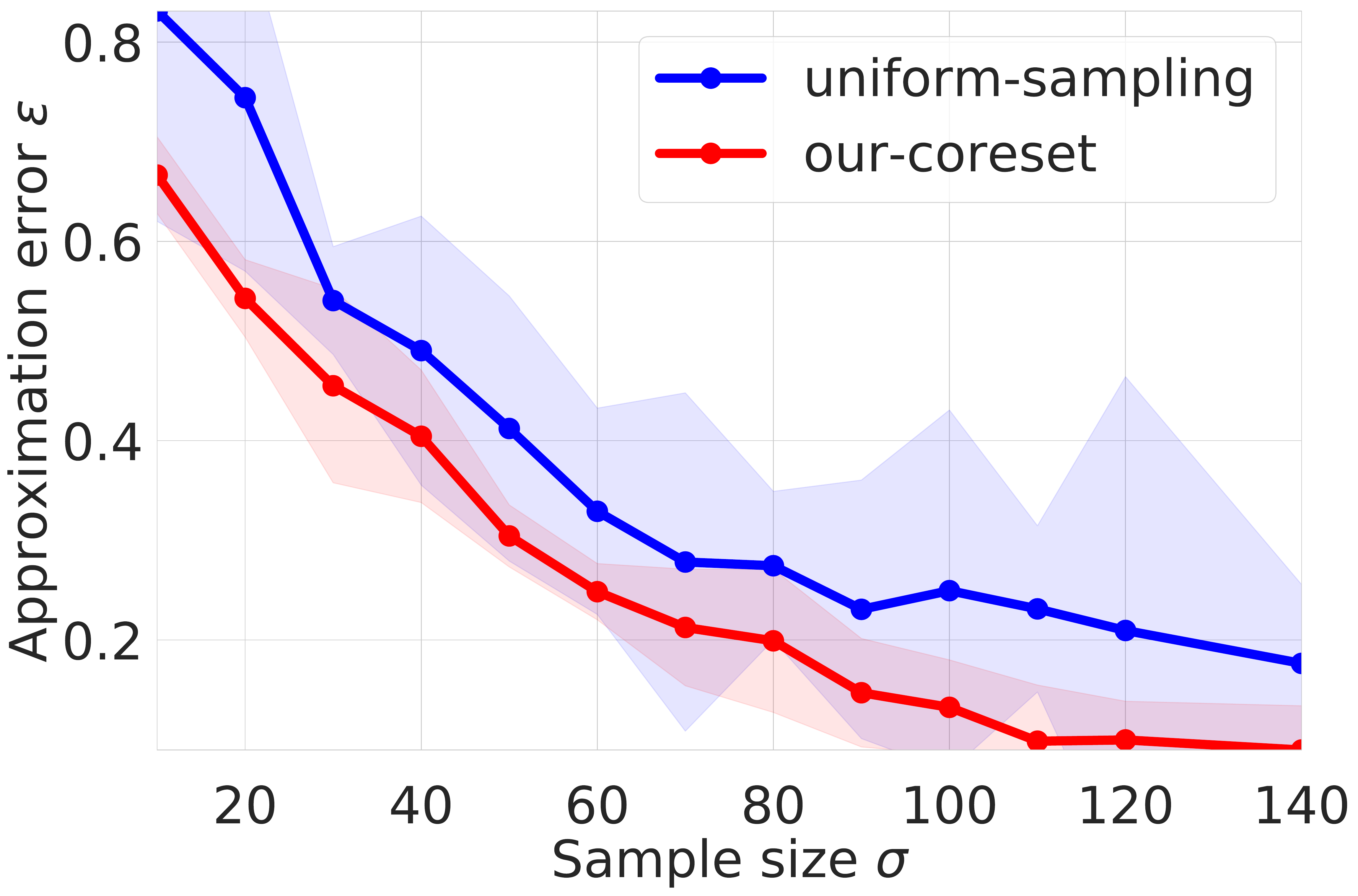}
        \label{fig:dicos2}}
	\subfigure[dataset (ii), $k=8$]{
		\centering
		\includegraphics[width = \s\textwidth]{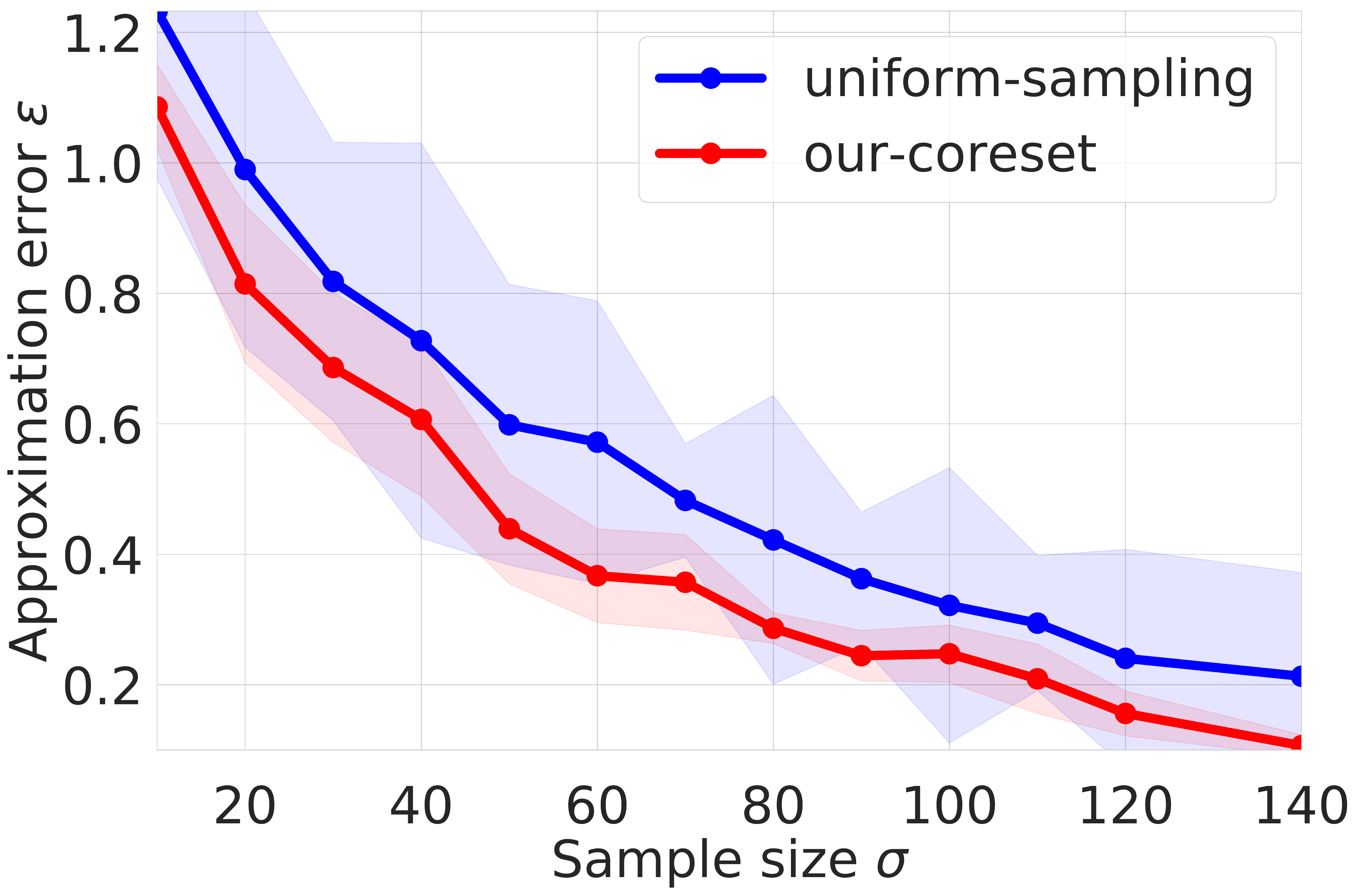}
        \label{fig:dicos3}}
    \subfigure[dataset (ii), $k=10$]{
		\centering
		\includegraphics[width = \s\textwidth]{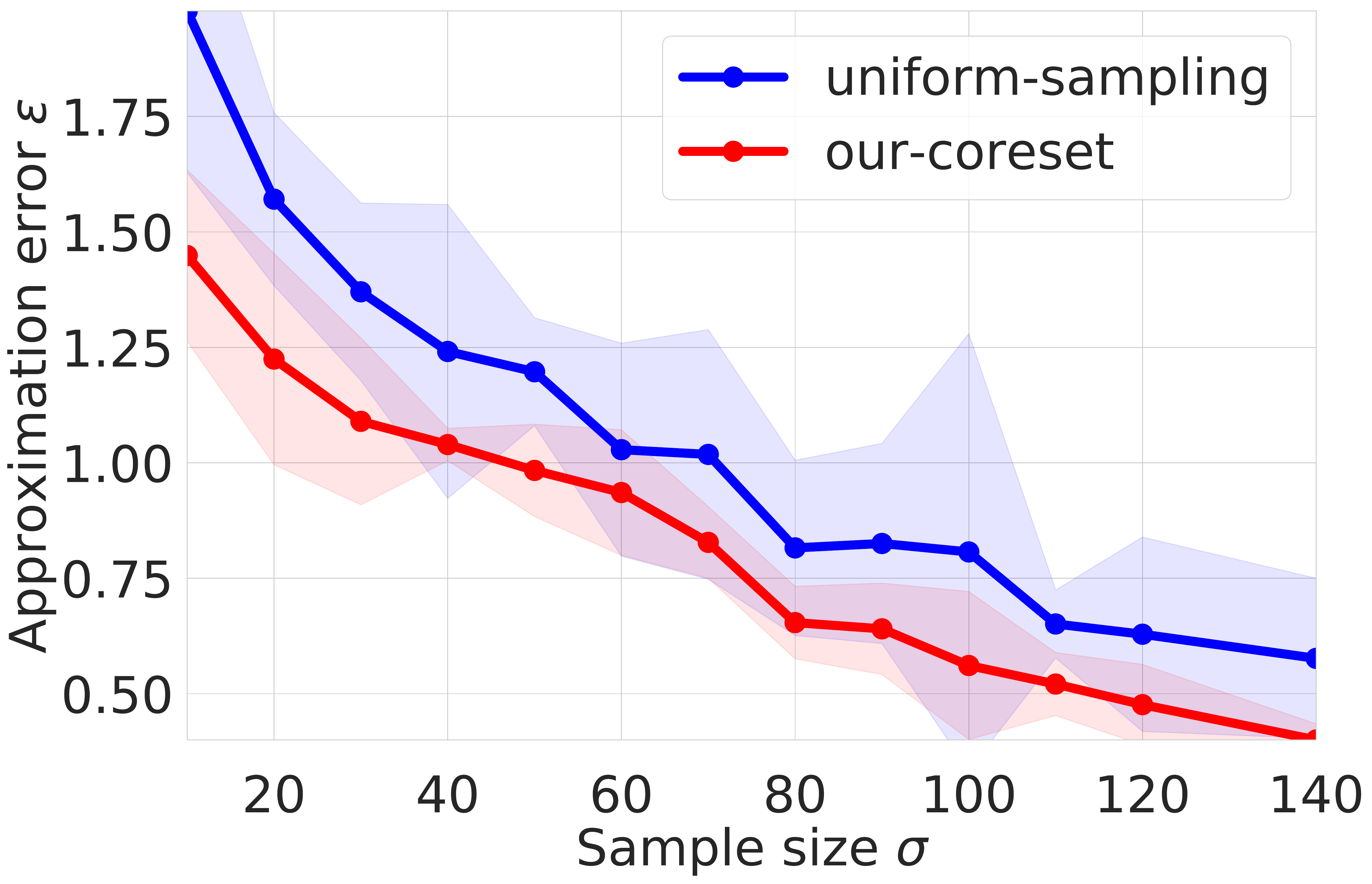}
        \label{fig:dicos4}}
    \subfigure[Optimal sets-mean]{
		\centering
		\includegraphics[width=0.22\textwidth]{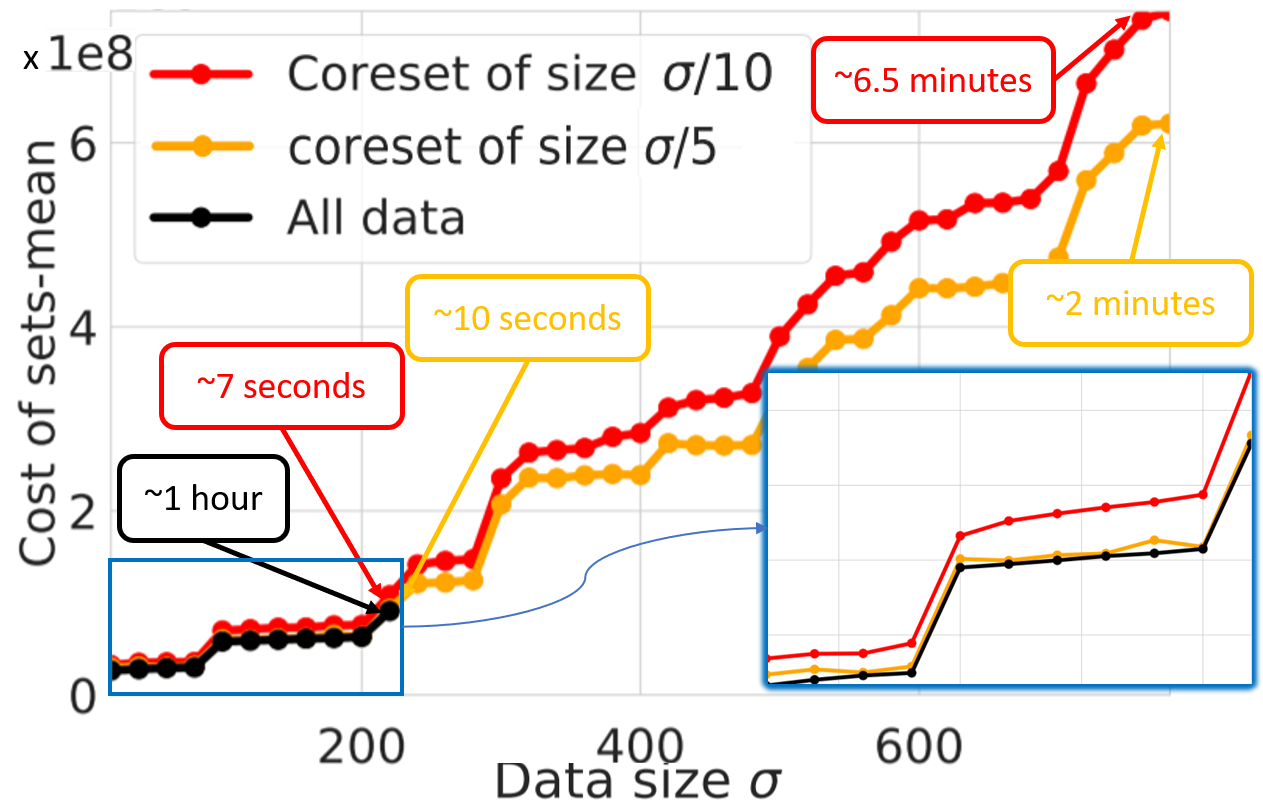}
		\label{fig:optimalTimeComp}}
    \subfigure[$\set{P} :=$ dataset (i)]{
		\centering
		\includegraphics[width = 0.225\textwidth]{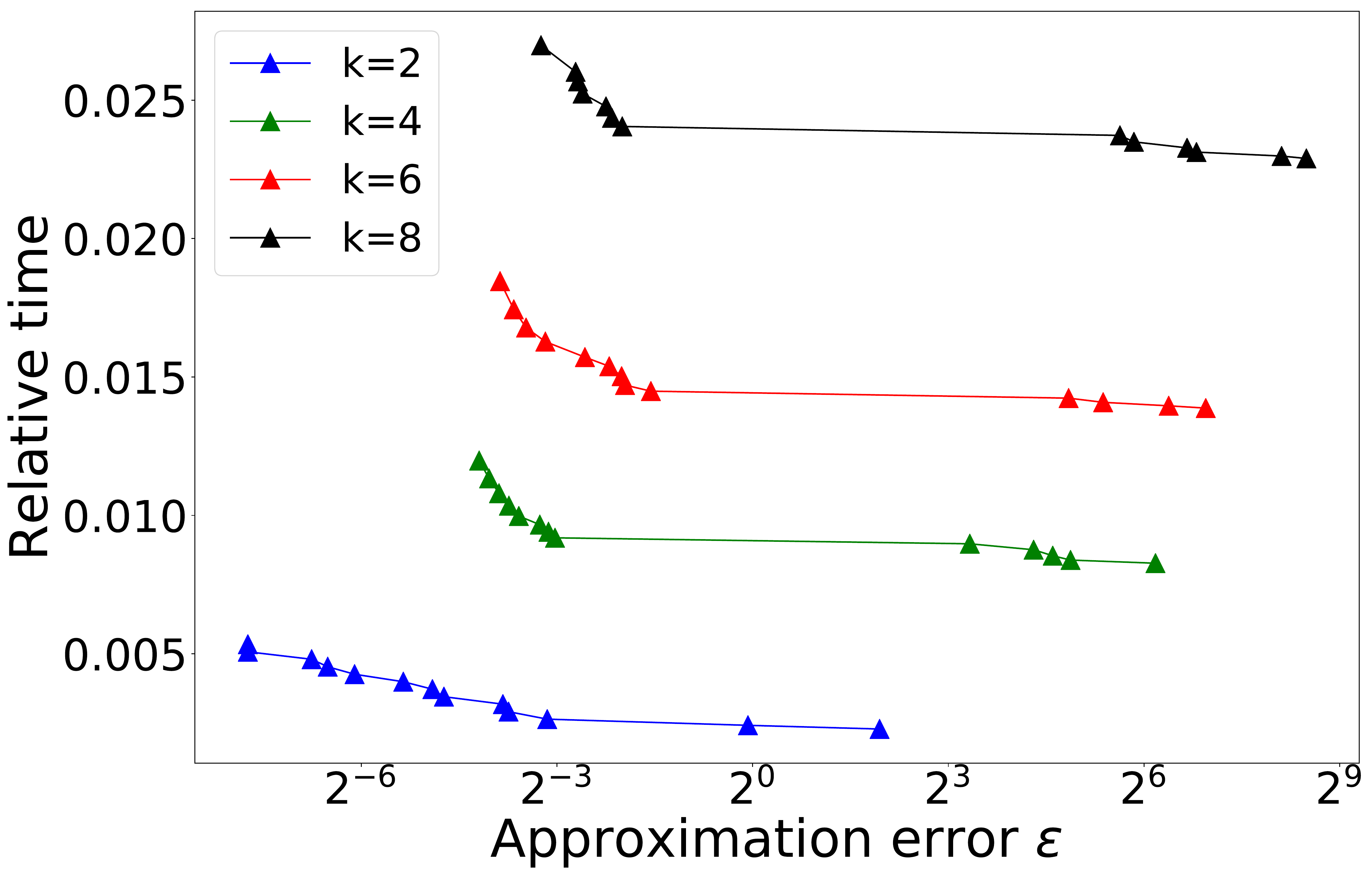}
        \label{fig:time1}}
    \subfigure[$\set{P} :=$ dataset (ii)]{
		\centering
		\includegraphics[width = 0.225\textwidth]{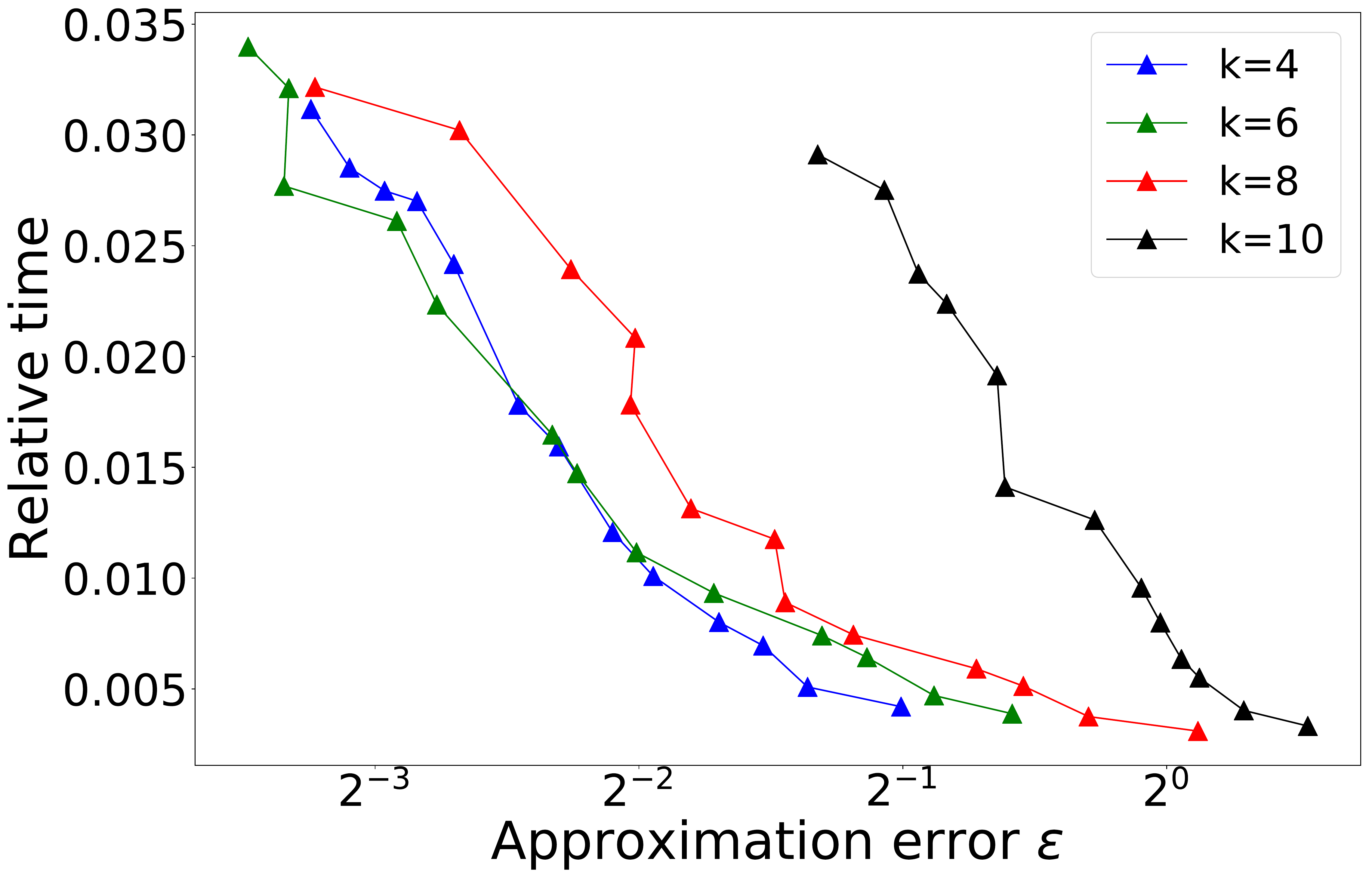}
        \label{fig:time2}}
    \subfigure[$r=10^6$]{
		\centering
		\includegraphics[width = 0.225\textwidth]{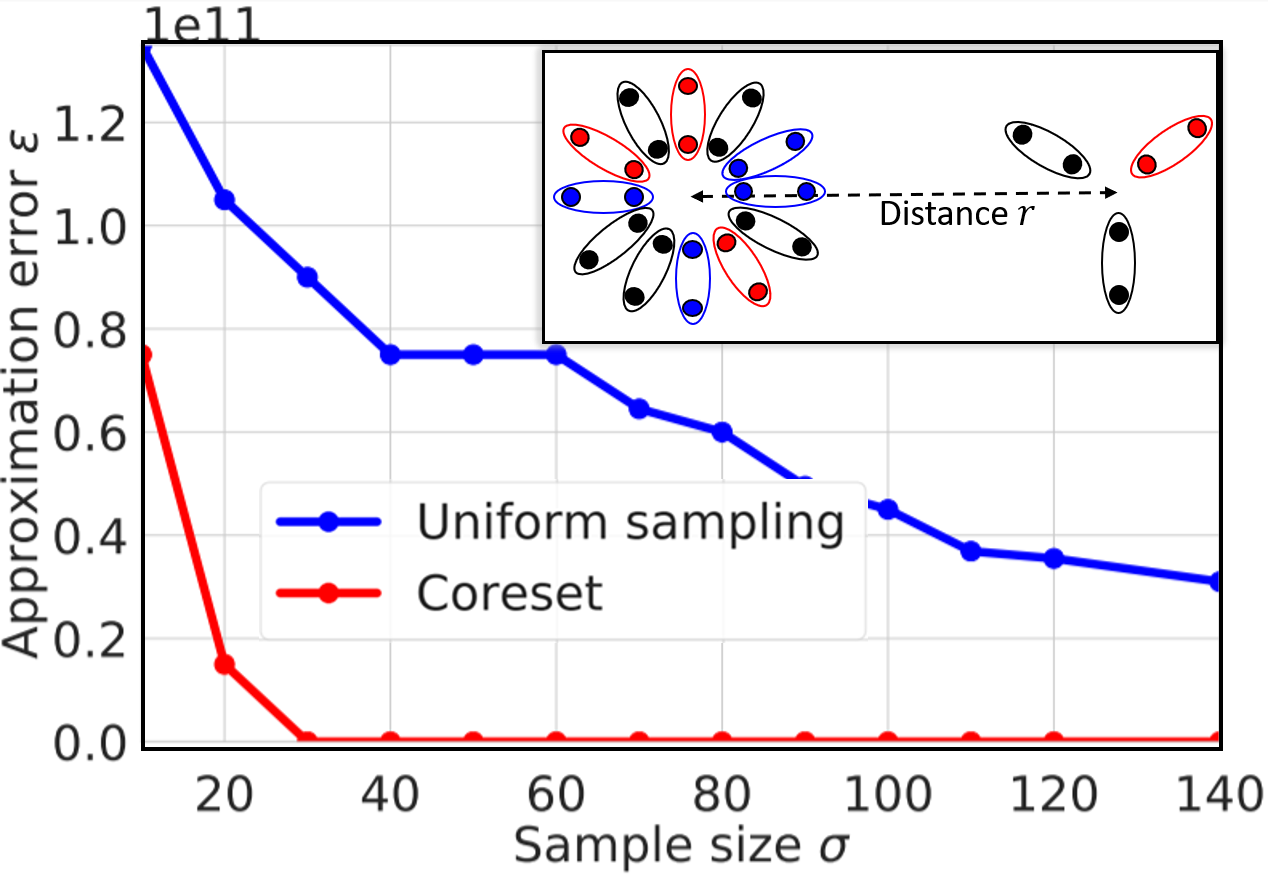}
        \label{fig:synth1}}
    \subfigure[]{
		\centering
		\includegraphics[width = 0.23\textwidth]{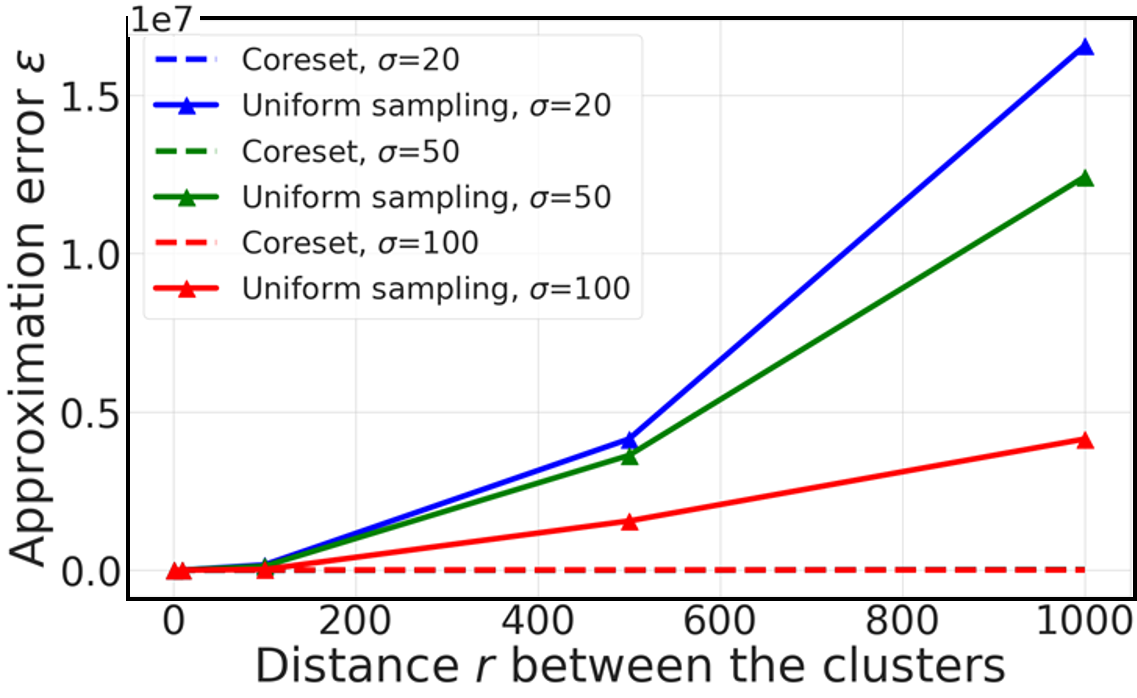}
        \label{fig:synth2}}
    \caption{\textbf{Experimental results.} Exact details are provided in Section~\ref{sec:ER}.
    }
    \label{fig:dicosRes}
\end{figure*}

We implemented our coreset construction, as well as different sets-$k$-mean solvers. In this section we evaluate their empirical performance. Open source code for future research can be downloaded from~\cite{opencode}.

\newcommand{\core}{\texttt{our-coreset}}
\newcommand{\exact}{\texttt{exact-mean}}
\newcommand{\approxalg}{\texttt{approx-mean}}
\newcommand{\km}{\texttt{$k$-means}}
\newcommand{\rand}{\texttt{uniform-sampling}}

\textbf{Theory-implementation gaps. }While Theorem~\ref{cor:PTASKmeans} suggests a polynomial time solution for sets-$k$-means in $\REAL^d$, it is impractical even for $k=1$, $d=2$ and $n=10$ sets. Moreover, its implementation seems extremely complicated and numerically unstable. Instead, we suggest a simple algorithm $\exact$ for computing the sets-mean; see Fig.~\ref{fig:setsVoronoi} in Section~\ref{IA}. Its main components are Voronoi diagram~\cite{aurenhammer1991voronoi} and hyperplanes arrangement that were implemented in Sage~\cite{sagemath}. For $k\geq 2$ we use Expectation-Maximization (EM) heuristic, which is a generalization of the well known Lloyd algorithm~\cite{lloyd1982least}, as commonly used in $k$-means and its variants~\cite{marom2019k, lucic2017training}.

\textbf{Implementations. }Four algorithms were implemented: \textbf{(i): }$\core(\Set{P},\sigma)$: the coreset construction from Algorithm~\ref{alg:wrapper} for a given $(n,m)$-set $\set{P}$, an arbitrary given loss function $\Dt$ that satisfies Definition~\ref{def:distSets} and a sample size of $\abs{\set{S}} = \sigma$ at Line~\ref{line:randomSample} of Algorithm~\ref{alg:wrapper}. \textbf{(ii): }$\rand(\set{P},\sigma)$: outputs a uniform random sample $\set{S} \subseteq\set{P}$ of size $\abs{\set{S}} = \sigma$.
\textbf{(iii): }$\exact(\set{P})$: returns the exact (optimal) sets-mean ($k=1$) of a given set $\set{P}$ of sets in $\REAL^d$ as in the previous paragraph.
\textbf{(iv): }$\km(\set{P},k)$: generalization of the Lloyd $k$-means heuristic~\cite{lloyd1982least} that aims to compute the sets-$k$-mean of $\set{P}$ via EM; see implementation details in Section~\ref{IA}.

\textbf{Software/Hardware. }The algorithms were implemented in Python 3.7.3 using Sage 9.0~\cite{sagemath} as explained above on a Lenovo Z70 laptop with an Intel i7-5500U CPU @ 2.40GHZ and 16GB RAM.

\textbf{Datasets. } \textbf{(i): }The LEHD Origin-Destination Employment Statistics (LODES)~\cite{lodes}. It contains information about people that live and work at the united states. We pick a sample of $n=10,000$ and their $m=2$ home+work addresses, called Residence+Workplace Census Block Code. Each address is converted to a pair $(x,y)$ of $d=2$ doubles. As in Fig.~\ref{fig:facilityLoc}, our goal was to compute the sets-$k$-mean (facilities) of these $n$ pairs of addresses.

\textbf{(ii): }The Reuters-21578 benchmark corpus~\cite{bird2009natural}. It contains $n=10,788$ records that corresponds to $n$ Reuters newspapers. Each newspaper is represented as a ``bag of bag of words" of its $m\in [3,15]$ paragraphs in high dimensional-space; see Fig.~\ref{fig:documentClustering}. Handling sets of different sizes is also supported; see details in Section~\ref{IA}.
We reduce the dimension of the union of these ($3n$ to $15n$) vectors to $d=15$ using LSA~\cite{landauer2013handbook}. The goal was to cluster those $n$ documents (sets of paragraphs) into $k$ topics; see Fig~\ref{fig:documentClustering}.

\textbf{(iii): }Synthetic dataset. We drew a circle of radius $1$, centered at the origin of $\REAL^2$ and then picked $n_1=9900$ points evenly (uniformly) distributed on this circle. For each of these $n_1$ points, we paired a point in the same direction but of distance $30$ from the origin. This resulted in $n_1$ pairs of points. We repeat this for another circle of radius $1$ that is centered at $(r,0)$, for multiple values of $r$, and constructed $n_2=100$ points similarly; see top of Fig.~\ref{fig:synth1}.

\textbf{Experiment (i) }We ran $\set{S}_1(\sigma):=\core(\set{P},\sigma)$ and $\set{S}_2(\sigma):=\rand(\set{P},\sigma)$ on each of the datasets for different values of sample size $\sigma$. Next, we computed the corresponding sets-$k$-means $C_1(\sigma), C_2(\sigma)$ and $C_3$ heuristically using Algorithm (iv). We denote the corresponding computation times in seconds by $t_1(\sigma)$, $t_2(\sigma)$ and $t_3$, respectively. The corresponding costs of $C_1(\sigma)$ and $C_2(\sigma)$ were evaluated by computing the \emph{approximation error}, for $i \in \br{1,2}$, as $\varepsilon_i(\sigma) = \frac{\abs{\sum_{P\in \set{P}}\Dt(P,C_3)-\sum_{P\in \set{P}}\Dt(P,C_i(\sigma))}}{\sum_{P\in \set{P}}\Dt(P,C_3)}$.

\textbf{Results (i). }The approximation errors on the pair of real-world datasets are shown in Fig.~\ref{fig:HW1}--~\ref{fig:dicos4}. Fig~\ref{fig:time1}--~\ref{fig:time2} show relative time $t_1(\sigma)/t_3$ ($y$-axis) as a function of $\varepsilon := \varepsilon_1(\sigma)$ ($x$-axis), for $\sigma = 20,30,\ldots,140$.
The approximation errors are shown for the synthetic dataset, either for different increasing $\sigma$ in Fig.~\ref{fig:synth1} or $r$ values in Fig.~\ref{fig:synth2}.

\textbf{Experiment (ii). } We uniformly sampled $800$ rows from the LEHD Dataset (i). Let $\set{P}(\sigma)$ denote the first $\sigma$ points in this sample, for $\sigma=20,40,60,\ldots, 800$. For each such set $\set{P}(\sigma)$ we computed two different size coresets $\set{S}_1(\sigma) := \core(\set{P}(\sigma),\sigma/10)$ and $\set{S}_2(\sigma) := \core(\set{P}(\sigma),\sigma/5)$. We then applied Algorithm (iii) that computes the optimal sets-mean $C_1(\sigma)$, $C_2(\sigma)$ and $C_3(\sigma)$ on $\set{S}_1(\sigma)$, $\set{S}_2(\sigma)$ and the full data $\set{P}(\sigma)$, respectively.

\textbf{Results (ii). }Fig~\ref{fig:optimalTimeComp} shows the cost of $C_i(\sigma)$ ($y$-axis) as a function of $\sigma$ ($x$-axis), for $i\in \br{1,2,3}$.

\textbf{Discussion. }As common in the coreset literature, we see that the approximation errors are significantly smaller than the pessimistic worst-case bounds.
In all the experiments the coreset yields smaller error compared to uniform sampling. When running exact algorithms on the coreset, the error is close to zero while the running time is reduced from hours to seconds as shown in Fig~\ref{fig:optimalTimeComp}.
The running time is faster by a factor of tens to hundreds using the coresets, in the price of an error between $1/64$ to $1/2$ as shown in Fig.~\ref{fig:time1}--\ref{fig:time2}.

\begin{figure}
  \centering
  \includegraphics[width=0.40\textwidth]{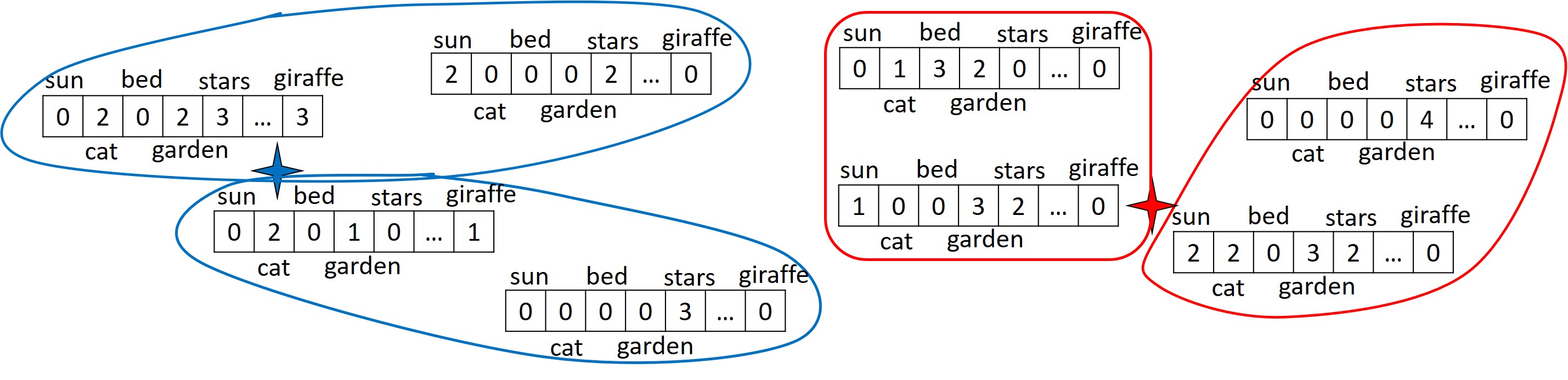}
  \caption{\textbf{bag of bag of words}: A set of $n=4$ documents, each has $m=2$ paragraphs. Each paragraph is represented by a bag of words and each document is represented by the set of $m$ vectors of its paragraphs. The sets-$2$-means are presented (blue / red stars).}
  \label{fig:documentClustering}
\end{figure}

% (BoW~\cite{zhang2010understanding}) which is a vector whose length is the number of words in English, and its $i$th entry is the number of occurrences of the $i$th English word

\section{Conclusions and Open Problems}
This paper suggests coresets and near-linear time solutions for clustering of input sets such as the sets-$k$-means. Natural open problems include relaxation to convex optimization, handling other distance functions between sets e.g. max distance, handling infinite sets / shapes (triangles, circles, etc.) and continuous distributions (e.g. $n$ Gaussians).
We hope that this paper is only the first step toward a long line of research that include solutions to the above problems.

\bibliography{refs}
\bibliographystyle{icml2020}

%\end{document}
%%%%%%%%%%%%%%%%%%%%%%%%%%%%%%%%%%%%%%%%%%%%%%%%%%%%%%%%%%%%%%%%%%%%%%%%%%%%%%%
%%%%%%%%%%%%%%%%%%%%%%%%%%%%%%%%%%%%%%%%%%%%%%%%%%%%%%%%%%%%%%%%%%%%%%%%%%%%%%%
% DELETE THIS PART. DO NOT PLACE CONTENT AFTER THE REFERENCES!
%%%%%%%%%%%%%%%%%%%%%%%%%%%%%%%%%%%%%%%%%%%%%%%%%%%%%%%%%%%%%%%%%%%%%%%%%%%%%%%
%%%%%%%%%%%%%%%%%%%%%%%%%%%%%%%%%%%%%%%%%%%%%%%%%%%%%%%%%%%%%%%%%%%%%%%%%%%%%%%

\newpage
\clearpage
\appendix

\section{The Combinatorial Complexity of the Sets Clustering}\label{sec:VC}
The following definition of a query space encapsulates all the ingredients required to formally define an optimization problem.
\begin{definition} [Query space; see Definition 4.2 in~\cite{braverman2016new}] \label{def:querySpace}
Let $\Pset$ be a set called input set. Let $Q$ be be a (possibly infinite) set called query set. Let $f:P\times Q \to \REAL$ be a cost function. The tuple $(\Pset,Q,f)$ is called a \emph{query space}. A \emph{sets clustering query space} is a query space $(\set{P},Q,f)$ where $\set{P}$ is an $(n,m)$-set, $Q$ is the set $\set{X}_k$, and $f = \Dt$; see Section~\ref{sec:problemStatement}.
\end{definition}

In what follows we define some measure of combinatorial complexity for a query space.
\begin{definition} [Definition 4.5 in~\cite{braverman2016new}] \label{def:VC}
For a query space $(\Pset,Q,f)$, a query $C\in Q$ and $r\in [0,\infty)$ we define
\[
\mathrm{range}(\Pset,C,r) = \br{P\in \Pset \mid f(P,c) \leq r}.
\]
Let $\rangess(\Pset,Q,f) = \br{\mathrm{range} \left(\Pset, C, r \right) \middle| C \in Q, r \geq 0}$,
the VC-dimension of $(P,\rangess(\Pset,Q,f))$ is the smallest integer $d'$ such that for every $\set{H} \subseteq \Pset$ we have
\[
\left| \br{\mathrm{range}(C,r) \mid C\in Q, r\in [0,\infty)} \right| \leq |\set{H}|^{d'}.
\]
The dimension of the query space $(\Pset,Q,f)$ is the VC-dimension of $(P,\rangess(\Pset,Q,f))$.
\end{definition}

\begin{lemma}[Variant of Theorem 8.4,~\cite{anthony2009neural}]
\label{lem:VCBoundOrig}
Suppose $h$ is a function from $\REAL^d \times \REAL^n$ to $\br{0,1}$ and let
\[
H = \br{h(a,x) \mid  a \in \REAL^d, x \in \REAL^n}
\]
be the class determined by $h$. Suppose that $h$ can be computed by an algorithm that takes as an input a pair $(a,x) \in \REAL^d \times \REAL^n$ and returns $h(a,x)$ after no more than $t$ operations of the following types:
\begin{itemize}
    \item the arithmetic operations $+,-,\times,$ and $/$ on real numbers,
    \item jumps conditioned on $>, \geq, <, \leq, =$, and $\neq$ comparisons of real numbers, and
    \item outputs $0$ or $1$.
\end{itemize}

Then the $VC$-dimension of $H$ is $O\left(d t \right)$.
\end{lemma}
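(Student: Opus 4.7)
The plan is to follow the standard Goldberg--Jerrum polynomial-sign-pattern argument. Fix $m$ candidate inputs $x_1,\ldots,x_m \in \REAL^n$ and consider the map $a \mapsto (h(a,x_1),\ldots,h(a,x_m))$ from $\REAL^d$ to $\br{0,1}^m$; the goal is to upper-bound the size of its image, since if $H$ shatters $\br{x_1,\ldots,x_m}$ then the image must have size $2^m$. For each $i$, running the algorithm on $(a,x_i)$ while treating $a$ as a vector of $d$ formal real variables and $x_i$ as fixed constants, the execution is driven by at most $t$ sign tests of algebraic expressions in $a$. Across all $m$ inputs there are thus at most $mt$ such branching expressions, and any two parameters $a,a'$ that induce the same sign vector on all $mt$ expressions trace identical execution paths on every $x_i$ and hence produce the same output tuple in $\br{0,1}^m$.

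The main technical obstacle is that the operations $+,-,\times,/$ allow each branching expression to be a rational function whose numerator and denominator degrees can grow up to $2^t$, which is far too large to feed directly into a polynomial-arrangement bound. I would handle this by the Goldberg--Jerrum linearization: for every intermediate value produced by the algorithm (on any input) introduce a fresh real variable tied to its defining step by a polynomial identity of degree at most two, for example $z = z_1 + z_2$, $z = z_1 z_2$, or $z \cdot z_2 = z_1$. After this rewriting, the $O(mt)$ branching conditions and the $O(mt)$ defining equations all become polynomials of degree at most $2$ in an augmented real space of dimension $D = d + O(mt)$, while the variety cut out by the defining equations still has algebraic dimension $d$ because the auxiliary variables are determined by $a$. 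Equality comparisons $=, \neq$ are accommodated by letting signs take three values in $\br{-,0,+}$ instead of two, costing only a constant factor in the subsequent count, and the rare guard against zero denominators in the division rewrite is absorbed into the same arrangement.

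Finally, I would invoke Warren's theorem (or the Milnor--Thom bound) for varieties, which bounds the number of realizable sign patterns of $O(mt)$ polynomials of degree at most $2$ on a dimension-$d$ real variety by $(c\cdot mt)^{d}$ for an absolute constant $c$. The number of dichotomies $H$ realizes on $\br{x_1,\ldots,x_m}$ is therefore at most $(c\cdot mt)^d$, and shatterability forces $2^m \le (c\cdot mt)^d$. Taking logarithms gives $m \le d\log_2(cmt)$, which by the standard fixed-point manipulation for inequalities of the form $m \le A\log(Bm)$ yields $m = O(d\log(dt))$, and in particular $m = O(dt)$, proving the claimed VC-dimension bound.
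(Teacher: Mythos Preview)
The paper does not prove this lemma at all: it is quoted verbatim as a variant of Theorem~8.4 in Anthony and Bartlett's book (the Goldberg--Jerrum theorem) and used as a black box to derive Lemma~\ref{lem:VCBound}. So there is nothing in the paper to compare your argument against; what you have written is, in outline, the proof that lives in the cited reference rather than in this paper.

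That said, one step of your sketch deserves a caution. The standard Goldberg--Jerrum argument (and the one in Anthony--Bartlett) does \emph{not} linearize by introducing $O(mt)$ auxiliary variables; it simply observes that every intermediate value is a rational function in $a$ whose numerator and denominator have degree at most $2^{t}$, clears denominators, and applies Warren's inequality directly in $\REAL^{d}$ to $O(mt)$ polynomials of degree $O(2^{t})$. This yields at most $\bigl(O(mt\,2^{t}/d)\bigr)^{d}$ sign patterns, so $2^{m}\le \bigl(O(mt\,2^{t}/d)\bigr)^{d}$ gives $m\le d\bigl(t+O(\log(mt))\bigr)=O(dt)$ exactly as stated. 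Your linearization route instead lands in an ambient space of dimension $d+O(mt)$ and then appeals to a ``Warren/Milnor--Thom bound for varieties'' of the form $(c\,mt)^{d}$. The standard such bounds (e.g.\ Basu--Pollack--Roy) carry an extra factor of the form $O(D)^{k}$ with $k$ the \emph{ambient} dimension, here $k=d+O(mt)$, which produces a $2^{O(mt)}$ term and spoils the inequality. Equivalently, since your variety is the graph of a map $\REAL^{d}\to\REAL^{O(mt)}$, restricting your degree-$2$ polynomials to it just recovers the degree-$2^{t}$ rational functions you were trying to avoid. So the linearization buys nothing here; if you want a self-contained proof, follow the direct high-degree route above, which is both simpler and what the cited theorem actually does.
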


We now bound the dimension of a query space $(\set{P},\set{X}_k,\Dt)$ as in Definition~\ref{def:VC}.
\begin{lemma}
\label{lem:VCBound}
Let $(\set{P},\set{X}_k,\Dt)$ be a sets clustering query space; see Definition~\ref{def:querySpace}. Then the dimension $d'$ of $(\set{P},\set{X}_k,\Dt)$ is bounded by $\in O(md^2k^2)$.
\end{lemma}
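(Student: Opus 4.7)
The plan is a direct reduction to Lemma~\ref{lem:VCBoundOrig}. First I would identify the two spaces in the statement of Lemma~\ref{lem:VCBoundOrig}: take the ``input'' $x\in\REAL^{md}$ to be the concatenation of the $m$ points (under an arbitrary fixed ordering) of an input $m$-set $P\in\set{P}$, and take the ``parameter'' $a\in\REAL^{kd+1}$ to be the concatenation of the $k$ points of a query $C\in\set{X}_k$ together with the threshold $r\geq 0$. Under this encoding, the range $\mathrm{range}(\set{P},C,r)$ from Definition~\ref{def:VC} is exactly the set of inputs $x$ for which the Boolean function
\[
h(a,x) \;:=\; \mathbb{1}\bigl[\,\Dt(P,C)\leq r\,\bigr]
\]
evaluates to $1$.

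Next I would exhibit a small straight-line algorithm that computes $h(a,x)$ using only the arithmetic operations, comparisons, and conditional jumps allowed by Lemma~\ref{lem:VCBoundOrig}. The algorithm simply enumerates all $mk$ pairs $(p,c)\in P\times C$: for each pair it computes $\Dt(p,c)^2 = \sum_{j=1}^d (p_j-c_j)^2$ in $O(d)$ arithmetic operations, applies $\D$ (which for every row of Table~\ref{table:distFuncs} uses only $O(1)$ additional arithmetic operations and comparisons), keeps a running minimum with one comparison, and at the end performs one final comparison with $r$ to output $0$ or $1$. The total number of operations used is thus $t=O(mkd)$.

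Then I would invoke Lemma~\ref{lem:VCBoundOrig} with parameter dimension $|a|=kd+1$ and operation count $t=O(mkd)$, obtaining that the class $H=\br{h(a,\cdot)\mid a\in\REAL^{kd+1}}$ has VC-dimension $O((kd)\cdot mkd)=O(md^2k^2)$. Finally, applying Sauer's lemma to translate the VC-dimension bound into a shatter-function bound yields the growth estimate in Definition~\ref{def:VC} with $d'\in O(md^2k^2)$, which is exactly the claim.

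The main obstacle I anticipate is the bookkeeping for $\D$: the bound uses that $\D$ can be evaluated with $O(1)$ arithmetic operations and comparisons of real numbers, which is true for each concrete loss in Table~\ref{table:distFuncs} (e.g.\ $x\mapsto x^2$ is one multiplication, the $M$-estimator is a single conditional) but must be stated as a standing hypothesis for the general $r$-log-log Lipschitz $\D$ in Definition~\ref{def:distSets}. Everything else is mechanical reduction to Lemma~\ref{lem:VCBoundOrig} and an invocation of Sauer's lemma, so no deeper combinatorial argument is needed.
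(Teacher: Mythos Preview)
Your proposal is correct and follows essentially the same route as the paper: encode $(C,r)$ as the parameter in $\REAL^{kd+1}$, observe that evaluating the indicator $\mathbb{1}[\Dt(P,C)\lessgtr r]$ takes $t=O(mkd)$ allowed operations, and apply Lemma~\ref{lem:VCBoundOrig} to obtain VC-dimension $O((kd)\cdot mkd)=O(md^2k^2)$. Your write-up is in fact slightly more careful than the paper's---you make the Sauer step to the growth-function formulation of Definition~\ref{def:VC} explicit and you flag the implicit hypothesis that $\D$ is computable in $O(1)$ operations---but the underlying argument is identical.
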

\begin{proof}
For $P \in \set{P}$, $C \in \set{X}_k$, and $r \in \REAL$, let $h_P(C,r) = 1$ if  $\Dt(P,C) \geq r$ and $0$ otherwise. Then we observe that the $VC$-dimension of the class of functions $H = \br{h_{P}:\set{X}_k\times \REAL \to [0,\infty) \mid P \in \set{P}}$ in Lemma~\ref{lem:VCBoundOrig} is equivalent to the dimension $d'$ of the given query space. Therefore, we now show that the $VC$-dimension of $H$ is bounded by $O(md^2k^2)$.

Note that it takes $t=O(mdk)$ arithmetic operations to evaluate $h_P(C,r)$. Furthermore, any element in $\set{X}_k\times \REAL$ can be represented as a vector in $(dk+1)$-dimensional space. Hence by Lemma~\ref{lem:VCBoundOrig}, the $VC$-dimension of $H$ is $O(dk\cdot mdk) = O(md^2k^2)$.
\end{proof}

\section{Main theorems with full proof}

\subsection{Proof of Lemma~\ref{lem:sens}}

\begin{lemma}
\label{helperLemma}
Let $k \geq 1$ be an integer, $A,B \subseteq \M$ and $C \in \set{X}_k$. If $\Dt(A \cup B, C) \neq \Dt(B,C)$ then $\Dt(A\cup B,C) = \Dt(A,C)$.
\end{lemma}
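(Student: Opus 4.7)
The plan is to unwind the definition of $\Dt$ on set-pairs and observe that it distributes over unions as a minimum. Recall from Definition~\ref{def:distSets} that for any set $P \subseteq \M$ and any $C \in \set{X}_k$,
\[
\Dt(P,C) = \min_{p \in P,\; c \in C} \D(\dist(p,c)).
\]
Applied to $P = A \cup B$, the minimization is taken over $(A \cup B) \times C = (A \times C) \cup (B \times C)$. Since a minimum over a union of two (nonempty) sets equals the smaller of the two minima over each piece, I first record the identity
\[
\Dt(A \cup B,\, C) \;=\; \min\bigl\{\Dt(A,C),\, \Dt(B,C)\bigr\}.
\]
(If either $A$ or $B$ is empty I would adopt the convention $\Dt(\emptyset, C) = +\infty$, under which the identity still holds and the hypothesis forces the nonempty case; this corner case should be mentioned but is trivial.)

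Now the lemma follows in one line: the right-hand side above is either $\Dt(A,C)$ or $\Dt(B,C)$, so the hypothesis $\Dt(A \cup B, C) \neq \Dt(B, C)$ rules out the second possibility and leaves $\Dt(A \cup B, C) = \Dt(A, C)$.

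There is no real obstacle; the only thing to be careful about is not to try to invoke any metric or triangle-inequality property of $\Dt$ (which would be wrong, since $\Dt$ on set-pairs is not a metric, cf.\ Lemma~\ref{lem:weakTriangle} and Fig.~\ref{fig:nonMetric}). The statement is purely a combinatorial fact about minima of $\D(\dist(\cdot,\cdot))$ over a union, and does not depend on $k$, on $\D$ being Lipschitz, or on $(\M,\dist)$ being Euclidean.
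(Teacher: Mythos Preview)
Your proof is correct and is essentially identical to the paper's own argument: both observe that $\Dt(A\cup B,C)=\min\{\Dt(A,C),\Dt(B,C)\}$ directly from the definition, and then conclude from the hypothesis. The additional remarks on the empty-set convention and on not needing any metric property are fine but unnecessary for the lemma as used.
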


\begin{proof}
By definition, $\Dt(A \cup B, C) = \min \br{\Dt(A, C),\Dt(B, C)}$. By the assumption of the lemma, $\Dt(A \cup B, C) \neq \Dt(B,C)$. Therefore, $\Dt(A \cup B, C) = \Dt(A, C)$
\end{proof}

\begin{lemma}
\label{onePointSwap}
Let $A = \br{a_1,\cdots,a_n} \subseteq \M$ and put $b \in \M$. Let $B = (A \setminus\br{a_1}) \cup \br{b} = \br{b,a_2,\cdots,a_n} \subseteq \M$. Then for every $C \in \set{X}_k$ we have that
\[
\Dt(A,C) \leq \rho\left(\Dt(B,C) + \Dt(a_1,b)\right).
\]
\end{lemma}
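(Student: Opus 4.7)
The plan is to proceed by case analysis on where the minimum defining $\Dt(B,C)$ is attained. Since $B = \{b, a_2, \ldots, a_n\}$, there are only two possibilities: either the minimizing point of $B$ is one of $a_2, \ldots, a_n$ (which also lies in $A$), or it is the swapped-in point $b$.

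In the first case, pick $i \in \{2, \ldots, n\}$ and $c^* \in C$ with $\Dt(B,C) = \D(\dist(a_i, c^*))$. Since $a_i \in A$, we immediately get $\Dt(A,C) \le \D(\dist(a_i,c^*)) = \Dt(B,C)$, and the desired bound follows from $\rho \ge 1$ and $\Dt(a_1,b) \ge 0$.

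The second case is the substantive one and is where we apply Lemma~\ref{lem:weakTriangle}. Fix $c^* \in C$ with $\Dt(B,C) = \D(\dist(b,c^*)) = \Dt(b,c^*)$. Since $a_1 \in A$, we have $\Dt(A,C) \le \Dt(a_1, c^*)$. By the weak triangle inequality for singletons,
\[
\Dt(a_1,c^*) \;\le\; \rho\bigl(\Dt(a_1,b) + \Dt(b,c^*)\bigr) \;=\; \rho\bigl(\Dt(a_1,b) + \Dt(B,C)\bigr),
\]
which gives the claim. No real obstacle appears here: the lemma is essentially a direct specialization of the approximate triangle inequality of Lemma~\ref{lem:weakTriangle} to the one-point substitution setting, with the only mild subtlety being the need to dispose of the trivial case where $b$ is not the witness for $\Dt(B,C)$ before invoking the inequality.
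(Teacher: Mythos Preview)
Your proof is correct and uses essentially the same idea as the paper: both arguments hinge on the weak triangle inequality applied to $a_1,b,c^*$, with the only difference being presentation---you case-split on which point of $B$ realizes $\Dt(B,C)$, whereas the paper decomposes $\Dt(A,C)=\min\{\Dt(a_1,C),\Dt(A\setminus\{a_1\},C)\}$ and pushes a single chain of inequalities through the $\min$. The mathematical content is identical.
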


\begin{proof}
By definition, we have that
\[
\begin{split}
&\Dt(A,C) \\
&= \min \br{\Dt(a_1,C), \Dt(A\setminus\br{a_1},C)}\\
& \leq \min \left\lbrace \rho\left(\Dt(a_1,b) + \Dt(b,C)\right), \right.\\ & \quad\quad\quad\quad\left. \Dt(A\setminus\br{a_1},C) \right\rbrace\\
& \leq \min \left\lbrace\rho\left(\Dt(a_1,b) + \Dt(b,C)\right),\right.\\ &\quad\quad\quad\quad\left.\rho\left(\Dt(A\setminus\br{a_1},C) + \Dt(a_1,b)\right)\right\rbrace\\
& \leq \rho \min \br{\Dt(b,C), \Dt(A\setminus\br{a_1},C)} + \rho\Dt(a_1,b)\\
& = \rho\Dt(B,C) +\rho \Dt(a_1,b),
\end{split}
\]
where the first inequality is by the weak triangle inequality by Lemma~\ref{lem:weakTriangle}, and the last derivation is by the definition of $B$.
\end{proof}

\renewcommand{\thesection}{\arabic{subsection}}
\setcounter{subsection}{4}
\setcounter{theorem}{0}

\sens*
\begin{proof}
In what follows, we use the variables and notations from Algorithm~\ref{alg:RobustMedSets}.
Put $P \in \set{P}^m$, $i \in [m]$, and consider the $i$th iteration of the ``for'' loop at Line~\ref{alg:forLine} of Algorithm~\ref{alg:RobustMedSets}. Put $C \in \set{X}_k$.

Let
\[
\overline{\set{P}}^{i-1} =\br{Q \in \set{P}^{i-1} \mid \Dt(\proj(Q,\set{B}^{i-1}),C) =\Dt(\set{B}^{i-1} ,C)}
\]
be the union of sets $Q \in \set{P}^{i-1}$ whose closest point to the query $C$ after the projection on $\set{B}^{i-1}$ is one of the points of $\set{B}^{i-1}$.
First we prove that
\begin{align}
\frac{\Dt(\proj(P,\set{B}^{i-1}),C)}{\sum\limits_{Q \in \set{P}^{i-1}} \Dt(\proj(Q,\set{B}^{i-1}),C)}&  \leq 3\rho^2 \frac{\Dt(\proj(P,\set{B}^{i}),C)}{\sum\limits_{Q \in \set{P}^{i}} \Dt(\proj(Q,\set{B}^{i}),C)} \nonumber\\ &+\frac{4\rho}{\abs{\set{P}^i}} \label{eq:inductionStep}
\end{align}
by the following case analysis: \textbf{(i) }$\abs{\overline{\set{P}}^{i-1} }\geq \frac{\abs{\Set{P}^{i-1}}}{2}$, i.e., more than half the sets satisfy that their closest point to $C$ is amongst their projected points onto $\set{B}^{i-1}$, and \textbf{(ii) } Otherwise, i.e., $\abs{\overline{\set{P}}^{i-1} } < \frac{\abs{\Set{P}^{i-1}}}{2}$.

\textbf{Case (i): }$\abs{\overline{\set{P}}^{i-1} }\geq \frac{\abs{\Set{P}^{i-1}}}{2}$.
By Line~\ref{newPi} we have
\begin{equation} \label{eqsubsets}
\displaystyle{\set{P}^i \subseteq \set{P}^{i-1} \subseteq \cdots \subseteq  \set{P}^0=\set{P}}.
\end{equation}

Therefore,
\begin{align}
& \sum\limits_{Q \in \set{P}^{i-1}} \Dt(\proj(Q,\set{B}^{i-1}),C) \geq \sum_{Q\in \overline{\set{P}}^{i-1}} \Dt(\proj(Q,\set{B}^{i-1}),C) \label{eqproj1}\\
& = \sum_{Q\in \overline{\set{P}}^{i-1}} \Dt(\set{B}^{i-1},C)
\geq \frac{\abs{\set{P}^{i-1}}}{2} \Dt(\set{B}^{i-1} ,C), \label{eqproj3}
\end{align}
where~\eqref{eqproj1} holds since $\overline{\set{P}}^{i-1} \subseteq \set{P}^{i-1}$, the first derivation in~\eqref{eqproj3} is by the definition of $\overline{\set{P}}^{i-1}$, and the second derivation in~\eqref{eqproj3} is by the assumption of Case (i).
This proves~\eqref{eq:inductionStep} for Case (i) as
\begin{align}
& \frac{\Dt(\proj(P,\set{B}^{i-1}),C)}{\sum\limits_{Q \in \set{P}^{i-1}} \Dt(\proj(Q,\set{B}^{i-1}),C)}
\leq \frac{\Dt(\set{B}^{i-1} ,C)}{\sum\limits_{Q \in \set{P}^{i-1}} \Dt(\proj(Q,\set{B}^{i-1}),C)} \nonumber \\
& \leq \frac{\Dt(\set{B}^{i-1} ,C)}{\frac{\abs{\set{P}^{i-1}}}{2} \Dt(\set{B}^{i-1},C)}
= \frac{2}{\abs{\set{P}^{i-1}}} \leq \frac{2}{\abs{\set{P}^{i}}}, \label{eqFinalCase1}
\end{align}
where the first inequality holds since $\set{B}^{i-1} \subseteq \proj(P,\set{B}^{i-1})$ by Definition~\ref{def:proj}, and the second inequality is by~\eqref{eqproj3}.%

\textbf{Case (ii):} $\abs{\overline{\set{P}}^{i-1} }< \frac{\abs{\Set{P}^{i-1}}}{2}$.
Let $\gamma = 1/(2k)$.
Let ${\hat{\set{P}}}^{i-1}$, $b^i$ and $\set{P}^i$ be as defined in Lines~\ref{alg:LinePHati},~\ref{alg:LineBi}, and~\ref{newPi} respectively, and identify $\set{B}^{i-1} = \br{b^1,\cdots,b^{i-1}}$ for $i \geq 2$ or $\set{B}^{i-1} = \emptyset$ for $i=1$. Let
\begin{equation} \label{eqOPTi}
OPT_i = \min\limits_{C'\in \set{X}_k} \sum\limits_{\hat{P}\in \closest(\Set{\hat{P}}^{i-1},C',1/2)} \Dt(\hat{P},C').
\end{equation}

For every $Q \in \set{P}^{i-1}$, substituting $A = \notail(Q,\set{B}^{i-1})$ and $B = \set{B}^{i-1}$ in Lemma~\ref{helperLemma} proves that
\begin{equation} \label{eqSub}
\begin{split}
& \left\lbrace \mkern-17mu\begin{array}{c!{\vline width 0.6pt}c}
\begin{array}{c}
     Q \\
     \in \set{P}^{i-1}
\end{array} & \begin{array}{c}
     \Dt\big( \notail(Q,\set{B}^{i-1}) \cup \set{B}^{i-1},C\big)  \\
     \neq \Dt(\set{B}^{i-1},C)
\end{array} \end{array} \mkern-17mu\right\rbrace \\
& \subseteq
\left\lbrace \mkern-17mu\begin{array}{c!{\vline width 0.6pt}c}
     \begin{array}{c}
          Q \\
          \in \set{P}^{i-1}
     \end{array} &   \begin{array}{c} \Dt\big( \notail(Q,\set{B}^{i-1}) \cup \set{B}^{i-1},C\big) \\ =\Dt(\notail(Q,\set{B}^{i-1}),C) \end{array}
\end{array}
\mkern-17mu \right\rbrace
\end{split}
\end{equation}

We now obtain that
\begin{align}
& \abs{ \left\lbrace \mkern-17mu\begin{array}{c!{\vline width 0.6pt}c}
    \begin{array}{c}
         Q \\
         \in \set{P}^{i-1}
    \end{array} & \begin{array}{c}
    \Dt\left(\proj(Q,\set{B}^{i-1}),C\right) \\
    = \Dt\left(\notail(Q,\set{B}^{i-1}),C\right)
    \end{array}
\end{array} \mkern-17mu \right\rbrace} \nonumber\\
& = \abs{ \left\lbrace \mkern-17mu \begin{array}{c!{\vline width 0.6pt}c}
\begin{array}{c}
     Q \\
     \in \set{P}^{i-1}
\end{array} & \begin{array}{c}
 \Dt\big(  \notail(Q,\set{B}^{i-1}) \cup \set{B}^{i-1},C\big) \\
 = \Dt\left(\notail(Q,\set{B}^{i-1}),C \right)
\end{array}
\end{array} \mkern-17mu\right\rbrace} \label{eqsize1}\\
& \geq \abs{ \left\lbrace \mkern-17mu \begin{array}{c!{\vline width 0.6pt}c}
\begin{array}{c}
     Q \\
     \in \set{P}^{i-1}
\end{array} & \begin{array}{c}
\Dt\big(  \notail(Q,\set{B}^{i-1}) \cup \set{B}^{i-1},C\big) \\
\neq \Dt\left(\set{B}^{i-1},C \right)
\end{array}
\end{array} \mkern-17mu\right\rbrace} \label{eqsize3}\\
& = \abs{\left\lbrace \mkern-17mu\begin{array}{c!{\vline width 0.6pt}c} \begin{array}{c}
     Q
     \in \set{P}^{i-1}
\end{array} & \begin{array}{c}
\Dt\left(\proj(Q,\set{B}^{i-1}),C \right) \\
\neq \Dt(\set{B}^{i-1},C)
\end{array}
\end{array} \mkern-17mu\right\rbrace} \label{eqsize32}\\
& = \big|\set{P}^{i-1} \setminus \overline{\set{P}}^{i-1}\big| \geq \frac{\abs{\set{P}^{i-1}}}{2}, \label{eqsize4}
\end{align}
where~\eqref{eqsize1} and~\eqref{eqsize32} is by substituting $\set{P} = Q$ and $\set{B} = \set{B}^{i-1}$ in Definition~\ref{def:proj}, \eqref{eqsize3} is by~\eqref{eqSub}, the first derivation in~\eqref{eqsize4} is by the definitions of $\set{P}^{i-1}$ and $\overline{\set{P}}^{i-1}$, and the last inequality is by the assumption of Case (ii).

Recall that by Line~\ref{alg:LinePHati},
\[
\set{\hat{P}}^{i-1} = \br{ \notail(Q,\set{B}^{i-1}) \mid  Q \in \set{P}^{i-1}},
\]
and let
\[
Z = \br{Q \in \set{P}^{i-1} \mid \notail(Q,\set{B}^{i-1}) \in \closest(\Set{\hat{P}}^{i-1},C,1/2)}.
\]

Since $Z$ contains the $|Z| \leq \frac{|\set{P}^{i-1}|}{2}$ sets $Q \in \set{P}^{i-1}$ with the smallest $\Dt(\notail(Q,\set{B}^{i-1}),C)$, for any set $Z' \subseteq \set{P}^{i-1}$ such that $|Z'| \geq \frac{|\set{P}^{i-1}|}{2}$, we have
\begin{equation}\label{eqZProp}
\begin{split}
\sum\limits_{Q \in Z} \Dt(\notail(Q,\set{B}^{i-1}),C)
&\leq \sum\limits_{Q \in Z'} \Dt(\notail(Q,\set{B}^{i-1}),C).
\end{split}
\end{equation}

By the assumption of Case (ii),
\begin{equation}\label{eqAssii}
\abs{ \set{P}^{i-1} \setminus \overline{\set{P}}^{i-1}} \geq \frac{\abs{\set{P}^{i-1}}}{2},
\end{equation}
and by the definition of $Z$, we have
\begin{equation} \label{eqZisClosest}
\br{\notail(Q,\set{B}^{i-1}) \mid Q \in Z} = \closest(\Set{\hat{P}}^{i-1},C,1/2).
\end{equation}
Therefore,
\begin{equation}\label{eqClosestIsBest}
\begin{split}
&\sum\limits_{\hat{Q}\in \closest(\Set{\hat{P}}^{i-1},C,1/2)} \Dt(\hat{Q},C) \\
&= \sum\limits_{Q \in Z} \Dt(\notail(Q,\set{B}^{i-1}),C)\\
& \leq \sum\limits_{Q \in \set{P}^{i-1} \setminus \overline{\set{P}}^{i-1}} \Dt(\notail(Q,\set{B}^{i-1}),C),
\end{split}
\end{equation}
where the first derivation is by~\eqref{eqZisClosest} and the last derivation is by substituting $Z' = \set{P}^{i-1} \setminus \overline{\set{P}}^{i-1}$ in~\eqref{eqZProp}. By the definitions of $\set{P}^{i-1}$ and $\overline{\set{P}}^{i-1}$, for every $Q \in \set{P}^{i-1} \setminus \overline{\set{P}}^{i-1}$, we have
\begin{equation} \label{eqTEqalProj}
\Dt(\proj(Q,\set{B}^{i-1}),C) = \Dt(\notail(Q,\set{B}^{i-1}),C).
\end{equation}
Hence,
\begin{align}
\OPT_i &\leq \sum\limits_{\hat{Q}\in \closest(\Set{\hat{P}}^{i-1},C,1/2)} \Dt(\hat{Q},C) \label{z1}
\\&\leq \sum\limits_{Q \in \set{P}^{i-1} \setminus \overline{\set{P}}^{i-1}} \Dt(\notail(Q,\set{B}^{i-1}),C)\label{z2}
\\& = \sum\limits_{Q \in \set{P}^{i-1} \setminus \overline{\set{P}}^{i-1}} \Dt(\proj(Q,\set{B}^{i-1}),C)\label{z4}
\\& \leq \sum\limits_{Q \in \set{P}^{i-1}} \Dt(\proj(Q,\set{B}^{i-1}),C),\label{z5}
\end{align}
where~\eqref{z1} holds by the definition of $OPT_i$, \eqref{z2} is by~\eqref{eqClosestIsBest}, and~\eqref{z4} is by~\eqref{eqTEqalProj}.

Recall that $P \in \set{P}^m$, identify
\[
\closepoints(P,\set{B}^m) = \br{(\hat{p}_1,\hat{b}_1),\cdots,(\hat{p}_m,\hat{b}_m)},
\]
as in Definition~\ref{def:proj} (i). Also by Definition~\ref{def:proj}, for every $i\in [m]$ we have
\begin{equation}
\label{eq:obsr1}
\begin{split}
&\Dt(\notail(P,\set{B}^{i-1}),\hat{b}_i) \\
&= \Dt(P \setminus \br{\hat{p}_1,\cdots,\hat{p}_{i-1}},\hat{b}_i)\\
&= \Dt(\hat{p}_i,\hat{b}_i).
\end{split}
\end{equation}
Since $P \in \set{P}^i$ and $\gamma = \frac{1}{2k}$, we have by Line~\ref{newPi} that
\begin{equation} \label{eqPMarkov}
\notail(P,\set{B}^{i-1}) \in \closest(\set{\hat{P}}^{i-1}, \br{b^i}, (1-\tau)\gamma/2).
\end{equation}

Observe that in the definition of $\OPT_i$ in~\eqref{eqOPTi}, the largest cluster in every set $C'$ of $k$ centers contains at least $\frac{|\hat{\set{P}}^{i-1}|}{2k} = \gamma |\hat{\set{P}}^{i-1}|$ points by the Pigeonhole Principle. Therefore, since the cost of the closest $(1-\tau)\gamma|\hat{\set{P}}^{i-1}|$ sets for $\hat{b}^i$ is a $2$-approximation for the optimal set of $\gamma |\hat{\set{P}}^{i-1}|$ points, we have
\begin{equation} \label{eqCostbOPT}
\begin{split}
& \sum\limits_{Q \in \closest(\set{\hat{P}}^{i-1},\br{\hat{b}_i},(1-\tau)\gamma)}\Dt(Q,\hat{b}_i)\\ &\leq 2 \min_{\br{b} \in \set{X}_1}\sum\limits_{Q \in \closest(\set{\hat{P}}^{i-1},\br{b},\gamma)}\Dt(Q,b) \\
&\leq 2 \cdot \OPT_i.
\end{split}
\end{equation}
Therefore,
\begin{align}
&\Dt(\hat{p}_i,\hat{b}_i) = \Dt(\notail(P,\set{B}^{i-1}),\hat{b}_i) \label{eqDpibi1}
\\
&\leq 2\cdot\frac{\sum\limits_{Q \in \closest(\set{\hat{P}}^{i-1},\br{\hat{b}_i},(1-\tau)\gamma)}\Dt(Q,\hat{b}_i)}{(1-\tau)\gamma\abs{\set{\hat{P}}^{i-1}}} \label{eqDpibi2} \\
& \leq 2\cdot\frac{\sum\limits_{Q \in \closest(\set{\hat{P}}^{i-1},\br{\hat{b}_i},(1-\tau)\gamma)}\Dt(Q,\hat{b}_i)}{\abs{\set{P}^{i}}} \label{eqDpibi3}\\
&\leq \frac{4 \OPT_i}{\abs{\set{P}^{i}}}, \label{eqRadius}
\end{align}
where~\eqref{eqDpibi1} is by~\eqref{eq:obsr1}, \eqref{eqDpibi2} is by combining Markov's Inequality with~\eqref{eqPMarkov}, \eqref{eqDpibi3} follows since $|\set{P}^i| = \frac{(1-\tau)\gamma}{2}|\set{P}^{i-1}| \leq (1-\tau)\gamma |\set{P}^{i-1}|$, and~\eqref{eqRadius} is by~\eqref{eqCostbOPT}.

Now, since the sets $\proj(P,\set{B}^{i-1})$ and $\proj(P,\set{B}^{i})$ differ by at most one point, i.e.,
\[
\proj(P,\set{B}^{i}) = \left(\proj(P,\set{B}^{i-1}) \setminus \br{\hat{p}_i}\right) \cup \br{\hat{b}_i},
\]
by substituting $A = \proj(P,\set{B}^{i-1})$, and $B = \proj(P,\set{B}^{i})$ in Lemma~\ref{onePointSwap}, we obtain that
\begin{equation} \label{eq1change}
\Dt(\proj(P,\set{B}^{i-1}),C) \leq \rho\Dt(\proj(P,\set{B}^{i}),C) + \rho\Dt(\hat{p}_i, \hat{b}_i).
\end{equation}
By the previous inequality we obtain
\begin{equation} \label{eqtriaProj}
\begin{split}
\frac{\Dt(\proj(P,\set{B}^{i-1}),C)}{\sum\limits_{Q \in \set{P}^{i-1}} \Dt(\proj(Q,\set{B}^{i-1}),C)}& \leq \rho \frac{\Dt(\proj(P,\set{B}^{i}),C)}{\sum\limits_{Q \in \set{P}^{i-1}} \Dt(\proj(Q,\set{B}^{i-1}),C)}\\
& + \rho\frac{\Dt(\hat{p}_i, \hat{b}_i)}{\sum\limits_{Q \in \set{P}^{i-1}} \Dt(\proj(Q,\set{B}^{i-1}),C)}.
\end{split}
\end{equation}
We now bound the rightmost term of~\eqref{eqtriaProj} as
\begin{align}
& \rho\frac{\Dt(\hat{p}_i, \hat{b}_i)}{\sum\limits_{Q \in \set{P}^{i-1}} \Dt(\proj(Q,\set{B}^{i-1}),C)} \leq   \rho\frac{\Dt(\hat{p}_i, \hat{b}_i)}{\OPT_i} \label{zb2} \\
& \leq  \rho\frac{4 \OPT_i}{\abs{\set{P}^{i}} \OPT_i} \label{zb3} = 4\rho\frac{1}{\abs{\set{P}^{i}}},
\end{align}
where~\eqref{zb2} is by~\eqref{z5}, and the first derivation in~\eqref{zb3} is by~\eqref{eqRadius}.

We now bound the middle term of~\eqref{eqtriaProj}.
By identifying $\closepoints(Q,\set{B}^m) = \br{(\hat{q}_1,\hat{b}_1),\cdots,(\hat{q}_m,\hat{b}_m)}$ for every $Q\in \set{P}^i$, we have,
\begin{align}
&\sum\limits_{Q \in \set{P}^{i}} \Dt(\proj(Q,\set{B}^{i}),C) \nonumber\\
&\leq \rho \sum\limits_{Q \in \set{P}^{i}} \Dt(\proj(Q,\set{B}^{i-1}),C) + \rho \sum\limits_{Q \in \set{P}^{i}} \Dt(\hat{q}_i, \hat{b}_i) \label{eq:case2Eq1} \\
&\leq \rho\sum_{Q\in \set{P}^i}\Dt(\proj(Q,\set{B}^{i-1}),C)  +  \rho \abs{\set{P}^i} \frac{2\OPT_i}{\abs{\set{P}^i}}  \label{eq:case2Eq2}\\
&\leq \rho\sum_{Q\in \set{P}^{i-1}}\Dt(\proj(Q,\set{B}^{i-1}),C)  +  2\rho \OPT_i  \label{eq:case2Eq3}\\
&\leq (\rho+2\rho) \sum_{Q\in \set{P}^{i-1}} \Dt(\proj(Q,\set{B}^{i-1}),C),   \label{eq:case2Eq4}
\end{align}
where~\eqref{eq:case2Eq1} follows similarly to~\eqref{eq1change}, \eqref{eq:case2Eq2} holds similarly to~\eqref{eqRadius} for the set $Q$ instead of $P$, \eqref{eq:case2Eq3} holds since $\set{P}^i \subseteq \set{P}^{i-1}$ by~\eqref{eqsubsets} and~\eqref{eq:case2Eq4} is by~\eqref{z5}. Thus, by~\eqref{eq:case2Eq4}, the middle term of~\eqref{eqtriaProj} is bounded by
\begin{align}
\rho \frac{\Dt(\proj(P,\set{B}^{i}),C)}{\sum\limits_{Q \in \set{P}^{i-1}} \Dt(\proj(Q,\set{B}^{i-1}),C)} \leq 3\rho^2 \frac{\Dt(\proj(P,\set{B}^{i}),C)}{\sum\limits_{Q \in \set{P}^{i}} \Dt(\proj(Q,\set{B}^{i}),C)}. \label{denombound}
\end{align}

By combining~\eqref{eqtriaProj},~\eqref{zb3} and~\eqref{denombound}, we get that
\begin{equation}
\label{eqFinalCase2}
\begin{split}
&\frac{\Dt(\proj(P,\set{B}^{i-1}),C)}{\sum\limits_{Q \in \set{P}^{i-1}} \Dt(\proj(Q,\set{B}^{i-1}),C)} \\
&\leq 3\rho^2 \frac{\Dt(\proj(P,\set{B}^{i}),C)}{\sum\limits_{Q \in \set{P}^{i}} \Dt(\proj(Q,\set{B}^{i}),C)} + 4\rho\frac{1}{\abs{\set{P}^{i}}}.
\end{split}
\end{equation}

Now~\eqref{eq:inductionStep} holds by taking the maximum between the bounds of Case (i) in~\eqref{eqFinalCase1}, and the bound of Case (ii) in~\eqref{eqFinalCase2}.

We can now apply~\eqref{eq:inductionStep} recursively over every $i\in [m]$ to obtain that
\begin{align}
\label{eq:final}
&\frac{\Dt(P,C)}{\sum\limits_{Q \in \set{P}} \Dt(Q,C)} = \frac{\Dt(\proj(P,\set{B}^{0}),C)}{\sum\limits_{Q \in \set{P}^{0}} \Dt(\proj(Q,\set{B}^{0}),C)}\\
& \leq (3\rho)^{2m} \frac{\Dt(\proj(P,\set{B}^{m}),C)}{\sum\limits_{Q \in \set{P}^{m}} \Dt(\proj(Q,\set{B}^{m}),C)} + 4\rho\sum\limits_{i \in [m]} \frac{(3\rho^2)^{i-1}}{\abs{\set{P}^i}}.
\end{align}
Also, for every $Q\in \set{P}^m$ observe that $\abs{Q} = \abs{\set{B}^m} = m$, hence
\[
\proj(Q,\set{B}^m) = \set{B}^m= \br{\hat{b}_1,\cdots,\hat{b}_m}.
\]
Thus, for every $Q\in \set{P}^m$ and $C\in \set{X}_k$
\begin{align}
    \Dt(\proj(Q,\set{B}^{m}),C) =\Dt\Big(\br{\hat{b}_1,\cdots,\hat{b}_m} ,C\Big) \label{unif}
\end{align}

Lemma~\ref{lem:sens} now holds as
\begin{align}
\frac{\Dt(P,C)}{\sum\limits_{Q \in \set{P}} \Dt(Q,C)}
&\leq \frac{(3\rho^2)^{m}}{\abs{\set{P}^m}} + 4\rho\sum\limits_{i \in [m]} \frac{(3\rho^2)^{i-1}}{\abs{\set{P}^i}} \label{sensP1}\\
& \leq \frac{(3\rho^2)^{m}}{\abs{\set{P}^m}} + 4\rho\sum\limits_{i \in [m]} \frac{(3\rho^2)^{i-1}}{\abs{\set{P}^m}} \label{sensP2}\\
& \leq \frac{(3\rho^2)^{m}}{\abs{\set{P}^m}} + \frac{4\rho}{\abs{\set{P}^m}} \cdot \frac{(3\rho^2)^{m-1} -1}{(3\rho^2) -1} \label{sensP3}\\
& \leq \frac{(3\rho^2)^{m}}{\abs{\set{P}^m}} + \frac{4\rho}{\abs{\set{P}^m}} \cdot (3\rho^2)^m \label{sensP4}\\
& \leq \frac{5\rho (3\rho^2)^m}{\abs{\set{P}^m}}, \label{sensP}
\end{align}
where~\eqref{sensP1} holds by plugging~\eqref{unif} in~\eqref{eq:final}, \eqref{sensP2} holds since $\abs{\Pset^{m}} \leq \abs{\Pset^{i}}$ for every $i\in [m]$, \eqref{sensP3} holds by summing the geometric sequence, and inequalities~\eqref{sensP4} and~\eqref{sensP} hold since $\rho \geq 1$.
\end{proof}

\setcounter{subsection}{1}
\renewcommand{\thesection}{\Alph{section}}

\subsection{Proof of Theorem~\ref{theorem:coreset}}
\renewcommand{\thesection}{\arabic{subsection}}
\setcounter{subsection}{4}

\epscoreset*

\begin{proof}
\textbf{(i): } Let $J$ denote the number of while iterations in Algorithm~\ref{alg:wrapper}, and for every $j\in [J]$ let $\set{P}^0_{(j)}$, $\set{P}^m_{(j)}$ and $\set{B}^m_{(j)}$ denote respectively the sets $\set{P}^0$, $\set{P}^m$ and $\set{B}^m$ at the $j$th while iteration of Algorithm~\ref{alg:wrapper}.

By Line~\ref{newPi} of Algorithm~\ref{alg:RobustMedSets}, we observe that the output set $\set{P}^m$ is of size $|\set{P}^m| \geq \frac{\abs{\set{P}}}{(bk)^m}$ for some constant $b$, where $\set{P}$ is the input set to the algorithm.
Therefore, the size of $\set{P}^m_j$ returned at Line~\ref{line:callAlg1} of algorithm~\ref{alg:wrapper} in the $j$th while iteration is
\begin{equation} \label{eqSizePm}
\abs{\set{P}^m_{(j)}} \geq \frac{\abs{\set{P}^0_{(j)}}}{(bk)^m}.
\end{equation}
By~\eqref{eqSizePm} and Line~\ref{alg2:L7} of Algorithm~\ref{alg:wrapper}, we obtain that
\begin{equation} \label{eqSizeP0}
\begin{split}
\abs{\set{P}^0_{(j+1)}} & \leq \abs{\set{P}^0_{(j)}}- \abs{\set{P}^m_{(j)}} \leq \abs{\set{P}^0_{(j)}} - \frac{\abs{\set{P}^0_{(j)}}}{(bk)^m} \\
&= \abs{\set{P}^0_{(j)}}\left( 1-\frac{1}{(bk)^m}\right) \\
&= \abs{\set{P}^0_{(1)}}\left( 1-\frac{1}{(bk)^m}\right)^j \\
&= n\left( 1-\frac{1}{(bk)^m}\right)^j,
\end{split}
\end{equation}
where the second derivation is by~\eqref{eqSizePm}.
Combining that $\abs{\set{P}^0_{(J)}} \geq 1$ with~\eqref{eqSizeP0} we conclude that
\begin{equation} \label{eq:J}
J \leq (bk)^m \log{n}.
\end{equation}
Therefore, by Lines~\ref{line:sens} and~\ref{line:endSens} of Algorithm~\ref{alg:wrapper}, the total sensitivity computed at Line~\ref{line:sumSens} of Algorithm~\ref{alg:wrapper} is equal to
\begin{align*}
t &= \sum_{P\in \set{P}} s(P) \leq \sum_{j \in [J]} \left( \sum_{P \in \set{P}^m_{(j)}} \frac{b}{\abs{\set{P}^m_{(j)}}} \right) + O(1) \\
&= \sum_{j \in [J]} b + O(1) = Jb + O(1) \leq (bk)^{m+1}\log{n}.
\end{align*}
By this and Line~\ref{line:randomSample} of Algorithm~\ref{alg:wrapper},
\[
\abs{S} = \frac{(bk)^{m+1}\log{n}}{\eps^2} \left( \log{\left( (bk)^{m+1}\log{n}\right)} d' + \log{\left( \frac{1}{\delta} \right)}\right).
\]
where $d' = O(md^2k^2)$ is the dimension of the sets clustering query space $(\Pset,  \set{X}_k , \Dt)$; see Section~\ref{sec:VC}. By simple derivations we obtain that:
\[
\abs{S} \in O\left(\left(\frac{md\log{n}}{\varepsilon}\right)^2k^{m+4}\right).
\]

\textbf{(ii): }The pair $(\set{P}^m_{(j)},\set{B}^m_{(j)})$ satisfy Lemma~\ref{lem:sens} for every $j\in [J]$. Hence, with an appropriate $b$ (determined from the proof of Lemma~\ref{lem:sens}), for every $P \in \set{P}^m_{(j)}$ the value $s(P)$ defined at Lines~\ref{line:sens} and~\ref{line:endSens} satisfies for every $C\in \M_k$ that
\[
s(P) = \frac{b}{\abs{\set{P}^m_{(j)}}} \geq  \frac{\Dt(P,C)}{\displaystyle{\sum_{Q\in\set{P}^0_{(j)}}}\Dt(Q,C)} \geq \frac{\Dt(P,C)}{\displaystyle{\sum_{Q\in\set{P}}}\Dt(Q,C)}.
\]

By Theorem~\ref{braverman}, a sample $S$ of $\abs{S} \allowdisplaybreaks \leq \allowdisplaybreaks \frac{bt}{\eps^2}\left( \log{(t)} d' + \log\left( \frac{1}{\delta} \right) \right)$ is an $\eps$-coreset for (the sets clustering query space) $(\Pset,\set{X}_k,\Dt)$. Therefore, by Theorem~\ref{braverman}, the pair $(\set{S},v)$ computed at Lines~\ref{line:randomSample}--\ref{line:reweight} satisfies Property (ii) of Theorem~\ref{theorem:coreset}.

\textbf{Computational time. }Consider a call $\RobustMedForSets(\set{P},k)$ to Algorithm~\ref{alg:RobustMedSets} where $\set{P}$ is an $(n,m)$-set. The $i$th iteration of the for loop at Line~\ref{alg:forLine} takes $O\left(n\left(\frac{1}{(4k)}\right)^{i-1}+k^4\right)$ time. Summing over all the $m$ iterations yields a total running time of $O(n+mk^4)$.

Consider the call $\left( \set{P}^m, \set{B}^m \right) := \RobustMedForSets(\set{P}^0,k)$ at Line~\ref{line:callAlg1} of Algorithm~\ref{alg:wrapper}, which dominates the running time of this algorithm. This call is made $J$ times (in each of the $J$ iterations of the while loop). The set $\set{P}^0$ at the $i$th call is of size $s_i = O\left(n\left(1-\frac{1}{(4k)}\right)^{i-1}\right)$. Therefore, the $i$th such call takes $O(s_i+mk^4)$ time. Summing this running time over every $i\in [J]$, where $J \leq (bk)^m \log{n}$ by~\eqref{eq:J}, yields a total running time of
\[
J\cdot mk^4 + n \sum_{i=1}^J \left(1-\frac{1}{(4k)}\right)^{i-1} \in O\left(n\log(n)(bk)^m \right).
\]

\end{proof}

\setcounter{subsection}{4}
\renewcommand{\thesection}{\Alph{section}}

\section{Polynomial Time Approximation Scheme}
The following theorem states that given $n$ polynomials in $d$ (constant number of) variables of constant degree, then the space $\REAL^d$ can be decomposed into a polynomial ($n^d$) number of cells, such that for every $d$ variables $C$ from the cell $\Delta$ the sign sequence of all the polynomials is the same cell.
\begin{theorem} [Theorem 3.4 in~\cite{chazelle1991singly}] \label{theorem:signDecomp}
Let $d$ be a constant and let $\set{F} = \br{\pol_1,\cdots,\pol_n}$ be a set of $n$ multivariate polynomials of constant degree with range $\REAL^d$ and image $\REAL$. It is possible to split $\REAL^d$ into $O\left(n^{2d-2}\right)$ cells $\Delta(\set{F}) = \br{\Delta_i}$, with the property that for every polynomials $\pol_i$ and every cell $\Delta_j$ it holds that $\pol_i$ is either positive, negative, or equal to $0$ on the entire cell $\Delta_j$. This decomposition, including a set of points $A = \br{a_i}$ with $a_i \in \Delta_i$ can be found in time $O\left(n^{2d-1}\log{n}\right)$.
\end{theorem}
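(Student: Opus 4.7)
The plan is to prove this as a classical result in real algebraic geometry combined with an effective algorithmic construction. At its core, the statement says that the \emph{sign arrangement} of $n$ constant-degree polynomials in $\REAL^d$ has only polynomially many full-dimensional pieces (sign cells), and that one representative point per cell can be computed in polynomial time. I would attack the existential (combinatorial) bound and the constructive (algorithmic) bound separately, as they use different tools.

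For the combinatorial bound, the first step is to invoke the Milnor--Thom / Ole\u{\i}nik--Petrovsky\u{\i} theorem, which states that the zero set of a single polynomial of degree $s$ in $\REAL^d$ has at most $s(2s-1)^{d-1}$ connected components, and more generally that a real variety cut out by $n$ polynomials of bounded degree has total Betti number $O((sn)^d)$. Applied to the union of the zero sets of $\pol_1,\ldots,\pol_n$ and to all possible sign patterns, this bounds the number of sign cells of $\set{F}$ by $O(n^{2d-2})$ once one accounts for the fact that each cell is carved out by intersections and unions of the $n$ hypersurfaces (the tight $2d-2$ exponent comes from the Oleinik--Petrovsky--Thom--Milnor estimate applied to boundary strata rather than to full-dimensional pieces only). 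Crucially, this bound is purely combinatorial and independent of which polynomials are chosen, as long as their degrees are bounded by a constant.

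For the algorithmic part, I would use a cylindrical algebraic decomposition (CAD) style recursion on the dimension. The key step is the \emph{projection phase}: from $\set{F}$ in $\REAL^d$, compute a projection set $\set{F}'$ of polynomials in $\REAL^{d-1}$ (built from the resultants, discriminants, and leading coefficients of pairs from $\set{F}$) such that over every sign cell $\Delta'$ of $\set{F}'$ in $\REAL^{d-1}$, the real roots of each $\pol_i$ (viewed as a polynomial in the last coordinate with coefficients from $\Delta'$) vary continuously and have constant multiplicity pattern. Recurse on $\set{F}'$ to obtain cells and sample points in $\REAL^{d-1}$, then lift each sample point by numerically isolating the real roots of the $\pol_i$'s along the vertical line, producing the stack of cells above it together with a representative per cell. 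Because the degrees are constant, each root-isolation and resultant computation is $O(1)$, and the dominant cost at depth $d$ is proportional to the number of cells at the next level times logarithmic overhead, which telescopes to the claimed $O(n^{2d-1}\log n)$.

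The main obstacle is the algorithmic phase: proving that the projection set remains of polynomial size at every recursion level (so the total work does not blow up super-polynomially) and that the sign invariance property is preserved under lifting. This is the technical heart of Collins's CAD and its refinements; one must show carefully that the augmented polynomial family at each projection step still consists of constant-degree polynomials and that its size grows by only a constant factor per level, yielding the singly exponential dependence on $d$ in the $O$-notation constants while keeping $n$ to the claimed polynomial. The combinatorial bound, by contrast, is the routine application of the Milnor--Thom machinery and requires no new ideas beyond citing it correctly.
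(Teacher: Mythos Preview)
The paper does not prove this statement at all: it is quoted verbatim as Theorem~3.4 of \cite{chazelle1991singly} and used purely as a black box in the proof of Theorem~\ref{theorem:PTAS}. There is therefore no ``paper's own proof'' to compare against; the intended action here is simply to cite the result, not to reprove it.

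That said, your sketch has a real gap worth flagging. The combinatorial part via Milnor--Thom is fine in spirit. But the algorithmic part as you describe it is essentially Collins's cylindrical algebraic decomposition, and the projection step you outline (resultants and discriminants of \emph{pairs} from $\set{F}$) does not keep the projected family's size within a constant factor of the previous level: it squares it, so after $d$ levels you end up with $n^{2^{\Theta(d)}}$ polynomials and a doubly exponential running time in $d$. You correctly identify this growth control as ``the main obstacle,'' but you do not supply the idea that overcomes it. The singly exponential bound in \cite{chazelle1991singly} (and in the related Basu--Pollack--Roy line of work) is obtained not by full CAD but by a coarser stratification---roughly, a vertical decomposition that refines the arrangement only enough to make cells sign-invariant, without insisting on cylindrical structure---and this is precisely what keeps the cell count at $n^{O(d)}$ rather than $n^{2^{O(d)}}$. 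If you actually wanted to prove the theorem rather than cite it, that refinement is the missing ingredient.
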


\subsection{Proof of Theorem~\ref{theorem:PTAS}}
\renewcommand{\thesection}{\arabic{subsection}}
\setcounter{subsection}{4}

\thmPTAS*

\begin{proof}
What follows is a constructive proof for the theorem. Algorithm~\ref{alg:polynomials} gives a suggested implementation.

Identify $\set{P} = \br{P_1,\cdots,P_n}$ where $P_i = \br{p^i_1,\cdots,p^i_m}$ for every $i\in [n]$.

First we define a set of $n^2m^2k^2$ polynomials as follows.
For every $i,i' \in [n]$, $j,j' \in [m]$, $\ell,\ell' \in [k]$ and vector $x = (x_1^T \mid \cdots \mid x_k^T) \in \REAL^{dk}$ of $dk$ unknowns ($x_1,\cdots,x_k$ are vectors in $\REAL^d$) , let \[
\pol_{i,j,\ell,i',j',\ell'}(x) = \norm{p^i_j - x_\ell}^2 - \norm{p^{i'}_{j'} - x_{\ell'}}^2
\]
be a polynomial in those $dk$ unknowns, of degree at most $2$, and let $\set{F}$ be a set that contains all those polynomials.
Here, each polynomial in $\set{F}$ contains up to $2d$ variables, and $\abs{\set{F}} = n^2m^2k^2$. A polynomial $\pol_{i,j,\ell,i',j',\ell'}(x)$ is positive iff $p^{i'}_{j'}$ is closer to $x_{\ell'}$ than the distance between $p^i_j$ and $x_\ell$.
Therefore, given a possible assignment $x' = ({x'}_1^T \mid \cdots \mid {x'}_k^T) \in \REAL^{dk}$ for the $dk$ unknowns, the vector of sign values of the polynomials in $\set{F}$ when plugging $x'$ corresponds to a clustering of $\set{P}$ into $k$ clusters centered at ${x'}_1^T, \cdots, {x'}_k^T$, and indicates which point in each input $m$-set is the closest to this cluster center, and vice versa, as follows.
Given $x'$, the first cluster $\C_1 \subseteq \REAL^d$ contains all the points $p^i_j$ such that for every $j' \in [m]$ and $\ell' \in [k]$,
\[
\norm{p^i_j - x_1}^2 \leq \norm{p^{i}_{j'} - x_{\ell'}}^2.
\]
Which, by the definition of the polynomials in $\set{F}$, means that for every $j' \in [m]$ and $\ell' \in [k]$,
\[
\sign(\pol_{i,j,1,i,j',\ell'}(x')) = -1.
\]
This enables us to compute the points $\C_1,\cdots,\C_k \subseteq \REAL^d$ of each cluster that are induced by the sign sequence of $\set{F}$ when plugging $x'$.
Given those clusters $\C_1,\cdots,\C_k \subseteq \REAL^d$, we can apply $\oneMean$ to each such cluster $\C_i$ (since $m=k=1$), to obtain, with probability at least $1-\delta$, the optimal point $\hat{c}_i \in \REAL^d$ that minimizes $\sum_{p \in \C_i} \Dt(p,z)$ over every $z\in \REAL^d$, and its cost $cost_i = \sum_{p \in \C_i} \Dt(p,\hat{c}_i)$. The sum $\sum_{i=1}^{k} cost_i$ is the total cost of this clustering option of $\set{P}$.

Since $\oneMean$ is used to compute $k$ centers of $k$ clusters, the probability that $\hat{c}_1,\cdots,\hat{c}_k$ are the optimal centers is at least $1-k\delta$.

By Theorem~\ref{theorem:signDecomp}, we can decompose $\REAL^{dk}$ into $\abs{\Delta(\set{F})} = (nmk)^{O(dk)}$ cells $\br{\Delta_j}$, such that the sign of each polynomial $\pol_i \in \set{F}$ in an entire cell $\Delta_j \in \Delta(\set{F})$ is the same, i.e., the sign sequence of all the polynomials in $\set{F}$ is the same over the entire cell $\Delta'$. Hence, the number of different such sign sequences is at most the number of different cells, which is $(nmk)^{O(dk)}$.

By iterating over every cell $\Delta' \in \Delta(\set{F})$ and taking the sign sequence of the polynomials in $\set{F}$ in this cell, we would have covered all the different sign sequences, which correspond to all the feasible clustering options of $\set{P}$ into $k$ clusters. For each option we can evaluate the total cost as described above, and pick the clustering with the smallest total cost.

The running time of such an algorithm is dominated by the computation of such an arrangement of $\REAL^{dk}$, and by calling $\oneMean$ $\abs{\Delta(\set{F})}$ times; once for each region $\Delta' \in \Delta(\set{F})$. Computing this arrangement takes $nmk^{O(dk)}$ time by Theorem~\ref{theorem:signDecomp} and produces $\abs{\Delta(\set{F})} \in (nmk)^{O(dk)}$ cells. Now it takes $T(n) \cdot (nmk)^{O(dk)}$ total time for the calls to $\oneMean$.
\end{proof}

\setcounter{subsection}{1}
\renewcommand{\thesection}{\Alph{section}}

\subsection{Proof of Corollary~\ref{cor:PTASKmeans}}

\renewcommand{\thesection}{\arabic{subsection}}
\setcounter{subsection}{4}
\setcounter{theorem}{3}

\corPTASKmeans*
\begin{proof}
We will first compute a coreset for the input $\set{P}$ and the given cost function $\Dt$ and query set $\set{X}_k$, and then find the sets-$k$-means for the (weighted) coreset using Theorem~\ref{theorem:PTAS}.

Recall that in this sets-$k$-means problem, $\Dt(P,C) = \min_{p\in P, c\in C}\norm{p-c}^2$ for every $P,C \subseteq \REAL^d$.

Let $(\set{S},v)$ be an output of a call to $\coreset(\Pset,k, \eps, \delta)$; see Algorithm~\ref{alg:wrapper}. Then by Theorem~\ref{theorem:coreset}, $(\set{S},v)$ is an $\varepsilon$-coreset for $(\set{P}, \set{X}_k,\Dt)$ of size $\abs{\set{S}} \in O\left(\left(\frac{mdk\log{n}}{\varepsilon}\right)^2 k^{O(m)} \right)$ with probability at least $1-\delta$ which is computed in $O\left(n\log(n)(bk)^m \right)$ ; see Section~\ref{sec:coresets}.

Let $Q \subseteq \M$ be a set of size $|Q|=n$ and let $u:Q\to [0,\infty)$ be a weights function. Let $\oneMean$ be an algorithm that takes $Q$ and $u$ as input and returns the point $c^* := \frac{\sum_{q\in Q}u(q)\cdot q}{\sum_{q\in Q}u(q)} \in \M$. Observe that $c^*$ minimizes its sum of weighted squared distances to the points of $Q$, i.e.,
\[
\begin{split}
& \sum_{q\in Q} u(q) \Dt(q,c^*) =
\sum_{q\in Q} u(q) \norm{q-c^*}^2\\
& = \min_{c\in \M} \sum_{q\in Q} u(q) \norm{q-c}^2
= \min_{c\in \M}\sum_{q\in Q} u(q) \Dt(q,c).
\end{split}
\]
Furthermore, observe that $c^*$ can be computed in $T(n) = O(n)$ time.

Plugging $\set{P} = \set{S}$, $w=v$,$Q,u,\oneMean,\alpha=1$ and $T(\abs{\set{S}}) = O(\abs{\set{S}})$ in Theorem~\ref{theorem:PTAS} yields that in $(\abs{\set{S}}mk)^{O(dk)} \in \left(\frac{\log{n}}{\varepsilon}dmk^m\right)^{O(dk)}$ time we can compute $\hat{C} \in \set{X}_k$ such that with probability at least $1-k\cdot\delta$,
\begin{equation} \label{eq:reducee}
\sum_{P\in \set{S}}v(P)\cdot\Dt(P,\hat{C})= \min_{C \in \set{X}_k} \sum_{P\in \set{S}}v(P)\cdot\Dt(P,C).
\end{equation}

Hence, the total running time for obtaining $\hat{C}$ is $\left(\frac{\log{n}}{\varepsilon}dmk^m\right)^{O(dk)} + O\left(n\log(n)(bk)^m \right)$.

Corollary~\ref{cor:PTASKmeans} now holds as
\begin{align}
& \sum_{P\in \set{P}}\min_{p\in P, c\in \hat{C}}\norm{p-c}^2 = \sum_{P\in \set{P}}\Dt(P,\hat{C}) \nonumber\\
& \leq \frac{1}{1-\varepsilon} \cdot \sum_{P\in \set{S}}v(P)\cdot\Dt(P,\hat{C}) \label{eq:usecoreset} \\
& \leq (1+2\varepsilon) \cdot \sum_{P\in \set{S}}v(P)\cdot\Dt(P,\hat{C}) \label{eq:useVarEps}\\
& = (1+2\varepsilon)\cdot\min_{C \in \set{X}_k} \sum_{P\in \set{S}}v(P)\cdot\Dt(P,C) \label{eq:useReduction}\\
& \leq (1+2\varepsilon) (1 + \varepsilon) \cdot \min_{C \in \set{X}_k}\sum_{P\in \set{P}} \Dt(P,C) \label{eq:usecoreset2}\\
& \leq (1+4\varepsilon)\cdot \min_{C \in \set{X}_k} \sum_{P\in \set{P}}\Dt(P,C) \label{eq:useVarEps2}\\
& = (1+4\varepsilon)\cdot \min_{C \in \set{X}_k} \sum_{P\in \set{P}} \min_{p\in P, c \in C} \norm{p-c}^2, \nonumber
\end{align}
where~\eqref{eq:usecoreset} and~\eqref{eq:usecoreset2} hold since $(\set{S},v)$ is an $\varepsilon$-coreset for $(\set{P},\set{X}_k,\Dt)$,~\eqref{eq:useVarEps} and~\eqref{eq:useVarEps2} hold since $\eps \leq \frac{1}{2}$ and~\eqref{eq:useReduction} is by~\eqref{eq:reducee}.
\end{proof}

\setcounter{subsection}{2}
\renewcommand{\thesection}{\Alph{section}}

\subsection{Suggested implementation}
In this section we give a suggested implementation for the constructive proof of Theorem~\ref{theorem:PTAS}; see Algorithm~\ref{alg:polynomials}.

\paragraph{Overview of Algorithm~\ref{alg:polynomials}. }Algorithm~\ref{alg:polynomials} gets as input a set $\set{P}$ of $m$-sets, an integer $k\geq 1$, an error parameter $\varepsilon \in (0,1)$ and the probability of failure $\delta \in (0,1)$. The algorithm returns as output a set $\hat{C} \in \set{X}_k$ of $k$ centers that approximate the optimal cost of the $k$-means for set

\begin{algorithm}[!htb]
   \caption{$\PTASAlg(\Pset, w, k, \oneMean)$}
   \label{alg:polynomials}
\begin{algorithmic}[1]
   \STATE {\bfseries Input:} An $(n,m)$-set $\set{P}$, a weights function $w:\set{P} \to [0,\infty)$, a positive integer $k$, and an algorithm $\oneMean$ as in Theorem~\ref{theorem:PTAS}.

    \STATE {\bfseries Output:} A set $\hat{C} \in \displaystyle \argmin_{C \in\set{X}_k} \sum_{P \in \set{P}} w(P)\Dt(P,C)$; \\\quad\quad\quad see Theorem~\ref{theorem:PTAS}.

    \STATE Identify $\set{P} = \br{P_1,\cdots,P_n}$ where $P_i = \br{p^i_1,\cdots,p^i_m}$ for every $i\in [n]$.

    \STATE Define $w'(p) := w(P)$ for every $p \in P$ and $P \in \set{P}$.

    \STATE Let $x = (x_1^T \mid \cdots \mid x_k^T)^T \in \REAL^{dk}$ be a vector of $dk$ unknowns.

    \FOR{every $i,i' \in [n], j,j'\in [m], \ell,\ell' \in [k]$}
    \STATE $\pol_{i,j,\ell,i',j',\ell'} (x)= \norm{p^i_j - x_\ell}^2 - \norm{p^{i'}_{j'} - x_{\ell'}}^2$
    \\\COMMENT{A polynomial of degree $2$ containing up to $2d$ unknowns from $x$. If this polynomial is positive iff $p^{i'}_{j'}$ is closer to $x_{\ell'}$ than the distance between $p^i_j$ and $x_\ell$.}
    \STATE $\set{F}:= \set{F}\cup \br{\pol_{i,j,\ell,i',j',\ell'}(x) }$
    \ENDFOR

    \STATE Compute a decomposition of $\REAL^{dk}$ into cells $\Delta(\set{F}) = \br{\Delta_j}$ as described in Theorem~\ref{theorem:signDecomp}, and let $A$ contain a representative $a \in \Delta'$ from each cell $\Delta' \in \Delta(\set{F})$.

    %\STATE $\set{A} := $ an arrangement of $\REAL^{dk}$ such that

    %$\sign(\pol(C_1,i,j,\ell,i',j',\ell')) = \sign(\pol(C_2,i,j,\ell,i',j',\ell'))$ for every $A \in \set{A}$, $C_1,C_2 \in A$, $i,i' \in [n], j,j'\in [m]$ and $\ell,\ell' \in [k]$.

    \STATE $min = \infty$

    \FOR{every $a \in A$}
        \STATE $sum = 0$

        \FOR{every $\ell \in [k]$}
            \STATE
            $\C_\ell := \left\lbrace \mkern-10mu\begin{array}{c!{\vline width 0.6pt}c}
            p^i_j & \begin{array}{c}
            i\in [n], j\in [m] \text{ s.t.}\\ \forall j'\in [m], \ell'\in [k]\\ \sign\left(\pol_{i,j,\ell,i,j',\ell'}(a)\right)=-1
            \end{array} \end{array} \mkern-17mu\right\rbrace$
            \\\COMMENT{The points of cluster number $\ell$ defined by the sign sequence of the cell representative $a \in A$.}

            \STATE $(\hat{c}_\ell, cost_\ell) := \oneMean(\C_\ell,w')$.
            \\\COMMENT{Compute the optimal center ($k=1$) $\hat{c}_\ell$ of the set $\C_\ell \subseteq \REAL^d$ ($m=1$) and its cost $\cost_\ell$, for a given cost function.}

            \STATE $sum = sum + cost_\ell$
        \ENDFOR
        \IF{$sum < min$}
            \STATE $min = sum$
            \STATE $\hat{C} = \br{\hat{c}_1,\cdots,\hat{c}_k}$
        \ENDIF
    \ENDFOR

    \STATE \bfseries{Return} $\hat{C}$
\end{algorithmic}
\end{algorithm}

\setcounter{subsection}{0}
\renewcommand{\thesection}{\Alph{section}}

\section{Robust Median} \label{sec:suppMed}

\subsection{Proof of Lemma~\ref{theorem:robustMed}}

\textbf{Algorithm~\ref{alg:robustMed} overview: }The algorithm relies on the 2 following observations:
(i) To compute a robust approximation of the entire data, it suffices to compute a robust approximation of a randomly sampled subset of this data of sufficient size; see Line~\ref{line:sampleS} of Algorithm~\ref{alg:robustMed} and Lemma~\ref{lem:robustMedViaCoreset},
(ii) If $b$ is a robust approximation of some input set of elements, then by the (weak) triangle inequality for singletons, one of those elements is a constant factor approximation for $b$; see Line~\ref{line:compq} of Algorithm~\ref{alg:robustMed}.

\setcounter{theorem}{0}
\begin{lemma}
\label{lem:robustMedViaCoreset}
Let $\set{P}$ be an $(n,m)$-set, $k \geq 1$, $\delta, \gamma\in (0,1)$, and $\tau \in (0,1/10)$. Pick uniformly, i.i.d, a (multi)-set $\set{S}$ of
\[
|\set{S}| = \frac{c}{\tau^4 \gamma^2} \left( md^2 + \log{\left( \frac{1}{\delta}\right)} \right)
\]
elements from $\set{P}$, where $c$ is a sufficiently large universal constant. Then with probability at least $1-\delta$, any $((1-\tau)\gamma, \tau, 2)$-median of $\set{S}$ is also a $(\gamma, 4\tau, 2)$-median of $\set{P}$.
\end{lemma}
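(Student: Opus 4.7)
The plan is a uniform-convergence (VC-type) argument adapted to the ``closest-$\alpha$-fraction'' cost functional. For any finite (multi)set $Q$ of $m$-sets and any $b\in\M$, write $\mathrm{cost}_\alpha(Q,b) := \sum_{P \in \closest(Q,\br{b},\alpha)} \Dt(P,b)$. In this notation the hypothesis unpacks as $\mathrm{cost}_{(1-\tau)^2\gamma}(\set{S},b) \leq 2\cdot\min_{b'\in\M}\mathrm{cost}_{(1-\tau)\gamma}(\set{S},b')$, and the goal unpacks as $\mathrm{cost}_{(1-4\tau)\gamma}(\set{P},b) \leq 2\cdot\min_{b'\in\M}\mathrm{cost}_{\gamma}(\set{P},b')$.

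First I would introduce the range space $\set{R}$ on the ground set $\set{P}$ whose ranges are $R_{b,r} := \br{P\in\set{P} \mid \Dt(P,b)\leq r}$, parametrized by $b\in\M$ and $r\geq 0$. Since $\Dt(P,b)$ is a minimum over $p\in P$ of a quantity monotone non-decreasing in the underlying metric distance from $p$ to $b$, membership in $R_{b,r}$ reduces to asking whether at least one of the $m$ points of $P$ lies in a single Euclidean ball around $b$ (whose radius is an explicit function of $r$). Applying Lemma~\ref{lem:VCBoundOrig} to this predicate (computable in $O(md)$ arithmetic operations over the $O(d)$-dimensional parameter $(b,r)$) gives an $O(md^2)$ bound on the VC-dimension of $\set{R}$, exactly matching the $md^2$ term in the sample-size formula. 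I would then invoke the Vapnik--Chervonenkis uniform-convergence bound with target accuracy $\tau^2\gamma/2$: with $|\set{S}| = (c/\tau^4\gamma^2)(md^2 + \log(1/\delta))$ i.i.d.\ draws from $\set{P}$, with probability at least $1-\delta$ the event $\set{E}$ holds that, simultaneously for every $b\in\M$ and every $r\geq 0$,
$$\left|\frac{|R_{b,r}\cap \set{S}|}{|\set{S}|} - \frac{|R_{b,r}|}{n}\right| \leq \frac{\tau^2\gamma}{2}.$$
The rest of the argument is conditioned on $\set{E}$.

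The core technical step is upgrading this cardinality control to a cost control. Writing $r_\alpha(Q,b)$ for the $\lceil\alpha|Q|\rceil$-th smallest value of $\Dt(\cdot,b)$ on $Q$, the layer-cake identity
$$\mathrm{cost}_\alpha(Q,b) = \int_0^{r_\alpha(Q,b)} \bigl(\lceil\alpha|Q|\rceil - |R_{b,t}\cap Q|\bigr)\,dt$$
expresses $\mathrm{cost}_\alpha$ purely in terms of the sublevel-set frequencies controlled by $\set{E}$. Consequently, for every $b\in\M$, the $\alpha$-quantile radius in $\set{S}$ coincides with the $\alpha'$-quantile radius in $\set{P}$ for some $\alpha'$ within an additive $\tau^2\gamma$ of $\alpha$; combining this with the algebraic inequalities $(1-\tau)^2\gamma - \tau^2\gamma \geq (1-4\tau)\gamma$ and $(1-\tau)\gamma + \tau^2\gamma \leq \gamma$, together with monotonicity of $\mathrm{cost}_\alpha(Q,b)$ in $\alpha$, yields the two sandwich bounds
\begin{align*}
\tfrac{1}{|\set{S}|}\mathrm{cost}_{(1-\tau)^2\gamma}(\set{S},b) &\geq \tfrac{1}{n}\mathrm{cost}_{(1-4\tau)\gamma}(\set{P},b),\\
\tfrac{1}{|\set{S}|}\mathrm{cost}_{(1-\tau)\gamma}(\set{S},b') &\leq \tfrac{1}{n}\mathrm{cost}_{\gamma}(\set{P},b') \text{ for every } b'\in\M.
\end{align*}
Chaining these two bounds with the sample-side hypothesis and clearing the common normalization gives $\mathrm{cost}_{(1-4\tau)\gamma}(\set{P},b) \leq 2\cdot\min_{b'\in\M}\mathrm{cost}_\gamma(\set{P},b')$, i.e., the $(\gamma,4\tau,2)$-median condition.

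The main obstacle will be the cardinality-to-cost translation: VC uniform convergence directly controls only set frequencies, not the weighted sums defining $\mathrm{cost}_\alpha$. I plan to handle this through the layer-cake representation above, applied at a finite grid of radii drawn from the $n$ realised distances in $\set{P}$, so that only $O(n)$ threshold events need to hold under $\set{E}$. Secondary care is needed for integer-ceiling subtleties at the quantile boundary and for verifying that the additive $\tau^2\gamma/2$ slack on both sides of the sandwich does not erode the final factor-$2$ cost ratio; the quartic dependence $1/\tau^4$ in the sample size is precisely what absorbs the combined slack while leaving the approximation factor at $2$.
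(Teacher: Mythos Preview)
Your proposal is correct and, at heart, follows the same strategy as the paper, though at very different levels of abstraction. The paper's own proof is a two-sentence black-box invocation: it observes that the dimension of the function space $\{\Dt(\cdot,b):b\in\M\}$ equals the query-space dimension $d'=O(md^2)$ (via Lemma~\ref{lem:VCBound} with $k=1$) and then cites Theorem~9.6 of \cite{feldman2011unified}, which is precisely a robust-median-preservation-under-sampling statement for function spaces of bounded dimension. What you have written is essentially a from-scratch reconstruction of that cited theorem: the same VC bound on the sublevel-set range space, the same uniform-convergence sample complexity, and the same quantile-shifting mechanism that trades additive fraction slack for the gap between $(1-\tau)^2\gamma$, $(1-\tau)\gamma$, $(1-4\tau)\gamma$, and $\gamma$.

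Two minor remarks on your sketch. First, the ``finite grid of radii'' step is unnecessary: once you have VC uniform convergence for the range family $\{R_{b,r}\}$, the bound already holds simultaneously for every threshold $r\geq 0$, so the layer-cake integrals compare directly without discretisation. Second, your choice of accuracy $\tau^2\gamma/2$ is more conservative than the sandwich actually requires---tracing through your own inequalities $(1-4\tau)\gamma+\epsilon\le(1-\tau)^2\gamma$ and $(1-\tau)\gamma+\epsilon\le\gamma$, any $\epsilon\le\tau\gamma$ suffices, which would yield a $1/(\tau^2\gamma^2)$ sample size rather than $1/(\tau^4\gamma^2)$. The quartic dependence in the stated lemma is inherited from the cited framework, not forced by the argument you outline; so your closing remark that ``$1/\tau^4$ is precisely what absorbs the combined slack'' overstates the necessity, though it certainly suffices.
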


\begin{proof}
For every $P \in \set{P}$ and $b \in \M_1$ define $f_{P}(b) = \Dt(P,b)$. Let $F = \br{f_P \mid P \in \set{P}}$ and $F_\set{S} = \br{f_P \mid P \in \set{S}}$.
Observe that by Definition 4.2 in~\cite{feldman2011unified}, the dimension of the function space $(F,X_1)$ is equivalent to the dimension $d'=md^2$ of the query space $(\Pset,X_1,\Dt)$.
Since $F_\set{S}$ is a random sample of $\frac{c}{\tau^4 \gamma^2} \left( d' +\log{\left( \frac{1}{\delta}\right)}\right) = \frac{c}{\tau^4 \gamma^2} \left( md^2 +\log{\left( \frac{1}{\delta}\right)}\right)  $ functions, sampled i.i.d from $F$, Lemma~\ref{lem:robustMedViaCoreset} now holds by Theorem 9.6 in~\cite{feldman2011unified} which states that a $((1-\tau)\gamma, \tau, 2)$-median of $F_\set{S}$ (which in our case is a $((1-\tau)\gamma, \tau, 2)$-median of $\set{S}$) is a $(\gamma, 4\tau, 2)$-median of $F$ (which in our case is a $(\gamma, 4\tau, 2)$-median of $\set{P}$).
\end{proof}

\setcounter{theorem}{0}

\renewcommand{\thesection}{\arabic{subsection}}
\setcounter{subsection}{5}
\setcounter{theorem}{0}

\robustMed*

\begin{proof}
Let $\gamma = 1/(2k)$ and $\tau = 1/24$.
For a sufficient constant $b$, the random sample $S$ in Line~\ref{line:sampleS} satisfies Lemma~\ref{lem:robustMedViaCoreset}. Therefore, \begin{equation} \label{eq:medSmedP}
\begin{split}
&\text{a }(23/(48k), 1/24, 2)\text{-median of }\set{S} \text{ is also a }\\&(1/(2k), 1/6, 2)\text{-median of }\set{P}.
\end{split}
\end{equation}

Let $q_\set{S}^*$ be the $(23/(48k),0,0)$-median of $\set{S}$, and let $q_\set{\set{S}}'$ be the closest point in $\set{S}$ to $q_\set{S}^*$, i.e.,
\[
q_\set{\set{S}}' \in \argmin_{q \in Q: Q \in \set{S}} \Dt(q_\set{\set{S}}^*,q).
\]
By the weak triangle inequality from Lemma~\ref{lem:weakTriangle}, we have that $\Dt(P,q_\set{\set{S}}') \leq 2\rho\Dt(P,q_\set{\set{S}}^*)$ for every $P \in \set{S}$, i.e., that $q_\set{\set{S}}'$ is a $2$-approximation for $q_\set{\set{S}}^*$.
This yields that $q_\set{S}'$ is a $(23/(48k),0,2)$-median of $\set{S}$, which is also a $(23/(48k),1/6,2)$-median of $\set{S}$. Hence, one of the points of $\set{S}$ is a $(23/(48k),1/6,2)$-median of $\set{S}$.
Therefore, the point $q$ computed at Line~\ref{line:compq} and returned in Line~\ref{line:retMed} is such a $(23/(48k),1/6,2)$-median of $\set{S}$, which by~\eqref{eq:medSmedP} is also a $(1/(2k), 1/6, 2)$-median of $\set{P}$.

The computation time of Algorithm~\ref{alg:robustMed} is dominated by Line~\ref{line:compq}, which can be implemented in $t|\set{S}|^2 = tb^2k^4\log^2\left(\frac{1}{\delta}\right)$ time by simply computing the pairwise distances between every two sets in $\set{S}$ and using order statistics.
\end{proof}

\setcounter{subsection}{1}
\renewcommand{\thesection}{\Alph{section}}

\section{Implemented Algorithms}\label{IA}

\textbf{$\exact(\set{P})$} is implemented by what we call \emph{sets Voronoi diagram}; see Fig.~\ref{fig:setsVoronoi}.

\begin{figure}
  \centering
  \includegraphics[width=0.35\textwidth]{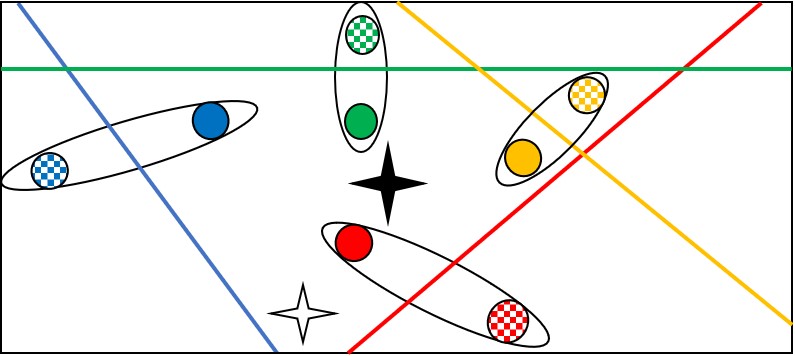}
  \caption{\textbf{$\exact$ via sets Voronoi diagram. }A set of $n=4$ pairs on the plane ($m=d=2$) and its sets Voronoi diagram which is computed as follows: (i) A set voronoi diagram is computed for each pair ($m$-set) to obtain a set of hyperplanes, (ii) an arrangement of those hyperplanes is then computed, which results in a partition of $\REAL^2$ into the cells which are presented above. Each cell corresponds to a selection of representatives, one from each pair. The sets-mean $c^*$ (solid star) is also the $1$-mean of the representative points shown in solid circles, which correspond to this Voronoi cell. Any other point (empty star) inside the same Voronoi cell as $c^*$ admits the same set of representatives. Therefore, to compute $c^*$, it suffices to exhaustive search over all the Voronoi.}
  \label{fig:setsVoronoi}
\end{figure}

\textbf{$\km(\set{P},k)$.} We focused on the sets-$k$-means case (see Section~\ref{sec:intro} and Table~\ref{table:distFuncs}), where the clustering algorithm we applied is a modified version of the the well know Lloyd algorithm~\cite{lloyd1982least} as follows.
The algorithm starts by an initial $k$ random centers $C \subseteq \br{p\in P \mid P \in \set{P}}$. It then assigns every $P \in \set{P}$ to its closest center $c_P = \argmin_{c\in C} \Dt(P,c)$.
Finally, it replaces every $c\in C$ with the sets-mean of the (possibly weighted) sets $\br{P \in \set{P} \mid c_P = c}$ in its cluster.
It repeats this process till convergence, but no more than $12$ iterations. The sets-mean is computed as follows.

\textbf{$\approxalg(\set{P},t)$. }As explained in Section~\ref{sec:intro}, computing the sets-mean $c^*$ is a non-trivial and time consuming task. However, at least $\abs{\set{P}}/2$ of the input sets $P \in \set{P}$ satisfy that $\Dt(P,c^*) \leq \frac{2\sum_{Q \in \set{P} \Dt(Q,c^*)}}{n}$. By the triangle inequality for singletons (Lemma~\ref{lem:weakTriangle}), it follows immediately that the closest point $p\in P$ to $c^*$ is a $3$-approximation for $c^*$. Therefore, with probability at least $1/2$, one of the points of a randomly sampled input set is a good approximation. We can amplify this probability by sampling $t\geq 1$ such sets.

%\textbf{(iv): }$\approxalg(\set{P})$: a simple constant approximation to the sets-mean of $\set{P}$ that takes $O(n)$ time, as described in Observation~\ref{??}.

\textbf{Handling sets of different sizes. }
For example in dataset (ii), each newspaper $P_i$ consists of different number of paragraphs and hence is represented by a different number $|P_i|$ of vectors. Let $z$ denote the maximal such set size. To compute a coreset for such dataset $\set{P}$, we first partition $\set{P}$ into $z$ sets $\set{P} = \set{P}_1 \cup \cdots \cup \set{P}_z$ where $\set{P}_i$ contains all the sets $P\in \set{P}$ of size $|P| = i$. Then, for every $i \in [z]$, we plug $\set{P}^0 = \set{P}_i$ at Lines~\ref{line:P0}--~\ref{line:endSens} of Algorithm~\ref{alg:wrapper} to compute $s(P)$ for every $P\in \set{P}_i$. In other words, we compute the sensitivity bound for each set on its own. We compute the total sensitivity $t_i := \sum_{P \in \set{P}_i}s_i(P)$ of each set $\set{P}_i$ and $t:=\sum_{i\in [m]} t_i$ to be their total. We then simply perform Lines~\ref{line:randomSample}--~\ref{line:ret}.

%%%%%%%%%%%%%%%%%%%%%%%%%%%%%%%%%%%%%%%%%%%%%%%%%%%%%%%%%%%%%%%%%%%%%%%%%%%%%%%
%%%%%%%%%%%%%%%%%%%%%%%%%%%%%%%%%%%%%%%%%%%%%%%%%%%%%%%%%%%%%%%%%%%%%%%%%%%%%%%

\end{document}